\documentclass{article}

\usepackage{microtype}
\usepackage{graphicx}
\usepackage{subfigure}
\usepackage{booktabs} 

\usepackage{hyperref}



\usepackage[accepted]{icml2026}


\usepackage{amsmath,amsfonts,bm}









\def\eqref#1{equation~\ref{#1}}









\def\1{\bm{1}}










\DeclareMathAlphabet{\mathsfit}{\encodingdefault}{\sfdefault}{m}{sl}
\SetMathAlphabet{\mathsfit}{bold}{\encodingdefault}{\sfdefault}{bx}{n}











\newcommand{\softmax}{\mathrm{softmax}}



\usepackage{url}
\usepackage{graphicx}
\usepackage{booktabs}
\usepackage{multirow}
\usepackage{amsthm}
\usepackage{amssymb}
\usepackage{tcolorbox}
\usepackage{enumitem}
\usepackage[table]{xcolor}
\usepackage[ruled,vlined]{algorithm2e}
\usepackage{siunitx}
\usepackage{colortbl}
\usepackage{titletoc}
\usepackage{array}

\newcommand{\inc}[1]{\textcolor{green!60!black}{\scriptsize(+#1)}}
\newcommand{\dec}[1]{\textcolor{red!70!black}{\scriptsize(-#1)}}
\newcolumntype{B}{>{}c}

\newtheorem{theorem}{Theorem}[section]
\newtheorem{lemma}[theorem]{Lemma}

\newtheorem{proposition}[theorem]{Proposition}

\icmltitlerunning{AlignVid: Taming Visual Dominance via Training-Free Attention Modulation in Text-guided Image-to-Video Generation}

\begin{document}

\twocolumn[
\icmltitle{AlignVid: Taming Visual Dominance via Training-Free Attention Modulation in Text-guided Image-to-Video Generation
}



\begin{icmlauthorlist}
\icmlauthor{Yexin Liu}{hkust}
\icmlauthor{Wenjie Shu}{ZODA}
\icmlauthor{Zile Huang}{ucf}
\icmlauthor{Haoze Zheng}{hkust}
\icmlauthor{Jingjin Zhu}{hkust-gz}\\
\icmlauthor{Yueze Wang}{baai}
\icmlauthor{Manyuan Zhang}{cuhk}
\icmlauthor{Sernam Lim}{ucf}
\icmlauthor{Harry Yang}{hkust}
\end{icmlauthorlist}

\icmlaffiliation{hkust}{Hong Kong University of Science and Technology, Hong Kong SAR, China}
\icmlaffiliation{ZODA}{ZODA, HangZhou, China}
\icmlaffiliation{ucf}{University of Central Florida, Orlando, FL, USA}
\icmlaffiliation{hkust-gz}{Hong Kong University of Science and Technology (Guangzhou), Guangzhou, China}
\icmlaffiliation{baai}{Beijing Academy of Artificial Intelligence, Beijing, China}
\icmlaffiliation{cuhk}{The Chinese University of Hong Kong, Hong Kong SAR, China}

\icmlcorrespondingauthor{Harry Yang}{harryyang.hk@gmail.com}


\icmlkeywords{Machine Learning, ICML}

\vskip 0.3in
]

\printAffiliationsAndNotice{}  

\begin{abstract}

Text-guided image-to-video generation has made substantial progress, yet it still struggles to execute text-specified edits that require substantial changes to a reference image (\textit{e.g., object addition, removal, or modification}). Empirically, our analysis reveals that this stems from \textbf{visual dominance}, where the reference image causes severe attention dispersion, inhibiting the model's ability to incorporate new semantic information. To address this, we propose \textbf{AlignVid}, a training-free intervention that re-calibrates the model's internal attention distribution. Drawing on an energy-based perspective of attention, AlignVid employs Attention Scaling Modulation (\textbf{ASM}) to reduce attention entropy and concentrate focus on semantic tokens, alongside Guidance Scheduling (\textbf{GS}) to maintain generation stability. To rigorously assess this capability, we present \textbf{OmitI2V}, a comprehensive benchmark for evaluating prompt adherence across object modification, addition, and deletion. Extensive experiments demonstrate that AlignVid effectively enhances semantic fidelity with negligible computational overhead. Code and the OmitI2V benchmark are available at \url{https://github.com/LAW1223/AlignVid}.

\end{abstract}

\section{Introduction}

\begin{figure*}[t] 
    \centering
    \includegraphics[width=0.95\linewidth]{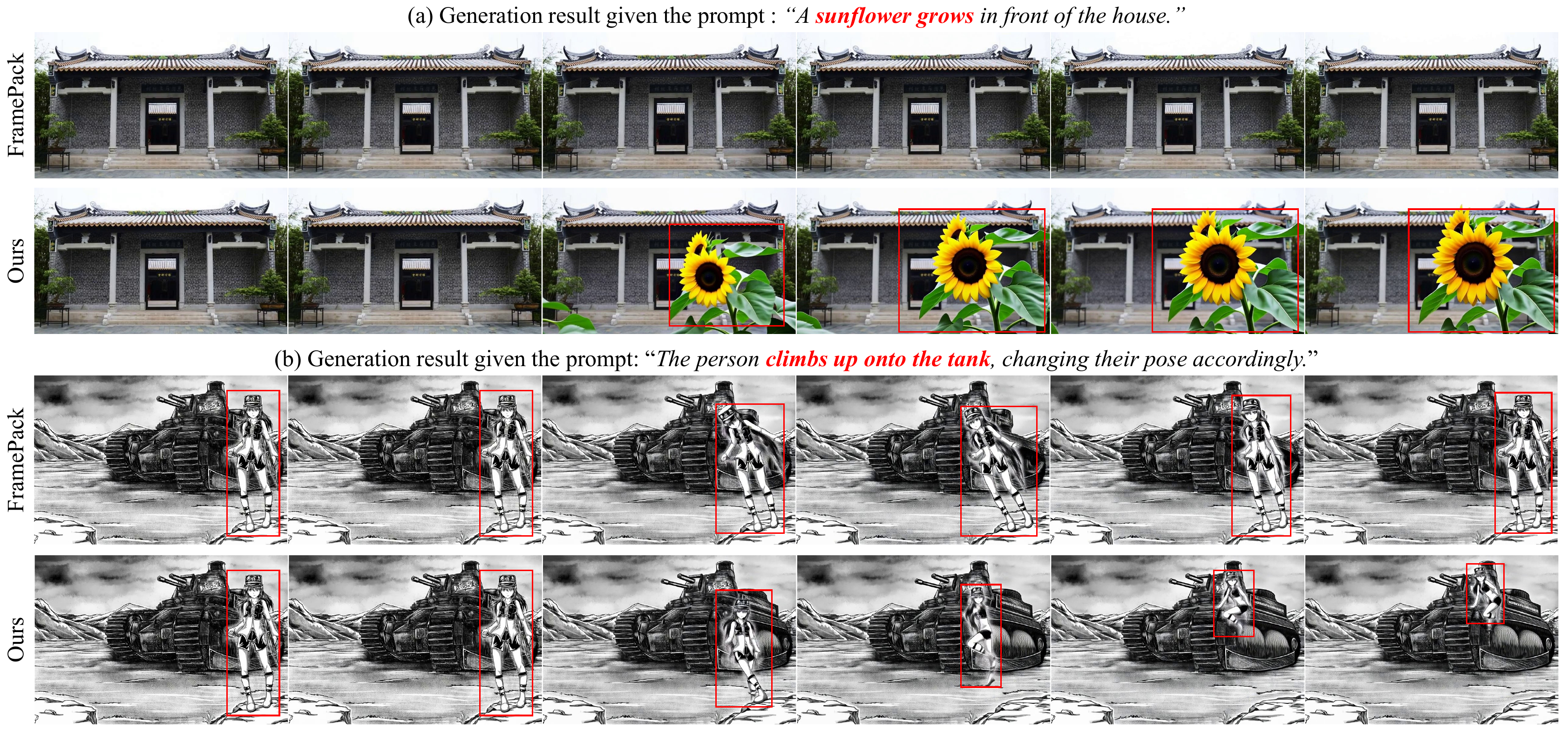} 
    \caption{The baseline model (FramePack) exhibits \textit{semantic negligence}, failing to realize the prompt-specified modifications. In (a), the sunflower mentioned in the prompt is entirely missing. In (b), the person remains static instead of climbing onto the tank as instructed.} 
    \vspace{-15pt}
    \label{fig:cover_fig}
\end{figure*}

Image-to-video (I2V) generation aims to generate a temporally coherent video from an image. Early I2V methods predominantly focused on motion extrapolation~\citep{svd2023,2023videocomposer,2024dynamicrafter,2023seine,2023I2Vgen,2024make}. Text-guided image-to-video (TI2V) extends this setting by conditioning the generative process on text prompts alongside the reference image, enabling fine-grained control over motion semantics and temporal dynamics~\citep{kong2024hunyuanvideo,wan2025wan,chen2025skyreels,zhang2025packing,xu2024easyanimate}. 
However, current TI2V methods still fail to adhere to fine-grained prompt semantics, particularly when prompts prescribe substantial transformations of the reference image (e.g., adding, deleting, or modifying objects). As illustrated in Figure~\ref{fig:cover_fig}, given the prompt \textit{``A sunflower grows in front of the house''}, the generated video preserves the image without inserting the sunflower, indicating a misalignment between the prompt and the generated video.

To investigate this, we conduct a pilot study which reveals a counter-intuitive finding: introducing a slight Gaussian blur to the input image improves both semantic fidelity and motion degree (Figure~\ref{fig: pilot_fig}). 
Specifically, under the standard unblurred setting, the reference image induces strong \textbf{visual dominance} over the model’s attention mechanism—this suppresses the text modality and causes severe attention dispersion (characterized by high entropy) in the video modality, a critical imbalance that directly impairs the model’s ability to incorporate new semantic information and inhibits motion generation. In contrast, blurring the reference image alleviates such visual dominance, which in turn leads to a relative rise in text attention weights and a reduction in video attention entropy (Figure~\ref{fig: pilot_fig} (c)). This attentional recalibration restores the competitive priority of the text prompt, enabling it to effectively guide the video’s semantic transformation as specified by the edit instructions. However, as input-level degradation inevitably compromises aesthetic quality, we pose a research question: \textit{Can we directly regulate the model's internal attention distribution—mimicking this entropy-reduction effect without altering the input—to enhance semantic alignment while keeping its impact on visual quality (e.g., aesthetic) controllable?}

To this end, we revisit pretrained I2V models’ attention mechanism via an energy-based lens, enabling precise attention entropy modulation to alleviate the identified visual dominance. Prior work~\citep{hong2024smoothed} establishes attention as a gradient step minimizing an implicit energy function, laying a theoretical foundation for reshaping attention distributions. Complementing this, our observation shows pretrained models inherently achieve coarse foreground–background separation—distinguishing text-targeted semantic regions from background visual priors (the root of visual dominance).
Building on this, we propose \textbf{AlignVid}, a training-free method for improving semantic fidelity through minimal intervention. Specifically, AlignVid comprises two components: (i) \emph{Attention Scaling Modulation (ASM)}, which rescales query or key representations to \textbf{sharpen the energy landscape}. This recalibration yields a more concentrated, lower-entropy attention distribution that amplifies the prominence of text tokens over visual priors. (ii) \emph{Guidance Scheduling (GS)}, which activates ASM selectively across transformer blocks and denoising steps to stabilize generation and mitigate visual-quality degradation. Unlike input-level perturbations such as blurring, which \emph{visibly corrupt} the reference image, AlignVid performs this attention reallocation entirely within the model, thereby providing a tunable semantic--quality trade-off without input-level corruption. To evaluate semantic fidelity, we introduce \textbf{OmitI2V} benchmark. It comprises 367 human-annotated samples across modification, addition, and deletion.

Our main contributions can be summarized as: 
(i) \textbf{Problem Analysis.} We formalize \emph{semantic fidelity challenge} in TI2V and empirically link it to \textit{visual dominance}. Through a pilot study, we demonstrate that attention concentration (lower entropy) is essential for liberating textual signals from visual redundancy.
(ii) \textbf{Method-AlignVid.} We propose a training-free framework that modulates attention via \textit{ASM} and \textit{GS}. By \textbf{re-weighting the modal hierarchy} within the attention mechanism, AlignVid improves semantic alignment with negligible computational overhead and minimal aesthetic impact.
(iii) \textbf{Benchmark-OmitI2V.} We curate a benchmark with 367 human-annotated cases spanning modification, addition, and deletion scenarios.

\section{Related Works}

\noindent\textbf{Image-to-Video Models.}
I2V generation models can be broadly classified into GAN-based, Stable Diffusion-based, and DiT-based paradigms.
GAN-based methods~\citep{tulyakov2017mocogandecomposingmotioncontent, Skorokhodov2021StyleGANVAC} often suffer from inherent challenges in modeling long-term dependencies and high-frequency details.
Stable Diffusion-based models leverage UNet architectures. VideoComposer~\citep{2023videocomposer} first integrates image conditioning into 3D-UNet by concatenating clean image latents with noisy video latents. Building on this, SVD~\citep{svd2023} and DynamiCrafter~\citep{2024dynamicrafter} inject CLIP~\citep{2021clip} features from reference images into the denoising process to enhance guidance. Further works explore cascading diffusion framework~\citep{2023I2Vgen} and leverage first and last frames to improve temporal coherence~\citep{2023seine,2024make}.
DiT-based methods~\citep{sora2024, yang2024cogvideox, polyak2024movie, 2024latte, kong2024hunyuanvideo, wan2025wan} replace U-Net with Transformers by partitioning latent space frame patches into tokens for unified modeling of long-range dependencies. Recent advances~\citep{chen2025skyreels,zhang2025packing,xu2024easyanimate,kong2024hunyuanvideo,wan2025wan} employ multimodal fusion to align visual and text inputs. 
\noindent \textbf{Image-to-Video Generation Benchmarks.}
Existing benchmarks for I2V generation primarily focused on evaluating the quality and consistency of the generated videos. VBench~\citep{Huang_2024_CVPR} introduces comprehensive suites for assessing models. In contrast, AIGCBench~\citep{FAN2023100152} and EvalCrafter~\citep{Liu_2024_CVPR} focus on aspects such as text-video alignment and aesthetic quality.
Other works have targeted more specific attributes of I2V generation. For instance, temporal compositionality~\citep{feng2024tc}, visual consistency~\citep{wang2025love} and precise motion control~\citep{ren2024consistI2V, zhang2025motionproprecisemotioncontroller}.
While existing benchmarks assess overall video quality and alignment, they do not capture \emph{semantic negligence}, i.e., failures to follow instructions for modification or addition.

\section{Pilot Observation}

\begin{figure*}[t] 
    \centering
    \includegraphics[width=0.97\textwidth]{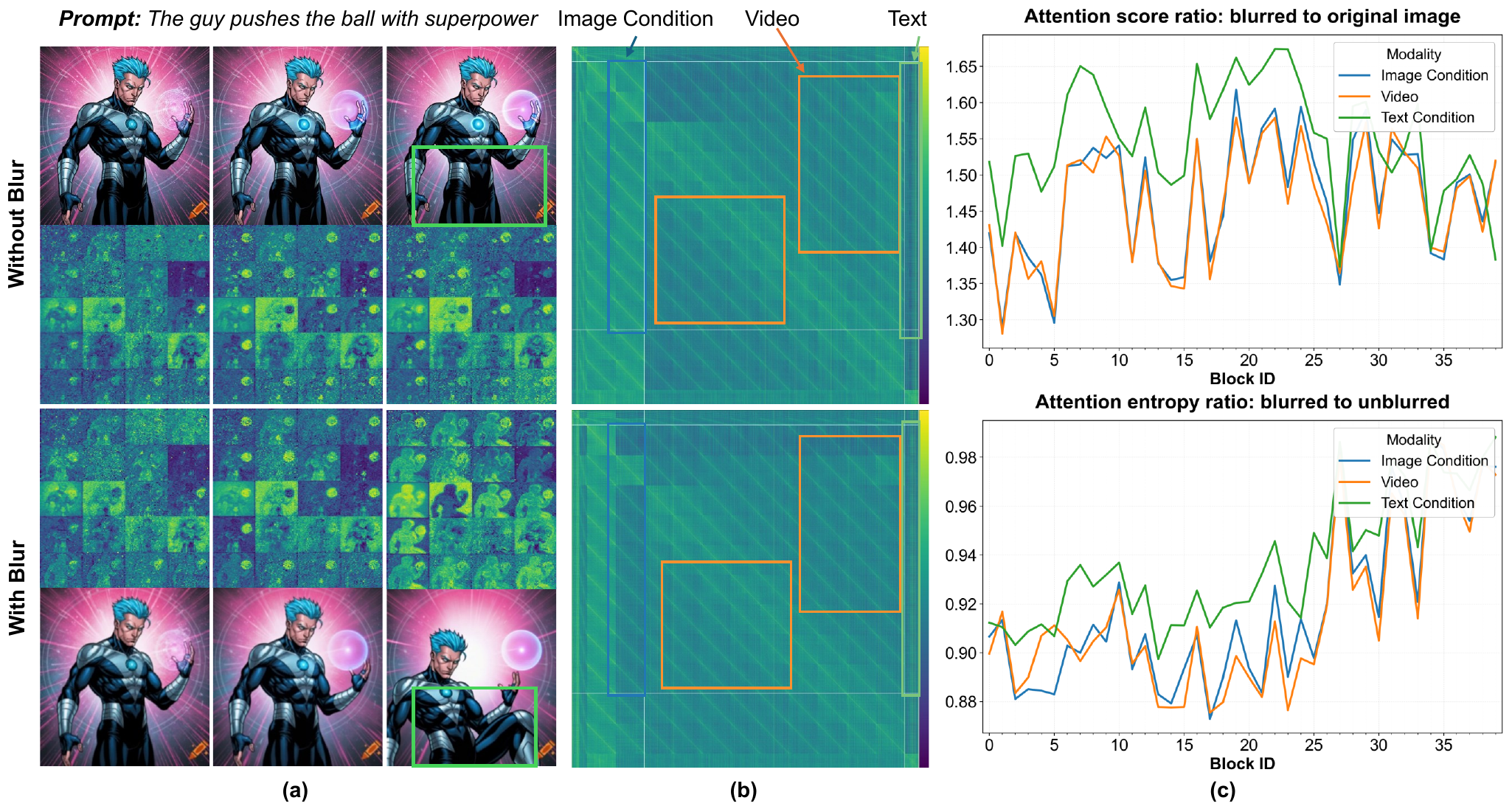}
    \caption{{ \textbf{Pilot example.} 
    \textbf{(a)} Videos and attention maps generated from the original input image (top) and from the same image after applying a Gaussian blur (bottom). 
    \textbf{(b)} \textbf{Attention Visualization:} In the original setting, the model exhibits \textit{visual dominance}, where excessive focus on the reference image suppresses textual constraints and temporal dynamics. Applying blur weakens this dominance, shifting attention toward text tokens and temporal neighbors.
    \textbf{(c) Quantitative Analysis:} Statistics across 30 samples reveal that applying blur leads to a universal increase in attention scores and a universal reduction in entropy across image, video, and text modalities.}}
    \label{fig: pilot_fig}
\end{figure*}

We investigate the semantic fidelity challenge using the \textbf{OmitI2V} benchmark (details in Section~\ref{OmitI2V-bench}). We summarize two empirical observations:

\noindent\textbf{Observation 1: Semantic negligence is prevalent in TI2V.}
As summarized in Table~\ref{tab:omitI2V-benchmark-main}, state-of-the-art methods often preserve the reference image semantics instead of implementing the requested changes, indicating a misalignment between the textual instruction and the generated video. 

\noindent\textbf{Observation 2: Image perturbations purify the attention landscape by filtering visual redundancy.} 
Our quantitative study (Figure~\ref{fig: pilot_fig} (c)) reveals that Gaussian blur reshapes the cross-attention distribution. 
Crucially, we observe a \textbf{universal sharpening of attention focus}: after blurring, the \textbf{intensity of attention toward core tokens} in the image, video, and text modalities all increase, while their respective entropies consistently decrease. 
This suggests that by erasing low-level visual details, the model is no longer distracted by redundant visual tokens, leading to a more decisive matching process. Two key nuances further clarify the mechanism of \textit{visual dominance}: (i) \textbf{Textual Liberation:} While the focus sharpens across all modalities, the attention weight assigned to \textit{text tokens} shows the most substantial increase. This indicates that the high-fidelity image originally exerted a suppressive effect on textual constraints; once this dominance is weakened, the model becomes significantly more responsive to the prompt. (ii) \textbf{Structural Sharpness:} The entropy reduction is most pronounced in the image and video modalities, whereas text entropy remains relatively stable. This implies that text tokens, which are inherently semantically dense, were previously overshadowed by the high-entropy noise of the image. 

\section{Attention Analysis}

We adopt a single attention head and studies how scaling the logits of different key groups
(text, image, video) affects the attention distribution.
At denoising step $t$, we write
$Q_t \in \mathbb{R}^{n\times d},\quad
K_t \in \mathbb{R}^{m\times d},\quad
V_t \in \mathbb{R}^{m\times d_v}$
and define
{\setlength\abovedisplayskip{4pt}
\setlength\belowdisplayskip{2pt}
{
\footnotesize
\begin{equation}
Z_t=\tfrac{1}{\sqrt d}\, Q_t K_t^\top,\qquad
\mathrm{Attn}(Q_t,K_t,V_t)=\sigma(Z_t)\,V_t,
\end{equation}
}}
{where $\sigma(\cdot)$ denotes the row-wise softmax. Video queries can attend to keys from text, image, and video tokens.
We denote a disjoint partition of key indices by
{\setlength\abovedisplayskip{4pt}
\setlength\belowdisplayskip{2pt}
\begin{equation}
\footnotesize
\mathcal{I}_{\text{text}},\ \mathcal{I}_{\text{img}},\ \mathcal{I}_{\text{vid}}
\subseteq\{1,\dots,m\},
\end{equation}}
and write $K_t=[K_t^{\text{text}};K_t^{\text{img}};K_t^{\text{vid}}]$
(up to permutation), which covers both standard DiT and MMDiT
architectures. In TI2V, the three groups play different roles:
text tokens encode the desired edit, image tokens encode the input
frame prior, and video tokens enforce temporal smoothness.}

\noindent\textbf{Energy view and entropy.}
For the $i$-th query, let $z^{(i)}\in\mathbb{R}^m$ be the
corresponding logits. The log-partition and attention distribution are
{\setlength\abovedisplayskip{4pt}
\setlength\belowdisplayskip{2pt}
\begin{equation}
\footnotesize
\Phi(z^{(i)})=\log\!\sum_{j} e^{z^{(i)}_j},\qquad
p^{(i)}=\nabla_{z^{(i)}}\Phi=\sigma(z^{(i)}),
\end{equation}}
with Hessian
$\nabla^2_{z^{(i)}}\Phi
= \mathrm{Diag}(p^{(i)}) - p^{(i)}{p^{(i)}}^\top \succeq 0$
which characterizes the sensitivity of attention probabilities to logit
perturbations.
To quantify uncertainty within a subset of keys
$S\subseteq\{1,\dots,m\}$, we define the restricted softmax and its
entropy under inverse temperature $\alpha>0$ as
{\setlength\abovedisplayskip{4pt}
\setlength\belowdisplayskip{2pt}
{
\scriptsize
\begin{equation}
p^{(i)}_{S,j}(\alpha)
= \frac{e^{\alpha z^{(i)}_j}}{\sum_{k\in S}e^{\alpha z^{(i)}_k}},
\,
H_{i,S}(\alpha)
= -\sum_{j\in S} p^{(i)}_{S,j}(\alpha)\log p^{(i)}_{S,j}(\alpha).
\end{equation}
}}

\subsection{Temperature View of Q/K Scaling}

\begin{lemma}[Q/K scaling as temperature control]
\label{lem:temp}
Consider scaling the query or key embeddings by a positive scalar
$\gamma_t>0$. Replacing $Q_t$ by $\gamma_t Q_t$ (or $K_t$ by
$\gamma_t K_t$) yields
{\setlength\abovedisplayskip{4pt}
\setlength\belowdisplayskip{2pt}
\begin{equation}
\footnotesize
Z'_t
=\tfrac{1}{\sqrt d}\,Q'_tK_t^\top
=\gamma_t Z_t
\quad\text{(or }Z'_t=\tfrac{1}{\sqrt d}\,Q_t{K'_t}^\top
=\gamma_t Z_t\text{)},
\end{equation}}
so each row of the attention uses a softmax with temperature
$\alpha_t=\gamma_t$, i.e.\ $p^{(i)}(\alpha_t)=\sigma(\alpha_t z^{(i)})$.
\end{lemma}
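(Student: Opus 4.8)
The plan is to treat the statement as a direct consequence of the bilinearity of the logit map together with the row-wise action of the softmax, so the argument reduces to two short computations and one identification.

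First I would establish the matrix identity $Z'_t = \gamma_t Z_t$ in both cases. For query scaling, since $\gamma_t$ is a global positive scalar it factors out of the product: $Q'_t K_t^\top = (\gamma_t Q_t)K_t^\top = \gamma_t\,Q_t K_t^\top$, and dividing by $\sqrt d$ preserves the scalar, giving $Z'_t = \tfrac{1}{\sqrt d}\gamma_t Q_t K_t^\top = \gamma_t Z_t$. For key scaling the only extra point is that the transpose of a scalar multiple is the same scalar multiple of the transpose, $(\gamma_t K_t)^\top = \gamma_t K_t^\top$, so $Q_t(K'_t)^\top = \gamma_t Q_t K_t^\top$ and again $Z'_t = \gamma_t Z_t$. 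This is exactly the displayed equation in the statement.

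Next I would pass from the matrix identity to the per-row distribution. Since the softmax $\sigma$ acts independently on each row, reading off row $i$ gives logits $\gamma_t z^{(i)}$, so the $i$-th attention row is $\sigma(\gamma_t z^{(i)})$. To match the temperature notation, I would specialize the restricted-softmax definition from the preliminaries to the full key set $S=\{1,\dots,m\}$, under which $p^{(i)}(\alpha) = \sigma(\alpha z^{(i)})$; setting $\alpha_t = \gamma_t$ then yields $p^{(i)}(\alpha_t) = \sigma(\alpha_t z^{(i)})$, which is the claimed temperature-$\gamma_t$ softmax.

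There is no real obstacle here; the content is entirely the observation that a uniform positive rescaling of $Q$ or $K$ induces a uniform rescaling of every logit, which the softmax reads as an inverse temperature. The only bookkeeping worth flagging is that $\gamma_t$ must be a scalar (so it commutes through the matrix product and the transpose) and positive (so that $\alpha_t>0$ is a genuine inverse temperature rather than a sign flip or degeneracy). I would close by remarking that $\gamma_t>1$ sharpens each row toward a lower-entropy, more concentrated distribution while $\gamma_t<1$ flattens it, which is the property later invoked by ASM.
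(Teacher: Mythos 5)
Your proof is correct and takes essentially the same approach as the paper's: factor the positive scalar out of the bilinear logit map $\tfrac{1}{\sqrt d}QK^\top$, then read off each row as a softmax at inverse temperature $\alpha_t=\gamma_t$. The only (immaterial) difference is that the paper's appendix proof handles the slightly more general simultaneous scaling $Q'_t=\gamma_t Q_t$, $K'_t=\eta_t K_t$, obtaining $\alpha_t=\gamma_t\eta_t$, and recovers the stated lemma as the special case $\eta_t=1$ (or $\gamma_t=1$).
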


{ In multi-modal attention, we are interested in scaling only conditioning
tokens. Let $S_{\text{cond}}=\mathcal{I}_{\text{text}}\cup\mathcal{I}_{\text{img}}$
denote the conditioning block, and keep
video keys unscaled. Conceptually, increasing the temperature on
$S_{\text{cond}}$ both increases the total attention mass allocated to
conditioning tokens relative to video self-attention and reshapes how
attention is distributed within the conditioning block.}

\subsection{Entropy and Semantic Fidelity}

\begin{lemma}[{ Within-block entropy monotonicity}]
\label{lem:within-block-entropy}
For any query $i$, subset $S$ of key, and $\alpha>0$,
{\setlength\abovedisplayskip{4pt}
\setlength\belowdisplayskip{2pt}
\begin{equation}
\footnotesize
\frac{\mathrm{d}}{\mathrm{d}\alpha} H_{i,S}(\alpha)
= -\,\alpha\,\mathrm{Var}_{p^{(i)}_{S}(\alpha)}[z^{(i)}_S]\;\le 0,
\end{equation}}
where the variance is taken with respect to $p^{(i)}_{S}(\alpha)$.
Thus increasing $\alpha$ monotonically reduces the entropy within $S$
unless the logits $\{z^{(i)}_j:j\in S\}$ are degenerate.
\end{lemma}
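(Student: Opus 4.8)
The plan is to recognize the within-block entropy as a Legendre-type transform of the tempered log-partition function and then read off the derivative from the standard cumulant identities. Fixing the query $i$ and the subset $S$, I abbreviate $z_j := z^{(i)}_j$ for $j\in S$ and introduce the tempered log-partition function $\Phi_S(\alpha)=\log\sum_{k\in S}e^{\alpha z_k}$, so that $\log p^{(i)}_{S,j}(\alpha)=\alpha z_j-\Phi_S(\alpha)$. Substituting this into the definition of $H_{i,S}(\alpha)$ and using $\sum_{j\in S}p^{(i)}_{S,j}(\alpha)=1$ collapses the $p\log p$ sum into the compact form $H_{i,S}(\alpha)=\Phi_S(\alpha)-\alpha\,\mathbb{E}_{p^{(i)}_S(\alpha)}[z^{(i)}_S]$. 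This rewriting is the crux of the argument: it replaces an expression with many entangled terms by a difference of two smooth functions of $\alpha$ whose derivatives are easy to control.

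Next I would invoke the cumulant-generating-function identities for $\Phi_S$. Differentiating once gives $\Phi_S'(\alpha)=\mathbb{E}_{p^{(i)}_S(\alpha)}[z^{(i)}_S]$, the mean of the logits under the tempered distribution, and differentiating again gives $\Phi_S''(\alpha)=\mathrm{Var}_{p^{(i)}_S(\alpha)}[z^{(i)}_S]$. Both follow from direct differentiation of the ratio $e^{\alpha z_j}/\sum_k e^{\alpha z_k}$, which yields $\frac{d}{d\alpha}p^{(i)}_{S,j}(\alpha)=p^{(i)}_{S,j}(\alpha)\big(z_j-\mathbb{E}_{p^{(i)}_S(\alpha)}[z^{(i)}_S]\big)$; this is exactly the scalar analogue of the Hessian $\mathrm{Diag}(p)-pp^\top$ already recorded in the preliminaries, and it guarantees $\Phi_S''(\alpha)\ge 0$.

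Finally I would differentiate the compact form. Since $H_{i,S}(\alpha)=\Phi_S(\alpha)-\alpha\,\Phi_S'(\alpha)$, the product rule gives $H_{i,S}'(\alpha)=\Phi_S'(\alpha)-\Phi_S'(\alpha)-\alpha\,\Phi_S''(\alpha)=-\alpha\,\Phi_S''(\alpha)=-\alpha\,\mathrm{Var}_{p^{(i)}_S(\alpha)}[z^{(i)}_S]$, which is the claimed identity. Non-positivity is then immediate: $\alpha>0$ by hypothesis and a variance is always $\ge 0$, with equality precisely when $z^{(i)}_j$ is constant across $j\in S$ (the degenerate case), so $H_{i,S}$ is strictly decreasing in $\alpha$ otherwise.

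There is essentially no hard step here, since the statement is a one-line consequence of the convexity of the log-partition function; the main thing to get right is the bookkeeping. I would be careful to check that the stray term $\Phi_S(\alpha)\sum_{j\in S}(p^{(i)}_{S,j})'(\alpha)$ vanishes (using that the probabilities sum to one, hence their derivatives sum to zero), and to note that the variance is taken under the tempered distribution $p^{(i)}_S(\alpha)$ rather than the base softmax. If one preferred to avoid the log-partition shortcut, the alternative is to differentiate $-\sum_{j\in S}p\log p$ term by term; this is the same computation with more algebra, and I would still reduce it via $\frac{d}{d\alpha}p^{(i)}_{S,j}=p^{(i)}_{S,j}(z_j-\mathbb{E}[z])$ back to the variance.
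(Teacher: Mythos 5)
Your proof is correct and follows essentially the same route as the paper's: both rewrite the entropy as $H_{i,S}(\alpha)=\Phi_S(\alpha)-\alpha\,\mathbb{E}_{p^{(i)}_S(\alpha)}[z^{(i)}_S]$, use the softmax derivative identity $\tfrac{d}{d\alpha}p^{(i)}_{S,j}=p^{(i)}_{S,j}\bigl(z_j-\mathbb{E}[z]\bigr)$ to identify the derivative of the mean with the variance, and cancel terms to obtain $H_{i,S}'(\alpha)=-\alpha\,\mathrm{Var}_{p^{(i)}_S(\alpha)}[z^{(i)}_S]\le 0$. Your cumulant-generating-function framing ($\Phi_S'=\mathbb{E}[z]$, $\Phi_S''=\mathrm{Var}[z]$) is just a cleaner packaging of the identical computation, including the same degeneracy condition for equality.
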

{Taking $S = S_{\text{cond}}$ shows that increasing
$\alpha_t^{\text{cond}}$ yields a more concentrated attention
distribution over conditioning tokens for each video query,
i.e., it reduces the uncertainty about which conditioning tokens the
query attends to, while leaving video self-attention unchanged.}

\noindent \textbf{TI2V semantic fidelity.}
Mathematically, entropy reduction is the direct
consequence of increasing the inverse temperature. From a
signal-level viewpoint, the same temperature scaling acts as a
\emph{semantic sharpening} operation on the softmax: as $\alpha$
increases, probability mass is reallocated from low-logit tokens to a
few high-logit tokens that carry stronger semantic evidence. In our TI2V setting, semantic
negligence manifests as a signal imbalance, where attention
overemphasizes the image prior and underweights edit-related text and
temporal cues, leading the model to preserve the input frame instead
of realizing the requested edit. Softmax sharpening (theoretically linked to lower attention entropy) resolves this conflict by scaling relevant token block logits. This forces video queries to shift focus from dominant image cues to amplified text signals, directly boosting semantic adherence.

\noindent \textbf{Curvature and over-concentration.}
For completeness, consider scaling all logits of a query by a common
factor $\alpha>0$ and define
$\Phi_i(\alpha)=\Phi(\alpha z^{(i)})$ with Hessian
{\setlength\abovedisplayskip{4pt}
\setlength\belowdisplayskip{2pt}
{
\small
\begin{equation}
\begin{aligned}
\mathcal{H}_i(\alpha)
=& \nabla^2_{z^{(i)}}\Phi(\alpha z^{(i)}) \\
=&\; \alpha^2\!\Big(\mathrm{Diag}(p^{(i)}(\alpha))
- p^{(i)}(\alpha){p^{(i)}(\alpha)}^\top\Big).
\end{aligned}
\end{equation}
}}
where $p^{(i)}(\alpha)=\sigma(\alpha z^{(i)})$.
If $\Delta_i$ denotes the gap between the largest and second-largest
logits in $z^{(i)}$, one can show that for sufficiently large $\alpha$
the spectral norm $\|\mathcal{H}_i(\alpha)\|_{\mathrm{spec}}$
eventually decreases and converges to zero (proof in the Appendix). Intuitively, very large temperatures collapse attention onto
a single token and flatten the energy landscape along off-peak
directions.

{ \noindent \textbf{Design implications for TI2V.} Based on the above analysis, we summarize the design principles that
guide our method. (i) \textbf{Temperature as an attention gain knob.} Scaling $Q$ or $K$ is exactly inverse-temperature control and thus offers an explicit way to strengthen or weaken the influence of
selected token groups without modifying the inputs. (ii) \textbf{Entropy reduction as decisive semantic selection.}
Increasing the temperature on a token block reduces its internal
entropy and sharpens attention onto a small set of high-logit,
semantically relevant tokens.}

\section{Method}
Building on the above analysis, we propose \textbf{AlignVid}, a training-free approach for modulating attention distributions. AlignVid has two components: (i) \textbf{Attention Scaling Modulation (ASM)}, which sharpens prompt-relevant attention; and (ii) \textbf{Guidance Scheduling (GS)}, which selectively applies ASM across blocks and denoising steps to preserve visual fidelity while improving semantic adherence. The method adds negligible overhead. The pseudocode is provided in Algorithm~\ref{alg:scalar-scaling} and Algorithm~\ref{alg:energy-modulation} (\textit{Appendix}).

\subsection{Attention Scaling Modulation}
\label{Attention Scaling Modulation}

A straightforward way to sharpen attention is to inject external masks. However, this has three drawbacks: (i) masks are static and misaligned with the evolving denoising dynamics; (ii) in open-vocabulary settings, defining reliable masks (e.g., for unseen objects) is brittle; and (iii) maintaining and applying masks adds inference overhead. To overcome these limitations, we introduce \textbf{Attention Scaling Modulation (ASM)}, which directly modifies the attention computation by scaling the \textbf{query} or \textbf{key} embeddings within attention layers. The choice of modulation target depends on the model architecture: in \textbf{MMDiT}, where conditioning is injected via query tokens, we modulate $Q$; in \textbf{DiT-style cross-attention}, we modulate $K$.  
Formally, let $Q\in\mathbb{R}^{n_q\times d_k}$, $K\in\mathbb{R}^{n_k\times d_k}$, and $V\in\mathbb{R}^{n_k\times d_v}$. ASM modifies attention by scaling the query or key embeddings before the attention:
{\setlength\abovedisplayskip{4pt}
\setlength\belowdisplayskip{2pt}
\begin{equation}
\footnotesize
\text{Attention}_{\text{ASM}}(Q, K, V) = 
\softmax\!\left(\tfrac{Q^\prime (K^\prime)^{T}}{\sqrt{d_k}}\right)V ,
\end{equation}}
where $Q'$ and $K'$ are the modulated embeddings. By \textit{Lemma~4.1}, such scaling is equivalent to reparameterizing the row-wise softmax via its inverse temperature $\alpha$.

\noindent\textbf{(S1) Scalar scaling.}  
Apply a multiplicative scalar $\gamma_s > 1$ to either $Q$ or $K$:
{\setlength\abovedisplayskip{4pt}
\setlength\belowdisplayskip{2pt}
\begin{equation}
Q^\prime = \gamma_s Q \quad \text{or} \quad K^\prime = \gamma_s K.
\end{equation}}
This sharpens the attention by amplifying the contrast between relevant and irrelevant regions.  

\noindent\textbf{(S2). Energy-based scaling.}  
Inspired by the energy interpretation of attention, we adaptively set the scaling coefficient according to the sharpness of the logits:
{\setlength\abovedisplayskip{4pt}
\setlength\belowdisplayskip{2pt}
\begin{equation}
\footnotesize
\gamma_e = f\!\left(\tfrac{1}{n_q n_k} \sum_{i,j} \frac{Q_i K_j^\top}{\sqrt{d_k}} \right),
\label{energy-formula}
\end{equation}}
where $f(\cdot)$ is a monotonic function (e.g., sigmoid rescaling) and $n_q,n_k$ denote query/key counts. This encourages stronger modulation when attention logits are diffuse.

\subsection{Guidance Scheduling}
While ASM enhances semantic consistency, applying it indiscriminately across all blocks and steps downgrade perceptual quality. We therefore introduce \textbf{Guidance Scheduling (GS)}, which gates ASM at the block and step level.

\noindent\textbf{Block-level Guidance Scheduling (BGS).}
We observe that different transformer blocks contribute unequally: some focus more on foreground semantics, while others capture background context. We selectively apply attention modulation only to \textit{foreground-sensitive} blocks.  To identify \emph{foreground-sensitive} blocks, we perform a lightweight calibration: collect attention maps on a small validation set, project them via PCA to capture dominant directions, and use an off-the-shelf grounding model to separate foreground from background. For each block $l$, we compute its \emph{foreground ratio} $r^{(l)}$, the average fraction of attention mass allocated to foreground tokens. Blocks with $r^{(l)} > \tau$ (0.5) are deemed foreground-sensitive.  
We assign each block a scaling coefficient:
{\setlength\abovedisplayskip{4pt}
\setlength\belowdisplayskip{2pt}
\begin{equation}
\footnotesize
g^{(l)} =
\begin{cases}
\gamma & \text{if } r^{(l)} > \tau \\
1 & \text{otherwise},
\end{cases}
\end{equation}
where $\gamma > 1$ controls the perturbation strength. The modulated attention is then applied:
\begin{equation}
\text{Attention}^{(l)}(Q,K,V) = 
\softmax\!\left(\tfrac{Q (g^{(l)}K)^\top}{\sqrt{d_k}}\right)V .
\end{equation}}

{Empirically, we find that most foreground-sensitive blocks lie in the earlier half of the network. Consequently, we consider two variants of BGS in our experiments: (i) using the calibrated set of blocks with $r^{(l)} > \tau$, and (ii) a simpler heuristic that applies modulation to the first 50\% of blocks.}


\noindent\textbf{Step-level Guidance Scheduling (SGS).}  
We further specify \textit{when} modulation is applied along the denoising process. Early steps operate under high noise and determine global semantic alignment, mid steps refine coarse structures, while late steps mainly enhance visual details. Formally, let $t \in \{1, 2, \dots, T\}$ denote the denoising step. We define a scheduling function:
{\setlength\abovedisplayskip{4pt}
\setlength\belowdisplayskip{2pt}
\begin{equation}
\footnotesize
m(t) =
\begin{cases}
1 & \text{if } t \in [t_{\text{low}}, t_{\text{high}}], \\
0 & \text{otherwise},
\end{cases}
\end{equation}}
where $[t_{\text{low}}, t_{\text{high}}]$ denotes the interval of active guidance. 
To account for implementation differences (scaling either queries or keys), we combine block and step scheduling with an explicit scaling target. Let $s_Q,s_K\in\{0,1\}$ indicate whether we scale queries or keys ($s_Q+s_K=1$). We define:
\begin{equation}
\footnotesize
g^{(l,t)} = m(t)\,b^{(l)}(\gamma-1),
\end{equation}
where $b^{(l)}$ is the block gate and $m(t)\in\{0,1\}$ the step mask. Then:
{\setlength\abovedisplayskip{4pt}
\setlength\belowdisplayskip{2pt}
{
\scriptsize
\begin{equation}
Q'^{(l,t)} = \big(1 + s_Q\times g^{(l,t)})\big)\,Q^{(l)},\,
K'^{(l,t)} = \big(1 + s_K \times g^{(l,t)}\big)\,K^{(l)}.
\end{equation}
}}

The scheduled attention is:
{\setlength\abovedisplayskip{4pt}
\setlength\belowdisplayskip{2pt}
\begin{equation}
\footnotesize
\mathrm{Attention}_t^{(l)} = \softmax\!\left(\frac{Q'^{(l,t)}(K'^{(l,t)})^\top}{\sqrt{d_k}}\right)V^{(l)}.
\end{equation}}

\begingroup
\setlength{\tabcolsep}{15pt}
\renewcommand{\arraystretch}{1.15}
\begin{table*}[t!]
    \centering
    \renewcommand{\arraystretch}{1}
    \resizebox{\linewidth}{!}{%
    \begin{tabular}{l|ccc|cc}
    \toprule
        \multirow{2}{*}{\textbf{Method}} & \multicolumn{3}{c|}{\textbf{Semantic Alignment Evaluation}} & \multicolumn{2}{c}{\textbf{Visual Quality Evaluation}} \\
        \cmidrule(lr){2-4} \cmidrule(lr){5-6}
        & Modification $\uparrow$  & Addition $\uparrow$ & Deletion $\uparrow$ & Dynamic Degree $\uparrow$ & Aesthetic Quality $\uparrow$\\
        \midrule
        Hunyuan I2V~\citep{kong2024hunyuanvideo}       & 63.28   & 60.34   & 61.94   & 17.74  & 62.04  \\
        Wan 2.1~\citep{wan2025wan}                     & 72.35   & 71.75   & 63.13  & 46.02  & 63.12  \\
        Skyreels-v2-I2V~\citep{chen2025skyreels}       & 70.02   & 76.64   & 62.95  & 51.16  & 58.94  \\
        Skyreels-v2-DF~\citep{chen2025skyreels}        & 71.10   & 73.28   & 65.35  & 47.30  & 61.10  \\
        FramePack~\citep{zhang2025packing}             & 64.99   & 68.55   & 58.14  & 20.05  & 63.94  \\
        FramePack F1~\citep{zhang2025packing}          & 64.45   & 67.79   & 58.50   & 24.42  & 63.10  \\
        EasyAnimate~\citep{xu2024easyanimate}          & 65.53   & 67.18   & 60.89 & 45.76  & 61.41  \\
        \bottomrule
    \end{tabular}
    }
    \vspace{2pt}
    \caption{Quantitative comparison on OmitI2V benchmark.}
    \label{tab:omitI2V-benchmark-main}
\end{table*}
\endgroup

\begingroup
\setlength{\tabcolsep}{4pt}
\renewcommand{\arraystretch}{1.15}
\begin{table*}[t!]
    \centering
    \renewcommand{\arraystretch}{1}
    \resizebox{\linewidth}{!}{%
    \begin{tabular}{l|ccc|BBB|cc}
        \toprule
        \multirow{2}{*}{\textbf{Method}} & \multicolumn{3}{c|}{\textbf{Semantic Alignment Evaluation}} & \multicolumn{3}{c|}{{ \textbf{ViCLIP Score}}} & \multicolumn{2}{c}{\textbf{Visual Quality Evaluation}} \\
        \cmidrule(lr){2-4} \cmidrule(lr){5-7} \cmidrule(lr){8-9}
        & Modification $\uparrow$  & Addition $\uparrow$ & Deletion $\uparrow$ & Modification $\uparrow$  & Addition $\uparrow$ & Deletion $\uparrow$ & Dynamic Degree $\uparrow$ & Aesthetic Quality $\uparrow$\\
        \midrule
        \rowcolor{gray!2}
        FramePack         & 64.99   & 68.55   & 58.14 & 20.83 & 21.08 & 20.43  & 20.05   & 63.94  \\
        \rowcolor{cyan!5}  FramePack + Ours    
            & 68.22 {\scriptsize(+3.23)} 
            & 73.13 {\scriptsize(+4.58)} 
            & 60.21 {\scriptsize(+2.07)}
            & 21.25 {\scriptsize(+0.42)}
            & 22.08 {\scriptsize(+0.83)}
            & 20.86 {\scriptsize(+0.43)}
            & 28.53 {\scriptsize(+8.48)} 
            & 63.57 {\scriptsize($-$0.37)}\\
        \rowcolor{gray!2}
        FramePack F1     & 64.45   & 67.79   & 58.50  & 21.06 & 19.91 & 20.61 & 24.42   & 63.10  \\
        \rowcolor{cyan!5}  FramePack F1 + Ours  
            & 71.27 {\scriptsize(+6.82)} 
            & 71.60 {\scriptsize(+3.81)} 
            & 61.06 {\scriptsize(+2.56)} 
            & 21.78 {\scriptsize(+0.72)}
            & 21.04 {\scriptsize(+1.13)}
            & 20.99 {\scriptsize(+0.38)}
            & 33.16 {\scriptsize(+8.74)} 
            & 62.10 {\scriptsize($-$1.00)}\\
        \rowcolor{gray!2}
        Wan2.1           & 72.35   & 71.75   & 63.13  & 20.93 & 20.59 & 20.82  & 46.02   & 63.12  \\
        \rowcolor{cyan!5}  Wan2.1 + Ours       
            & 77.20 {\scriptsize(+4.85)} 
            & 79.54 {\scriptsize(+7.79)} 
            & 69.47 {\scriptsize(+6.34)} 
            & 22.19 {\scriptsize(+1.26)}
            & 23.30 {\scriptsize(+2.71)}
            & 21.29 {\scriptsize(+0.47)}
            & 47.04 {\scriptsize(+1.02)} 
            & 61.63 {\scriptsize($-$1.49)}\\
        \bottomrule
    \end{tabular}
    }
    \vspace{2pt}
    \caption{\textbf{Effectiveness of our method.} Values in parentheses indicate relative improvement (\%) over the corresponding baseline.}
    \label{tab:main-effectiveness}
\end{table*}
\endgroup

\begin{table*}[t!]
    \centering
    \renewcommand{\arraystretch}{1.15}
    \resizebox{\linewidth}{!}{%
    \begin{tabular}{l|ccc|BBB|cc}
    \toprule
         \multirow{2}{*}{\textbf{Method}} & \multicolumn{3}{c|}{\textbf{Semantic Alignment Evaluation}} & \multicolumn{3}{c|}{{ \textbf{ViCLIP Score}}} & \multicolumn{2}{c}{\textbf{Visual Quality Evaluation}} \\
        \cmidrule(lr){2-4} \cmidrule(lr){5-7} \cmidrule(lr){8-9}
        & Modification $\uparrow$  & Addition $\uparrow$ & Deletion $\uparrow$ & Modification $\uparrow$  & Addition $\uparrow$ & Deletion $\uparrow$ & Dynamic Degree $\uparrow$ & Aesthetic Quality $\uparrow$\\
        \midrule
        \multicolumn{9}{c}{\textbf{\textit{FramePack}}} \\
        \hline
        \rowcolor{gray!5} Original         & 64.99   & 68.55   & 58.14 & 20.83 & 21.08 & 20.43  & 20.05   & 63.94  \\
        Scalar scaling    &  \textbf{67.15} & \textbf{73.44} & \textbf{59.86} & \textbf{21.38} & \textbf{22.03} & \textbf{21.05} & \textbf{28.28} & 63.41 \\
        Energy-based modulation & 66.61 & 72.37 & 58.66 & 21.26 & 21.79 & 20.76 & 26.48 & \textbf{63.62}\\
        \hline
        \multicolumn{9}{c}{\textbf{\textit{Wan2.1}}} \\
        \hline
        \rowcolor{gray!5} Original         & 72.35 & 71.75 & 63.13 & 20.93 & 20.59 & 20.82 & 46.02 & 63.12  \\
        Scalar scaling    & \textbf{72.53} & \textbf{80.76} & \textbf{70.33} & \textbf{22.28} & \textbf{23.50} & \textbf{21.26} & \textbf{53.21} & 62.38 \\
        Energy-based modulation & 72.40 & 75.65 & 67.86 & 21.56 & 21.82 & 20.97 & 48.90 & \textbf{62.67}\\
        \bottomrule
    \end{tabular}
    }
    \vspace{2pt}
    \caption{\textbf{Ablation about modulation variants.} Bold values denote the best performance.}
    \label{tab: Ablation-modulation}
    \vspace{-8pt}
\end{table*}

\section{OmitI2V Benchmark}
\label{OmitI2V-bench}
Existing image-to-video (I2V) benchmarks either lack explicit textual conditioning or assess only coarse text-image consistency, providing limited signal for fine-grained semantic fidelity. 
We introduce \textbf{OmitI2V}, a benchmark designed to evaluate whether TI2V models faithfully execute textual instructions that require \emph{explicit visual edits} to the input image (modification, addition, deletion).

\noindent\textbf{Evaluation axes.}
OmitI2V evaluates two complementary axes. (i) \textit{Semantic Alignment Evaluation} evaluates whether the generated video realizes the prompt-specified edit under the three scenarios. We assess edit-level compliance with a VQA-based yes/no protocol and report accuracy. (ii) \textit{Visual Quality Evaluation} reports the \textit{dynamic degree} (the extent of motion) and \textit{aesthetic quality} (perceptual fidelity and visual appeal), independent of semantic correctness.

\noindent\textbf{Data and protocol.}
The benchmark contains \textbf{367} image--text pairs spanning diverse styles (real, synthetic, animation). Each pair is annotated with an edit type (\emph{addition}, \emph{deletion}, or \emph{modification}) that specifies the intended visual change. Conventional metrics such as FVD are not designed to capture edit-level semantic compliance. Instead, for each video, we introduce yes/no questions derived from the prompt and edit type (e.g., \emph{``Did a sunflower appear in front of the house?''}) and compute accuracy using \textit{Qwen2.5-VL-32B}~\citep{wang2024qwen2}. { We employ the ViCLIP score~\citep{wang2023internvid} as a text-video matching metric.}

\section{Experiments}

\subsection{Comparison Experiments}
\noindent\textbf{Semantic negligence remains prevalent.} 
Table~\ref{tab:omitI2V-benchmark-main} shows results on OmitI2V-Bench, indicating no existing TI2V model uniformly handles all edit types. For instance, \textit{Wan2.1} achieves the highest accuracy in modification and addition but drops notably in deletion; \textit{Skyreels-v2-I2V} excels at addition yet lacks consistency elsewhere. \textit{FramePack} (and its variant), despite autoregressive priors, exhibits the weakest semantic fidelity, especially in deletion. These results confirm that semantic negligence persists across methods.

\begin{table*}[t]
    \centering
    \renewcommand{\arraystretch}{1.15}
    \resizebox{\linewidth}{!}{%
    \begin{tabular}{l|ccc|BBB|cc}
        \toprule
                \multirow{2}{*}{\textbf{Method}} & \multicolumn{3}{c|}{\textbf{Semantic Alignment Evaluation}} & \multicolumn{3}{c|}{{ \textbf{ViCLIP Score}}} & \multicolumn{2}{c}{\textbf{Visual Quality Evaluation}} \\
        \cmidrule(lr){2-4} \cmidrule(lr){5-7} \cmidrule(lr){8-9}
        & Modification $\uparrow$  & Addition $\uparrow$ & Deletion $\uparrow$ & Modification $\uparrow$  & Addition $\uparrow$ & Deletion $\uparrow$ & Dynamic Degree $\uparrow$ & Aesthetic Quality $\uparrow$\\
        \midrule
        \multicolumn{9}{c}{\textbf{\textit{FramePack}}} \\
        \hline
        \rowcolor{gray!2}
        - & 64.99   & 68.55   & 58.14 & 20.83 & 21.08 & 20.43  & 20.05   & 63.94  \\
        \rowcolor{gray!2}
        Key-image & 64.45 & 61.07 & 52.93 & 14.55 & 11.95 & 15.19 & 6.94 & 24.63\\
        \rowcolor{gray!2}
        Key-text & \underline{65.71} & 62.90 & \textbf{65.81} & \underline{15.88} & \underline{12.81} & \underline{16.10} & 14.91 & 24.81\\
        \rowcolor{cyan!5}
        Key-image and Key-text & \textbf{68.22} & \textbf{73.13} & \underline{60.21} & \textbf{21.25} & \textbf{22.08} & \textbf{20.86} & \textbf{28.53} & \textbf{63.57} \\
        \hline
        \multicolumn{9}{c}{\textbf{\textit{Wan2.1}}} \\
        \hline
        \rowcolor{gray!2}
        - & 72.35 & 71.75 & 63.13 & 20.93 & 20.59 & 20.82 & 46.02 & 63.12\\
        \rowcolor{gray!2}
        Key in Self-attention  & 67.32 & 66.87 & 65.18 & 19.20 & 16.91 & 19.03& 58.87 & 47.10\\
        \rowcolor{gray!2}
        Query-image  & \underline{76.48} & \underline{80.46} & 65.18 & 22.10 & \underline{23.39} & \textbf{21.69} & 51.67 & 61.20\\
        \rowcolor{gray!2}
        Key-image & 69.48 & 75.42 & 64.49 & 21.14 & 21.41 & 21.03& 48.33 & \underline{62.55}\\
        \rowcolor{gray!2}
        Query-text & 72.71 & 77.25 & \underline{68.27} & \underline{22.13} & 22.78 & \underline{21.45} & \underline{59.90} & 61.62\\
        \rowcolor{gray!2}
        Key-text & 71.45 & 78.17 & 67.41 & 21.82 & 22.88 & 21.43 & 55.53 & 61.46\\
        \rowcolor{gray!2}
        Key-image and Query-text & \textbf{76.66} & 79.85 & 67.75 & 21.04 & 21.80 & 21.00& \textbf{60.67} & 61.58\\
        \rowcolor{gray!2}
        Key-image and Key-text & 73.79 & 78.18 & 66.04 & 22.06 & 23.20 & 21.48& 43.19 & \textbf{62.86}\\
        \rowcolor{cyan!5}
        Query-image and Key-text & 72.53 & \textbf{80.76} & \textbf{70.33} & \textbf{22.28} & \textbf{23.50} & 21.26 & 53.21 & 62.38\\
        \bottomrule
    \end{tabular}
    }
    \vspace{2pt}
    \caption{\textbf{Ablation on scaling positions.} For \textit{FramePack}, image and text tokens are concatenated and processed via self-attention, making scaling $Q$ or $K$ effectively equivalent (we scale $K$ in practice). For \textit{Wan2.1}, video tokens use self-attention (treated as in \textit{FramePack}), while image and text act as cross-attention conditions where $Q$ and $K$ differ and must be analyzed separately.}
    \label{tab: Ablation-scalar Position}
    \vspace{-8pt}
\end{table*}
\begin{table*}[t]
    \centering
    \renewcommand{\arraystretch}{1.15}
    \resizebox{\linewidth}{!}{%
    \begin{tabular}{c|c|ccc|BBB|cc}
    \toprule
        \multirow{2}{*}{\textbf{BGS}} & \multirow{2}{*}{\textbf{SGS}}& \multicolumn{3}{c|}{\textbf{Semantic Alignment Evaluation}} & \multicolumn{3}{c|}{{ \textbf{ViCLIP Score}}} & \multicolumn{2}{c}{\textbf{Visual Quality Evaluation}} \\
        \cmidrule(lr){3-5} \cmidrule(lr){6-10}
         & & Modification $\uparrow$  & Addition $\uparrow$ & Deletion $\uparrow$ & Modification $\uparrow$  & Addition $\uparrow$ & Deletion $\uparrow$ & Dynamic Degree $\uparrow$ & Aesthetic Quality $\uparrow$\\
        \midrule
        \multicolumn{10}{c}{\textbf{\textit{FramePack}}} \\
        \hline
        \rowcolor{gray!2}
        - & -  & 64.99   & 68.55   & 58.14 & 20.83 & 21.08 & 20.43  & 20.05   & 63.94  \\
        \rowcolor{gray!2}
        All & Early Steps  & 67.15 & 73.44 & 59.86 & 21.38 & 22.03 & 21.05 & 28.28 & 63.41\\
        \rowcolor{gray!2}
        All & Middle Steps & 62.71 & 70.01 & 56.60 & 20.85 & 21.06 & 20.54& 20.05 & 63.96\\
        \rowcolor{gray!2}
        All & End Steps    & 64.63 & 69.62 & 57.63 & 20.80 & 21.08 & 20.47& 19.54 & 63.94\\
        \rowcolor{gray!2}
        All & All          & 69.84 & 76.03 & 59.86 & 21.56 & 22.30 & 21.31 & 32.13 & 61.56\\
        \rowcolor{cyan!5}
        Foreground-focus & Early Steps & 68.22 & 73.13 & 60.21 & 21.25 & 22.08 & 20.86& 28.53 & 63.57\\
        \rowcolor{gray!2}
        Background-focus & Early Steps & 66.25 & 69.16 & 56.26 & 20.88 & 21.03 & 20.50& 17.99 & 64.02\\
        \rowcolor{gray!2}
        { First half blocks} & {Early Steps} & {66.61} & {73.89} & {58.31} & {20.68} & {22.16} & {20.76} & {25.96} & {63.58}\\
        \rowcolor{gray!2}
        { Last half blocks} & {Early Steps} & {65.35} & {69.92} & {57.18} & {20.39} & {21.19} & {20.56} & {22.11} & {63.49}\\
        \hline
        \multicolumn{10}{c}{\textbf{\textit{Wan2.1}}} \\
        \hline
        \rowcolor{gray!2}
        - & -    & 72.35 & 71.75 & 63.13 & 20.3 & 21.08 & 20.43 & 46.02 & 63.12\\
        \rowcolor{gray!2}
        All & Early Steps  & 72.53 & 80.76 & 70.33 & 22.28 & 23.50 & 21.26& 53.21 & 61.38\\
        \rowcolor{gray!2}
        All & Middle Steps & 68.76 & 74.81 & 61.41 & 21.37 & 21.22 & 20.89& 42.16 & 62.91\\
        \rowcolor{gray!2}
        All & End Steps    & 69.84 & 74.05 & 66.90 & 21.20 & 21.86 & 21.44& 53.98 & 61.55\\
        \rowcolor{gray!2}
        All & All          & 78.28 & 80.46 & 69.13 & 22.63 & 24.26 & 21.93& 49.36 & 60.59\\
        \rowcolor{cyan!5}
        Foreground-focus & Early Steps & 77.20 & 79.54 & 69.47 & 22.19 & 23.30 & 21.29 & 47.04 & 61.63\\
        \rowcolor{gray!2}
        Background-focus & Early Steps & 71.99 & 75.88 & 65.35 & 21.26 & 21.93 & 20.86& 41.90 & 62.62\\
        \rowcolor{gray!2}
        { First half blocks} & {Early Steps} & {76.55} & {78.85} & {68.10} & {22.18} & {22.60} & {21.40} & {52.70} & {61.54} \\ 
        \rowcolor{gray!2}
        { Last half blocks} & {Early Steps} & {73.68} & {77.89} & {62.64} & {20.63} & {22.48} & {21.20} & {50.31} & {61.47}\\
        \bottomrule
    \end{tabular}
    }
    \vspace{2pt}
    \caption{\textbf{Ablation of block- and step-level guidance scheduling.} Gating ASM to BGS boosts semantic fidelity.}
    \label{tab: Ablation-Block-level Guidance Scheduling}
\end{table*}
\begin{table*}[t!]
    \centering
    \renewcommand{\arraystretch}{1.15}
    \resizebox{\linewidth}{!}{%
    \begin{tabular}{l|ccc|ccc|cc}
        \toprule
         \multirow{2}{*}{\textbf{Method}} & \multicolumn{3}{c|}{\textbf{Semantic Alignment Evaluation}} & \multicolumn{3}{c|}{\textbf{ViCLIP Score}} & \multicolumn{2}{c}{\textbf{Visual Quality Evaluation}} \\
        \cmidrule(lr){2-4} \cmidrule(lr){5-7} \cmidrule(lr){8-9}
        & Modification $\uparrow$  & Addition $\uparrow$ & Deletion $\uparrow$ & Modification $\uparrow$  & Addition $\uparrow$ & Deletion $\uparrow$ & Dynamic Degree $\uparrow$ & Aesthetic Quality $\uparrow$\\
        \midrule
        CFG=1 (no cfg) & 63.55 & 63.66 & 61.06 & 19.20 & 17.09 & 19.67 & 41.65 & 61.19\\
        \rowcolor{cyan!5}
        CFG=1 + AlignVid & 65.88 & 72. 52 & 60.21 & 19.51 & 18.47 & 19.89 & 42.48& 62.11\\
        CFG=5 (Official) & 72.35 & 71.75 & 63.13 & 20.83 & 21.08 & 20.43 & 46.02 & 63.12\\
        \rowcolor{cyan!5}
        CFG=5 + AlignVid & 77.20 & 79.54 & 69.47 & 22.19 & 23.30 & 21.29& 47.04 & 61.63\\ 
        \bottomrule
    \end{tabular}
    }
    \vspace{2pt}
    \caption{{ Comparison with CFG on Wan2.1.}}
    \label{tab:cfg_compare}
    \vspace{-12pt}
\end{table*}

\begin{figure*}[t] 
    \centering
    \includegraphics[width=\textwidth]{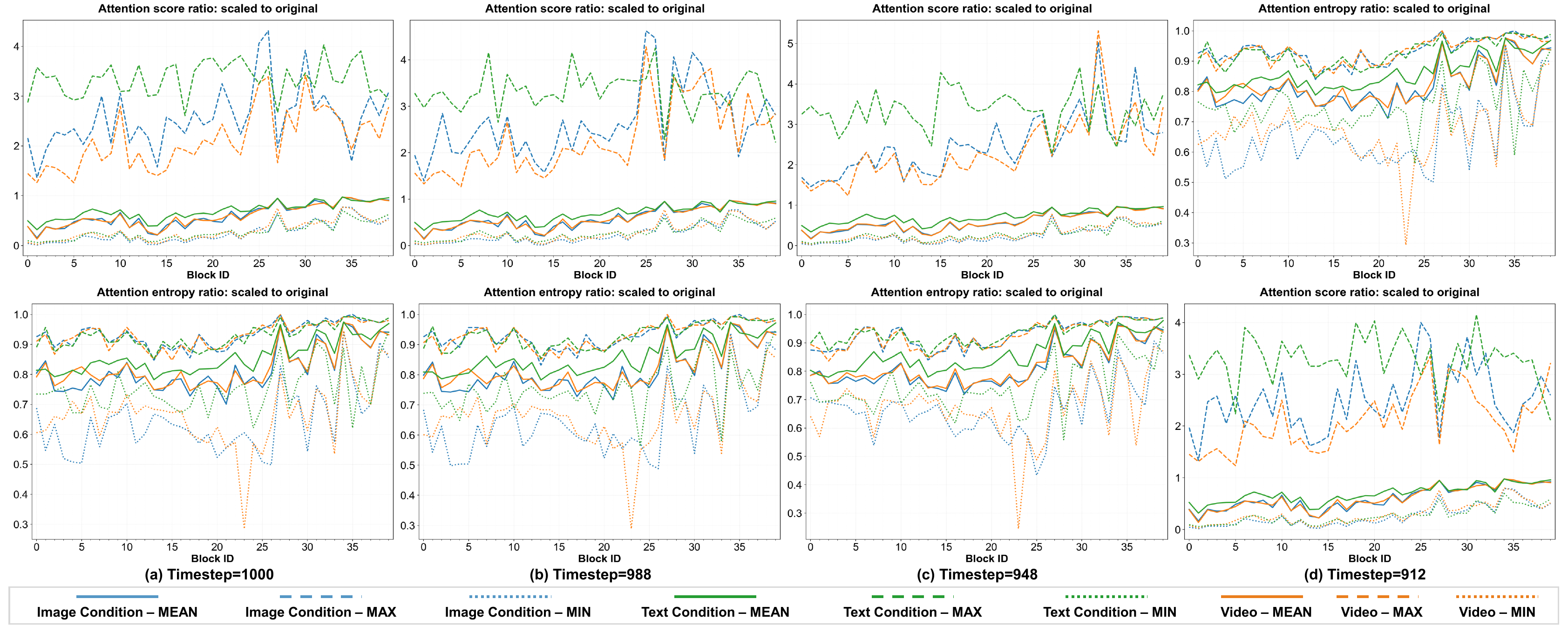}
    \vspace{-18pt}
    \caption{{ \textbf{Attention analysis.} ASM sharpens attention (lower entropy), boosts focus on text tokens  and suppresses image regions.}}
    \label{fig:Attention_analysis}
\end{figure*}

\begin{table*}[t]
    \setlength{\tabcolsep}{12pt}
    \centering
    \renewcommand{\arraystretch}{1.15}
    \resizebox{0.99\linewidth}{!}{
        \begin{tabular}{l|cccccccc}
            \toprule
            Method & \multicolumn{1}{c}{Single object} & \multicolumn{1}{c}{Two object} & \multicolumn{1}{c}{Counting} & \multicolumn{1}{c}{Colors} & \multicolumn{1}{c}{Position} & \multicolumn{1}{c}{Color attribution} & \multicolumn{1}{c}{Aesthetic Score} \\
            \midrule
            OmniGen2~\citep{wu2025omnigen2} 
            & 0.99 & 0.94 & 0.67 & 0.85 & 0.55 & 0.62 & 5.517\\
            \rowcolor{cyan!5}
            \textbf{+ AlignVid} 
            & 1.00\inc{0.01} 
            & 0.97\inc{0.03} 
            & 0.52\dec{0.15} 
            & 0.89\inc{0.04} 
            & 0.60\inc{0.05} 
            & 0.70\inc{0.08} 
            & 5.568\inc{0.05}\\
            \bottomrule
        \end{tabular}
    }
    \vspace{2pt}
    \caption{{ Quantitative results on GenEval. Prompt rewriter is not utilized during inference.}}
    \label{tab:geneval}
    \vspace{-5pt}
\end{table*}

\begin{table*}[t!]
    \centering
    \renewcommand{\arraystretch}{1.15}
    \resizebox{1.0\linewidth}{!}{
    \begin{tabular}{l|cccccccc}
        \toprule
        \textbf{Model} 
        & \bf Subject Consistency 
        & \bf Temporal Style 
        & \bf Temporal Flickering 
        & \bf Spatial Relationship 
        & \bf Scene 
        & \bf Overall Consistency 
        & \bf Object Class 
        & \bf Multiple Objects \\
        \midrule
        Wan2.1-T2V-1.3B 
        & 94.24 & 22.67 & 99.32 & 72.74 & 19.62 & 23.59 & 79.03 & 53.35 \\
        \rowcolor{cyan!5}
        \textbf{+ AlignVid} 
        & 94.51\inc{0.27} 
        & 23.46\inc{0.79} 
        & 98.66\dec{0.66} 
        & 84.25\inc{11.51} 
        & 25.80\inc{6.18} 
        & 24.47\inc{0.88} 
        & 79.91\inc{0.88} 
        & 66.46\inc{13.11} \\
        \bottomrule
    \end{tabular}
    }
    \resizebox{1.0\linewidth}{!}{
    \begin{tabular}{l|ccccccc}
        \toprule
        \textbf{Model} 
        & \bf Motion Smoothness 
        & \bf Imaging Quality 
        & \bf Dynamic Degree 
        & \bf Color 
        & \bf Background Consistency 
        & \bf Appearance Style 
        & \bf Aesthetic Quality \\
        \midrule
        Wan2.1-T2V-1.3B 
        & 97.77 & 69.70 & 70.83 & 88.08 & 98.09 & 19.58 & 64.60 \\
        \rowcolor{cyan!5}
        \textbf{+ AlignVid} 
        & 98.05\inc{0.28} 
        & 68.53\dec{1.17} 
        & 68.06\dec{2.77} 
        & 91.80\inc{3.72} 
        & 98.20\inc{0.11} 
        & 20.16\inc{0.58} 
        & 62.69\dec{1.91} \\
        \bottomrule
    \end{tabular}
    }
    \vspace{2pt}
    \caption{{Quantitative results on VBench. AlignVid also yields gains in the T2V task.}}
    \vspace{-5pt}
    \label{tab:vbench}
\end{table*}

\begin{table*}[t!]
    \centering
    \renewcommand{\arraystretch}{1.15}
    \resizebox{1.0\linewidth}{!}{
    \begin{tabular}{l|cccccccccc}
        \toprule
        \textbf{Model} 
        & \bf Add & \bf Adjust & \bf Extract & \bf Replace & \bf Remove 
        & \bf Background & \bf Style & \bf Compose & \bf Action & \bf Aesthetic Score \\
        \midrule
        OmniGen2~\citep{wu2025omnigen2} 
        & 2.52 & 3.27 & 2.08 & 3.12 & 2.83 & 3.65 & 4.57 & 2.89 & 4.59 & 5.606 \\
        \rowcolor{cyan!5}
        \textbf{+ AlignVid} 
        & 3.53\inc{1.01} 
        & 3.12\dec{0.15} 
        & 2.04\dec{0.04} 
        & 3.18\inc{0.06} 
        & 3.33\inc{0.50} 
        & 3.65 
        & 4.75\inc{0.18} 
        & 2.43\dec{0.46} 
        & 4.50\dec{0.09} 
        & 5.624\inc{0.02}\\
        \bottomrule
    \end{tabular}
    }
    \vspace{2pt}
    \caption{{Quantitative results on ImgEdit. AlignVid also yields gains in the image editing task.}}
    \label{tab:imgedit}
    \vspace{-8pt}
\end{table*}

\noindent\textbf{Semantics-aesthetic trade-off.} 
Table~\ref{tab:omitI2V-benchmark-main} shows that stronger prompt adherence is not necessarily aligned with higher visual quality. For instance, \textit{EasyAnimate} and \textit{Skyreels-v2-DF} attain competitive dynamic degree and aesthetic scores, yet exhibit semantic omissions.

\noindent\textbf{Effectiveness of AlignVid.} 
Table~\ref{tab:main-effectiveness} shows that plugging \textbf{AlignVid} into \textit{FramePack}, \textit{FramePack-F1}, and \textit{Wan2.1} yields consistent gains in semantic fidelity and dynamic degree across all edit types, indicating good architectural generality. While aesthetic quality scores may drop slightly, the decrease is minor relative to the substantial improvements in semantic fidelity and motion coherence, validating the design of selective attention scaling and scheduling.

\subsection{Ablation and Generalization Experiment}

\noindent\textbf{Ablation on modulation strategy.} Table~\ref{tab: Ablation-modulation} compares the variants: \emph{scalar scaling} and \emph{energy-based modulation}. Both improve semantic fidelity, confirming that attention reweighting is effective. The energy-based variant yields smaller drops in aesthetic quality but also smaller semantic gains. Considering its additional inference overhead, we adopt \emph{scalar scaling} for other experiments.

\noindent\textbf{Ablation on scaling position.}
We ablate scaling sites inside attention (queries ($Q$) and keys ($K$)) and their image/text partitions (Table~\ref{tab: Ablation-scalar Position}). 
On \textit{FramePack}, where image and text tokens are concatenated and processed via self-attention, scaling $Q$ or $K$ is effectively equivalent; empirically, combining image- and text-side key scaling delivers the strongest overall semantic gains. In contrast, key-image only provides limited benefits and can hurt aesthetic metrics.
On \textit{Wan2.1}, where video tokens use self-attention but image/text act as cross-attention conditions, positions are no longer symmetric: pairing image queries with text keys attains the best \emph{addition}/\emph{deletion} accuracy,  pairing image keys with text queries yields the highest dynamic degree, and image keys with text keys offers the best aesthetic score.
Overall, jointly modulating image- and text-side sites yields the best semantic–visual trade-off, with architecture-aware preferences between self- and cross-attention.

\noindent\textbf{Ablation on block- and step-level guidance scheduling.} 
We evaluate the proposed BGS and SGS strategy on \textit{FramePack} and \textit{Wan2.1}, as shown in Table~\ref{tab: Ablation-Block-level Guidance Scheduling}. 
\emph{For BGS,} limiting ASM to foreground-focused blocks improves semantic fidelity while mitigating aesthetic degradation by concentrating modulation where text–visual grounding is strongest. 
\emph{For SGS,} activating guidance in early denoising steps yields the largest semantic gains; mid/late activation offers weaker semantic improvements but better preserves aesthetics. Enabling guidance at all steps maximizes semantic fidelity but incurs a noticeable visual quality drop (e.g., a \(2.38\%\) relative decrease for \textit{FramePack}). 
Balancing these trade-offs, we adopt an early-step schedule by default.

\noindent\textbf{Choice of the scaling factor.} The strength $\gamma$ admits a principled, transferable choice. Lemma~2 (Appendix~\ref{subsec:lemma-entropy}) shows that scaling monotonically reduces the conditioning entropy, so the semantic--quality trade-off is predictable by design: increasing $\gamma$ improves semantic fidelity while only gradually lowering aesthetic quality (Appendix~\ref{subsec:scaling-ablation}). This makes $\gamma{=}1.35$ transferable across backbones without per-model search, analogous to selecting CFG strength. We further validate a fully automated, per-token \emph{Conflict-Aware Scaling} (CAS) variant that sets $\gamma$ from the image--text attention conflict; it matches fixed scaling on semantics while better preserving aesthetics (Appendix~\ref{subsec:cas}).

{
\noindent\textbf{Attention analysis.} We further analyze attention maps before and after applying AlignVid on the benchmark, as illustrated in Figure~\ref{fig:Attention_analysis}. Concretely, we compute (i) attention distributions over different token groups, (ii) the ratio between the maximum attention scores, and (iii) the ratio of attention entropies for video queries.
After modulation, the attention distributions become noticeably sharper, reflected by a consistent decrease in attention entropy. At the signal level, video queries allocate stronger attention to text tokens and temporally adjacent frames, and relatively less to static image regions, encouraging the model to focus more on prompt and temporal cues. This shift in attention patterns correlates well with the improved semantic consistency observed in the generated videos.}

{ \noindent\textbf{Comparison with Classifier-Free Guidance (CFG).}
We also compare the proposed AlignVid with classifier-free guidance (CFG) in Wan2.1. As shown in Table~\ref{tab:cfg_compare}, applying AlignVid on top of CFG further improves performance. 
Compared with CFG, AlignVid enjoys two practical advantages: (i) it requires no additional training, and (ii) it introduces negligible extra inference overhead (Appendix).}

{
\noindent\textbf{Generalization: AlignVid on text-to-image generation.}
We further evaluate the generalization of AlignVid on text-to-image (T2I) generation, using OmniGen2~\citep{wu2025omnigen2} as the baseline. As reported on the GenEval~\citep{ghosh2024geneval} in Table~\ref{tab:geneval}, incorporating AlignVid improves all metrics except \emph{Counting}, indicating that our attention modulation can also transfer to the image domain.

\noindent\textbf{Generalization: AlignVid on text-to-video (T2V) generation.}
We also evaluate AlignVid on T2V generation, using Wan2.1-T2V-1.3B~\citep{wu2025omnigen2} with a scale coefficient of 1.35. As reported on the VBench~\citep{Huang_2024_CVPR} benchmark in Table~\ref{tab:vbench}, integrating AlignVid improves most dimensions, while leading to decreases in \emph{Temporal Flickering}, \emph{Imaging Quality}, \emph{Dynamic Degree}, and \emph{Aesthetic Quality}. Some metrics appear to be closely coupled: when AlignVid encourages stronger motion and temporal changes, the resulting videos may exhibit mild motion blur, which can hurt perceived sharpness and aesthetic scores, even though the prompt adherence is improved.

\noindent\textbf{Generalization: AlignVid on image editing.} We also apply it to an image editing benchmark (ImgEdit~\citep{ye2025imgedit}), using OmniGen2~\citep{wu2025omnigen2} as the baseline model. As shown in Table~\ref{tab:imgedit}, integrating AlignVid results in gains across editing categories, including \emph{Add}, \emph{Replace}, \emph{Remove}, and \emph{Style}, and also improves the aesthetic score. Interestingly, this contrasts with our observations in video generation, where AlignVid slightly reduces aesthetic quality. A explanation is that, in the video setting, temporal modeling tends to introduce additional motion blur, whereas image editing is not subject to such temporal artifacts.}

\section{Conclusion}  
In this paper, to mitigate the challenge of \emph{semantic negligence},  we proposed \textbf{AlignVid}, a training-free method based on an energy-based perspective of attention. Our analysis links query/key scaling to a flatter energy landscape and a more concentrated attention distribution. The proposed method comprises \emph{ASM} for attention rescaling and \emph{GS} for selective deployment across transformer blocks and denoising steps. To facilitate evaluation, we provide \textbf{OmitI2V}, a benchmark consisting of 367 human-annotated samples across three scenarios, namely modification, addition, and deletion. Experiment results show that AlignVid yields consistent improvements in semantic fidelity.
\section*{Impact Statement}
Our work improves text-guided image-to-video generation by enhancing prompt adherence via a training-free attention modulation method (AlignVid) and by providing a diagnostic benchmark (OmitI2V) for measuring fine-grained edit compliance; while this can benefit creative and assistive applications by making generation more controllable and reliable, improved controllability may also increase the risk of misuse for producing persuasive misleading or deceptive synthetic videos, so we recommend responsible deployment practices such as disclosure of AI-generated content and adherence to relevant policies and regulations.

\bibliography{main}
\bibliographystyle{icml2026}

\newpage
\clearpage
\appendix
\startcontents[appendix]
\section*{Appendix Contents}
\printcontents[appendix]{l}{1}{\setcounter{tocdepth}{2}}

\section{The Use of Large Language Models}
Large language models (LLMs)~\citep{hurst2024gpt} are used as general-purpose assistants for language polishing (grammar, tone), LaTeX phrasing, and minor reorganization of exposition. LLMs are \emph{not} used to design experiments, generate or label data, or produce claims. The authors take full responsibility for all content.

\section{Detailed Proofs for Attention Scaling Analysis}

\subsection{Lemma 1: Q/K Scaling as Temperature Control}

\paragraph{Statement.}
Let $Q'_t=\gamma_t Q_t$ and $K'_t=\eta_t K_t$. Then
\begin{equation}
Z'_t=\frac{1}{\sqrt d}\,Q'_t {K'_t}^\top=(\gamma_t\eta_t)\,Z_t \;:=\;\alpha_t Z_t,
\end{equation}
so for the $i$-th row the attention is $p^{(i)}(\alpha_t)=\sigma(\alpha_t z^{(i)}_t)$, i.e., a row-wise softmax with inverse temperature $\alpha_t$. In particular, scaling only $Q$ (resp.\ $K$) yields $\alpha_t=\gamma_t$ (resp.\ $\alpha_t=\eta_t$).

\paragraph{Proof.}
By definition,
\begin{equation}
Z_t=\frac{1}{\sqrt d}Q_tK_t^\top,
\end{equation}
and after scaling,
\begin{equation}
Z'_t=\frac{1}{\sqrt d}(\gamma_t Q_t)(\eta_t K_t)^\top=(\gamma_t\eta_t)Z_t.
\end{equation}
For the $i$-th row,
\begin{equation}
\sigma(\alpha_t z^{(i)}_t)_j
=\frac{\exp(\alpha_t z^{(i)}_{t,j})}{\sum_{k=1}^m \exp(\alpha_t z^{(i)}_{t,k})},
\end{equation}
which is softmax with inverse temperature $\alpha_t$ (temperature $T=1/\alpha_t$). \hfill$\square$

\subsection{Lemma 2: Entropy Monotonicity Under Scaling}
\label{subsec:lemma-entropy}

\paragraph{Statement.}
For any query $i$ and $\alpha>0$,
\begin{equation}
\frac{\mathrm{d}}{\mathrm{d}\alpha} H_i(\alpha)
\;=\;-\alpha\,\mathrm{Var}_{p^{(i)}(\alpha)}[z^{(i)}]\;\le\;0.
\end{equation}

\paragraph{Proof.}
Let
\begin{equation}
p_j^{(i)}(\alpha)=\frac{e^{\alpha z^{(i)}_j}}{\sum_k e^{\alpha z^{(i)}_k}},
\end{equation}
and write the entropy as
\begin{equation}
H_i(\alpha)= \log\!\sum_j e^{\alpha z^{(i)}_j}-\alpha\sum_j p_j^{(i)}(\alpha) z^{(i)}_j.
\end{equation}
Define $\mu(\alpha)=\sum_j p_j^{(i)}(\alpha) z^{(i)}_j=\mathbb{E}_{p^{(i)}(\alpha)}[z^{(i)}]$. Then
\begin{equation}
\frac{\mathrm{d}}{\mathrm{d}\alpha}\log\sum_j e^{\alpha z^{(i)}_j}
= \frac{\sum_j z^{(i)}_j e^{\alpha z^{(i)}_j}}{\sum_k e^{\alpha z^{(i)}_k}}
= \mu(\alpha).
\end{equation}
Using $\frac{\partial p_j}{\partial \alpha}=p_j\big(z^{(i)}_j-\mu(\alpha)\big)$,
\begin{equation}
\begin{aligned}
\frac{\mathrm{d}}{\mathrm{d}\alpha}\mu(\alpha)
&= \sum_j z^{(i)}_j \frac{\partial p_j}{\partial \alpha} \\
&= \sum_j z^{(i)}_j p_j\big(z^{(i)}_j-\mu(\alpha)\big) \\
&= \mathbb{E}_{p}[{z^{(i)}}^2]-\mu(\alpha)^2 \\
&= \mathrm{Var}_{p}[z^{(i)}].
\end{aligned}
\end{equation}
Therefore,
{
\small
\begin{equation}
\begin{aligned}
\frac{\mathrm{d}}{\mathrm{d}\alpha}H_i(\alpha)
&= \mu(\alpha)-\Big(\mu(\alpha)+\alpha\,\mathrm{Var}_{p}[z^{(i)}]\Big) \\
&= -\,\alpha\,\mathrm{Var}_{p}[z^{(i)}]\le 0.
\end{aligned}
\end{equation}
}
Equality holds iff the row logits are degenerate (zero variance). \hfill$\square$

\subsection{Theorem: Asymptotic Curvature Decay Under Scaling}

\paragraph{Statement.}
Let $p^{(i)}(\alpha)=\sigma(\alpha z^{(i)})$ and
\begin{equation}
\begin{aligned}
H_i(\alpha)
&= \nabla^2_{z^{(i)}}\Phi\!\big(\alpha z^{(i)}\big) \\
&= \alpha^2\!\left(\mathrm{Diag}\!\big(p^{(i)}(\alpha)\big)-p^{(i)}(\alpha){p^{(i)}(\alpha)}^\top\right).
\end{aligned}
\end{equation}

Let $j^\star=\arg\max_j z^{(i)}_j$ and $\Delta_i=z^{(i)}_{j^\star}-\max_{j\neq j^\star} z^{(i)}_j>0$. Then there exists $\alpha_\star=\alpha_\star(\Delta_i,m)$ such that for all $\alpha\ge \alpha_\star$,
\begin{equation}
\frac{\mathrm{d}}{\mathrm{d}\alpha}\,\big\|H_i(\alpha)\big\|_{\mathrm{spec}}\le 0,
\qquad
\lim_{\alpha\to\infty}\big\|H_i(\alpha)\big\|_{\mathrm{spec}}=0.
\end{equation}

\paragraph{Proof.}
First, a standard softmax gap bound gives
\begin{equation}
\begin{aligned}
p_{j^\star}
&=\frac{1}{1+\sum_{j\neq j^\star} \exp\big(\alpha(z^{(i)}_j-z^{(i)}_{j^\star})\big)} \\
&\ge \frac{1}{1+(m-1)e^{-\alpha\Delta_i}}.
\end{aligned}
\end{equation}

hence, for the tail mass $\varepsilon(\alpha):=1-p_{j^\star}$,
\begin{equation}
\varepsilon(\alpha)\le \frac{(m-1)e^{-\alpha\Delta_i}}{1+(m-1)e^{-\alpha\Delta_i}}
\;\le\;(m-1)e^{-\alpha\Delta_i}.
\end{equation}
Let $C(p)=\mathrm{Diag}(p)-pp^\top$ so that $H_i(\alpha)=\alpha^2 C(p)$. For any $i$, $C_{ii}=p_i(1-p_i)$ and $C_{ij}=-p_ip_j$ for $i\neq j$. By the Gershgorin disk theorem, every eigenvalue $\lambda$ satisfies
\begin{equation}
\lambda \le \max_i \{C_{ii}+\sum_{j\neq i}|C_{ij}|\}
= \max_i \{2\,p_i(1-p_i)\}.
\end{equation}
When $\alpha$ is large, $p_{j^\star}=1-\varepsilon(\alpha)$ and $\sum_{j\neq j^\star}p_j=\varepsilon(\alpha)$, so
\begin{equation}
\max_i p_i(1-p_i)
= \max\!\big\{(1-\varepsilon)\varepsilon,\ \max_{j\neq j^\star}\,p_j(1-p_j)\big\}
\;\le\; \varepsilon(\alpha).
\end{equation}
Therefore,
\begin{equation}
\begin{aligned}
\big\|C(p)\big\|_{\mathrm{spec}}
&\le 2\,\varepsilon(\alpha), \\
\big\|H_i(\alpha)\big\|_{\mathrm{spec}}
&\le 2\alpha^2 (m-1)\,e^{-\alpha\Delta_i}.
\end{aligned}
\end{equation}

The right-hand side tends to $0$ as $\alpha\to\infty$, proving the limit. Moreover,
\begin{equation}
\frac{\mathrm{d}}{\mathrm{d}\alpha}\big(\alpha^2 e^{-\alpha\Delta_i}\big)
= \alpha e^{-\alpha\Delta_i}(2-\alpha\Delta_i),
\end{equation}
which is nonpositive for $\alpha\ge 2/\Delta_i$. Hence there exists $\alpha_\star=\alpha_\star(\Delta_i,m)$ (e.g., $\alpha_\star\ge 2/\Delta_i$) such that $\|H_i(\alpha)\|_{\mathrm{spec}}$ is eventually nonincreasing. \hfill$\square$

\paragraph{Intuition.}
As $\alpha$ grows, softmax mass collapses onto the top logit. The tail mass decays exponentially in $\alpha\Delta_i$, forcing the non-principal directions of to vanish. Although the prefactor $\alpha^2$ can initially increase curvature, the exponential tail dominates asymptotically, so the spectral norm ultimately decreases and converges to zero.

\section{Theoretical Guarantees of Attention Scaling}

\subsection{Lipschitz Continuity of Attention Output}

We consider the attention output for query $i$ under Q/K scaling factor $\alpha$:
\begin{equation}
y^{(i)}(\alpha) = \sum_{j=1}^m p^{(i)}_j(\alpha) V_j = V^\top p^{(i)}(\alpha),
\end{equation}
where $p^{(i)}(\alpha) = \mathrm{softmax}(\alpha z^{(i)})$.

\begin{theorem}[Lipschitz Continuity of Attention Output]
For any $\alpha_1, \alpha_2 > 0$, the following bound holds:
\begin{equation}
\| y^{(i)}(\alpha_1) - y^{(i)}(\alpha_2) \|_2 
\;\le\; \frac{1}{2} \|V\|_2 \, \| z^{(i)} \|_2 \, |\alpha_1 - \alpha_2|.
\end{equation}
\end{theorem}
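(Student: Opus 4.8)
The plan is to bound the difference by integrating the derivative of the attention output in the scaling parameter and then controlling that derivative uniformly. Assume without loss of generality $\alpha_1>\alpha_2$. Since $y^{(i)}(\alpha)=V^\top p^{(i)}(\alpha)$ with $p^{(i)}(\alpha)=\sigma(\alpha z^{(i)})$ is a smooth function of $\alpha$, the fundamental theorem of calculus gives
\[
y^{(i)}(\alpha_1)-y^{(i)}(\alpha_2)=\int_{\alpha_2}^{\alpha_1}\frac{\mathrm{d}}{\mathrm{d}\alpha}\,y^{(i)}(\alpha)\,\mathrm{d}\alpha,
\]
so that $\|y^{(i)}(\alpha_1)-y^{(i)}(\alpha_2)\|_2\le |\alpha_1-\alpha_2|\,\sup_{\alpha>0}\big\|\tfrac{\mathrm{d}}{\mathrm{d}\alpha}y^{(i)}(\alpha)\big\|_2$. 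It therefore suffices to establish the uniform bound $\big\|\tfrac{\mathrm{d}}{\mathrm{d}\alpha}y^{(i)}(\alpha)\big\|_2\le\tfrac12\|V\|_2\,\|z^{(i)}\|_2$, where $\|V\|_2$ is the operator (spectral) norm of $V$.

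First I would compute the derivative explicitly. Writing $p=p^{(i)}(\alpha)$ and using the softmax Jacobian identity $\partial_\alpha\sigma(\alpha z)=\big(\mathrm{Diag}(p)-pp^\top\big)z$ — the same covariance structure $C(p)=\mathrm{Diag}(p)-pp^\top$ that appears in the Hessian $\nabla^2\Phi$ of the preliminaries and in the proof of the entropy lemma — the chain rule yields
\[
\frac{\mathrm{d}}{\mathrm{d}\alpha}\,y^{(i)}(\alpha)=V^\top C(p)\,z^{(i)}.
\]
Submultiplicativity of the spectral norm then gives $\big\|\tfrac{\mathrm{d}}{\mathrm{d}\alpha}y^{(i)}(\alpha)\big\|_2\le\|V\|_2\,\|C(p)\|_{\mathrm{spec}}\,\|z^{(i)}\|_2$, so the entire statement collapses to the single inequality $\|C(p)\|_{\mathrm{spec}}\le\tfrac12$.

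The key step, and the only one that is not mechanical, is this spectral-norm bound on the softmax Jacobian $C(p)$. Here I would reuse the Gershgorin argument already deployed in the curvature-decay theorem: $C(p)$ is positive semidefinite with diagonal entries $C_{jj}=p_j(1-p_j)$ and off-diagonal row sums $\sum_{k\neq j}|C_{jk}|=p_j\sum_{k\neq j}p_k=p_j(1-p_j)$, so every eigenvalue lies in $[0,\,2\max_j p_j(1-p_j)]$. Since $p_j(1-p_j)\le\tfrac14$ for $p_j\in[0,1]$, we obtain $\|C(p)\|_{\mathrm{spec}}\le\tfrac12$ uniformly in $p$, hence in $\alpha$, and chaining the three estimates closes the proof. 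The main obstacle is therefore conceptually minor: it is isolating this Jacobian-covariance identity and recognizing that the required constant is exactly the $\tfrac12$ bound on $C(p)$. I note that $\tfrac12$ is tight — it is attained by a two-point uniform distribution $p=(\tfrac12,\tfrac12)$, whose $C(p)$ has eigenvalue $\tfrac12$ — so the Lipschitz constant cannot be improved by this argument, and all genuine content lives in the Jacobian identity and the Gershgorin estimate, both of which are available from earlier in the paper.
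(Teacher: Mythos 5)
Your proof is correct and follows the same skeleton as the paper's: differentiate $y^{(i)}(\alpha)=V^\top p^{(i)}(\alpha)$ in $\alpha$, identify the softmax Jacobian $C(p)=\mathrm{Diag}(p)-pp^\top$, bound the derivative by $\tfrac12\|V\|_2\|z^{(i)}\|_2$, and integrate (your fundamental-theorem-of-calculus step is interchangeable with the paper's mean value theorem). The one substantive difference is in your favor: where the paper simply asserts ``it is known that'' $\bigl\|\tfrac{\mathrm{d}}{\mathrm{d}\alpha}p^{(i)}(\alpha)\bigr\|_2\le\tfrac12\|z^{(i)}\|_2$ without justification, you actually prove the required estimate $\|C(p)\|_{\mathrm{spec}}\le\tfrac12$ by the Gershgorin argument (diagonal $p_j(1-p_j)$ plus off-diagonal row sum $p_j(1-p_j)$, then $p_j(1-p_j)\le\tfrac14$), reusing the same disk bound the paper deploys in its curvature-decay theorem. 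Your tightness remark via $p=(\tfrac12,\tfrac12)$ is also correct and shows the constant cannot be improved, so your write-up is strictly more complete than the paper's own proof.
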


\begin{proof}[Detailed Proof]
The derivative of $y^{(i)}(\alpha)$ w.r.t. $\alpha$ is
\begin{equation}
\frac{d}{d\alpha} y^{(i)}(\alpha) = V^\top \frac{d}{d\alpha} p^{(i)}(\alpha).
\end{equation}

The softmax Jacobian is
\begin{equation}
\begin{aligned}
\frac{\partial p^{(i)}_j}{\partial \alpha}
&= p^{(i)}_j \left( z_j^{(i)} - \sum_k p^{(i)}_k z_k^{(i)} \right) \\
&= p^{(i)}_j \big( z_j^{(i)} - \mathbb{E}_{p^{(i)}}[z^{(i)}] \big).
\end{aligned}
\end{equation}

Hence, in vector form:
\begin{equation}
\frac{d}{d\alpha} p^{(i)}(\alpha) = \mathrm{Diag}(p^{(i)}) z^{(i)} - (p^{(i)} z^{(i)\top}) p^{(i)}.
\end{equation}

It is known that the spectral norm of this softmax derivative is bounded by
\begin{equation}
\Big\| \frac{d}{d\alpha} p^{(i)}(\alpha) \Big\|_2 \le \frac{1}{2} \| z^{(i)} \|_2.
\end{equation}

Finally,
{
\small
\begin{equation}
\begin{aligned}
\Big\| \frac{d}{d\alpha} y^{(i)}(\alpha) \Big\|_2 
&\le \| V \|_2 \, \Big\| \frac{d}{d\alpha} p^{(i)}(\alpha) \Big\|_2 \\
&\le \frac{1}{2} \| V \|_2 \, \| z^{(i)} \|_2.
\end{aligned}
\end{equation}
}

By the mean value theorem,

\begin{equation}
\| y^{(i)}(\alpha_1) - y^{(i)}(\alpha_2) \|_2 \le \frac{1}{2} \| V \|_2 \, \| z^{(i)} \|_2 \, |\alpha_1 - \alpha_2|,
\end{equation}
proving Lipschitz continuity.
\end{proof}

\noindent\textbf{Remark.}  
This theorem guarantees that scaling Q/K with $\alpha$ produces a bounded change in attention outputs, proportional to the magnitude of $\alpha$ deviation.

\subsection{Impact on a Single Diffusion Step}

Consider a single DDIM/ODE update:
\begin{equation}
x_{t-1} = a_t x_t + b_t \, \varepsilon_\theta(x_t, t),
\end{equation}
where $\varepsilon_\theta$ is $L_y$-Lipschitz in the attention output $y$.

\begin{proposition}[Upper Bound on State Deviation]
If selective Q/K scaling is applied at step $t$ with factor $\alpha_t$, then the updated state deviation satisfies
\begin{equation}
\| x'_{t-1} - x_{t-1} \|_2 
\;\le\; |b_t| \, L_y \cdot \frac{1}{2} \|V_t\|_2 \, \| z_t^{(i)} \|_2 \, |\alpha_t - 1|.
\end{equation}
\end{proposition}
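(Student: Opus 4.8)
The plan is to chain two simpler estimates: the Lipschitz continuity of the attention output in $\alpha$ (the preceding Theorem) and the assumed $L_y$-Lipschitz dependence of $\varepsilon_\theta$ on the attention output $y$. The key observation is that between the unmodulated step (effective scale $\alpha=1$) and the modulated step (scale $\alpha_t$), the \emph{only} quantity that changes in the update $x_{t-1}=a_t x_t + b_t\,\varepsilon_\theta(x_t,t)$ is the attention output that feeds into $\varepsilon_\theta$. The multiplicative coefficients $a_t,b_t$ and the input state $x_t$ are identical across the two runs, so the $a_t x_t$ terms cancel outright.

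First I would write the difference of the two updates and cancel the shared term:
\begin{equation}
x'_{t-1}-x_{t-1}
= b_t\big(\varepsilon_\theta(x_t,t;\alpha_t)-\varepsilon_\theta(x_t,t;1)\big),
\end{equation}
where I use the notational shorthand that the trailing argument records the Q/K scaling applied inside the attention layer. Taking norms and pulling out the scalar $|b_t|$ gives $\|x'_{t-1}-x_{t-1}\|_2=|b_t|\,\|\varepsilon_\theta(\cdot;\alpha_t)-\varepsilon_\theta(\cdot;1)\|_2$.

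Next I would apply the $L_y$-Lipschitz hypothesis: since $\varepsilon_\theta$ depends on the perturbation only through the attention output $y^{(i)}$, we have
\begin{equation}
\big\|\varepsilon_\theta(\cdot;\alpha_t)-\varepsilon_\theta(\cdot;1)\big\|_2
\le L_y\,\big\|y^{(i)}(\alpha_t)-y^{(i)}(1)\big\|_2.
\end{equation}
I would then invoke the Lipschitz bound from the previous Theorem with $\alpha_1=\alpha_t$ and $\alpha_2=1$, namely $\|y^{(i)}(\alpha_t)-y^{(i)}(1)\|_2\le \tfrac12\|V_t\|_2\,\|z_t^{(i)}\|_2\,|\alpha_t-1|$. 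Substituting this into the chain and combining the scalars yields exactly
\begin{equation}
\|x'_{t-1}-x_{t-1}\|_2
\le |b_t|\,L_y\cdot\tfrac12\|V_t\|_2\,\|z_t^{(i)}\|_2\,|\alpha_t-1|,
\end{equation}
which is the claimed bound.

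The steps are individually routine once the setup is fixed, so there is no deep obstacle; the main subtlety is interpretive rather than computational. The assumption that $\varepsilon_\theta$ is ``$L_y$-Lipschitz in the attention output $y$'' implicitly treats a single head's single-query output $y^{(i)}$ as the relevant perturbation channel, whereas a real network has many heads, queries, and downstream nonlinearities. The cleanest way to stay honest is to state the bound for the idealized single-attention-output abstraction already adopted in the Preliminaries, and to read $L_y$ as the composite Lipschitz constant of the map from that attention output through the remainder of the network to $\varepsilon_\theta$. I would flag that the bound is per-query and that aggregating over queries or heads would introduce an additional summation or $\sqrt{n}$-type factor, but for the stated idealized single-step claim the three-line chain above suffices.
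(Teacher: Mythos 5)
Your proposal is correct and follows essentially the same route as the paper's proof: cancel the shared $a_t x_t$ term so the state deviation equals $|b_t|\,\|\varepsilon_\theta'-\varepsilon_\theta\|_2$, apply the $L_y$-Lipschitz assumption to pass to the attention-output difference, and invoke the preceding Lipschitz theorem with $\alpha_1=\alpha_t$, $\alpha_2=1$. Your explicit cancellation step and the closing caveat about the single-query/single-head idealization are slightly more careful than the paper's version, but the argument is the same.
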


\begin{proof}[Detailed Proof]
Let $\varepsilon_\theta'$ denote the modified noise prediction after scaling. By Lipschitz continuity:
\begin{equation}
\begin{aligned}
\| \varepsilon_\theta' - \varepsilon_\theta \|_2
&\le L_y \| y' - y \|_2 \\
&\le L_y \cdot \frac{1}{2} \|V_t\|_2 \, \| z_t^{(i)} \|_2 \, |\alpha_t - 1|.
\end{aligned}
\end{equation}

The diffusion step multiplies this perturbation by $b_t$:
\begin{equation}
\begin{aligned}
\| x'_{t-1} - x_{t-1} \|_2
&= |b_t| \, \| \varepsilon_\theta' - \varepsilon_\theta \|_2 \\
&\le |b_t| \, L_y \cdot \frac{1}{2} \|V_t\|_2 \, \| z_t^{(i)} \|_2 \, |\alpha_t - 1|.
\end{aligned}
\end{equation}
proving the proposition.
\end{proof}

\noindent\textbf{Remark.}  
This bound ensures that selective attention scaling introduces controlled perturbations, allowing smooth adjustment of semantic fidelity without destabilizing the generation process.

{ \noindent \textbf{Design implications for TI2V.} Based on the above analysis, we summarize the design principles that
guide our method. (i) \textbf{Temperature as an attention gain knob.} Scaling $Q$ or $K$ is exactly inverse-temperature control and thus offers an explicit way to strengthen or weaken the influence of
selected token groups without modifying the inputs. (ii) \textbf{Entropy reduction as decisive semantic selection.}
Increasing the temperature on a token block reduces its internal
entropy and sharpens attention onto a small set of high-logit,
semantically relevant tokens.}

\section{Details of OmitI2V Benchmark}
OmitI2V is a benchmark designed to assess the capability of generating videos from images driven by textual instructions, specifically within complex scenarios. Unlike traditional image-to-video tasks, our focus is more on ``editing'' than ``generation''. Given an image and a natural language instruction, the model outputs a video that accurately performs the specified \textit{additions, deletions, or modifications}, while preserving the identity, structure, and physical consistency.

\noindent\textbf{Task Definition.} 1)  \textbf{Operation Types:} Covering Addition, Deletion, and Modification, representing the most common human interventions in visual media. 2) \textbf{Granularity Requirements:} We specify extensible subtypes, ensuring tasks are both diagnostic and diverse. This fine granularity allows for a comprehensive assessment across multiple dimensions.

\noindent\textbf{Data Construction.} The dataset combines both real and synthetic data: 1) \textbf{reference images:} Selected open image or video dataset to ensure high resolution and clear copyright. 2) \textbf{Synthetic Enhancement:} Using GPT-4o to generate rare and extreme scenarios (e.g., severe weather, sci-fi effects) to broaden distribution coverage. 3) \textbf{Manual Curation and Annotation:} Image-instruction pairs are designed and curated by humans to ensure clear intent.

\noindent\textbf{Evaluation Methodology.} For evaluating, we employ existing metrics, such as dynamic degree and aesthetic quality in Vbench~\citep{Huang_2024_CVPR}, to assess the quality of generated videos. Notably, we do not calculate subject consistency and background consistency, given the nature of adding or removing subjects. Additionally, we introduce Visual Question Answering (VQA), where a Multimodal Large Language Model (MLLM) answers questions based on video content, thereby enhancing the comprehensiveness of the evaluation.

\noindent\textbf{Attention Analysis in Generative Models.}
Attention-based modulation has attracted increasing interest as a method to enable zero-shot image and video editing~\citep{liu2024towards}.
For image editing, prior work manipulates distinct components of the attention mechanism~\citep{hertz2022prompt,cao2023masactrl} to regulate text-image correspondence while preserving geometric and structural properties of the source content~\citep{liu2024towards,chen2024zero}.
In the video setting, these ideas are extended to enforce temporal consistency across frames: recent methods adapt cross-attention for sequence-level control~\citep{qi2023fatezero,cai2025ditctrl,jin2025realcraft,yang2025videograin} or integrate self-attention with masks derived from cross-attention features~\citep{ma2025magicstick} to steer the generative process.
In this work, we examine TI2V prompt adherence from an energy-based perspective and empirically establish a connection between attention distribution and semantic fidelity: lower attention entropy is associated with stronger semantic alignment.

\subsection{Statistical Analysis}
Figure \ref{fig:data_distribution} summarizes the composition of the OmitI2V benchmark across three axes: edit type, visual domain, and image source.
These statistics indicate that the benchmark is well-balanced along the primary task dimension and encompasses a broad spectrum of real-world and synthetic content.

\noindent\textbf{Edit-type balance.}
We enforce near-uniform sampling across the three core types.
\emph{Modification} tasks constitute 34.19\%,
\emph{Addition} tasks 33.16\%, and
\emph{Deletion} tasks 32.65\%.
This equilibrium prevents any single operation from dominating the evaluation signal and enables fair comparisons.

\noindent\textbf{Domain diversity.}
We annotate every sample with a fine-grained domain label drawn from the eight mutually-exclusive classes defined below.
These labels capture both semantic content and context, enabling granular diagnostics of model robustness.
\textbf{Living Beings}
Any depiction of biological organisms, including but not limited to humans (portraits, crowd scenes, daily activities), domestic and wild animals, and anthropomorphic creatures.
The defining criterion is the presence of animate life as the primary subject.
\textbf{Arts \& Entertainment}
Creative or performative artifacts that are either hand-drawn or computer-generated, such as cartoons, anime, video-game assets, CGI sequences, virtual idols, and stylized artistic renditions.
Realistic photographs of artworks in situ are excluded.
\textbf{Nature \& Environment}
Representations of the natural world, spanning landscapes, seascapes, forests, deserts, weather phenomena, macro flora, and non-anthropocentric fauna in their ecological context.
Urban parks are classified here only when the natural element dominates the composition.
\textbf{Structures}
Man-made architectural entities, from iconic landmarks and historical edifices to vernacular housing and industrial facilities.
Interior shots are included when architectural design is the focal element.
\textbf{Objects)}
Inanimate physical items, ranging from everyday household articles and consumer products to vehicles, tools, and brand logos.
Items are labeled OBJ when they constitute the primary subject rather than mere scene fillers.
\textbf{Technological \& Virtual Elements}
Artifacts of modern technology and digital culture, including user-interface screenshots, HUD overlays, AR/VR visualizations, holographic projections, and abstract algorithmic renderings.
\textbf{Food \& Necessities}
Edible goods, beverages, cooking processes, and essential daily commodities.
Prepared dishes, raw ingredients, and packaged products are all subsumed under this class.
\textbf{Text \& Communication}
Static or dynamic textual content designed for human communication, such as signage and logos, provided that text is the dominant visual element.

\noindent\textbf{Provenance breakdown.}
Real photographs dominate the collection (75.58\%).
Animation frames contribute 18.25\%, and purely synthetic images rendered or hallucinated by GPT-4o make up 4.63\%.
This mix exposes models to both natural statistics and out-of-distribution, synthetic edge cases.

\begin{figure*}[h]
    \centering
    \includegraphics[width=\textwidth]{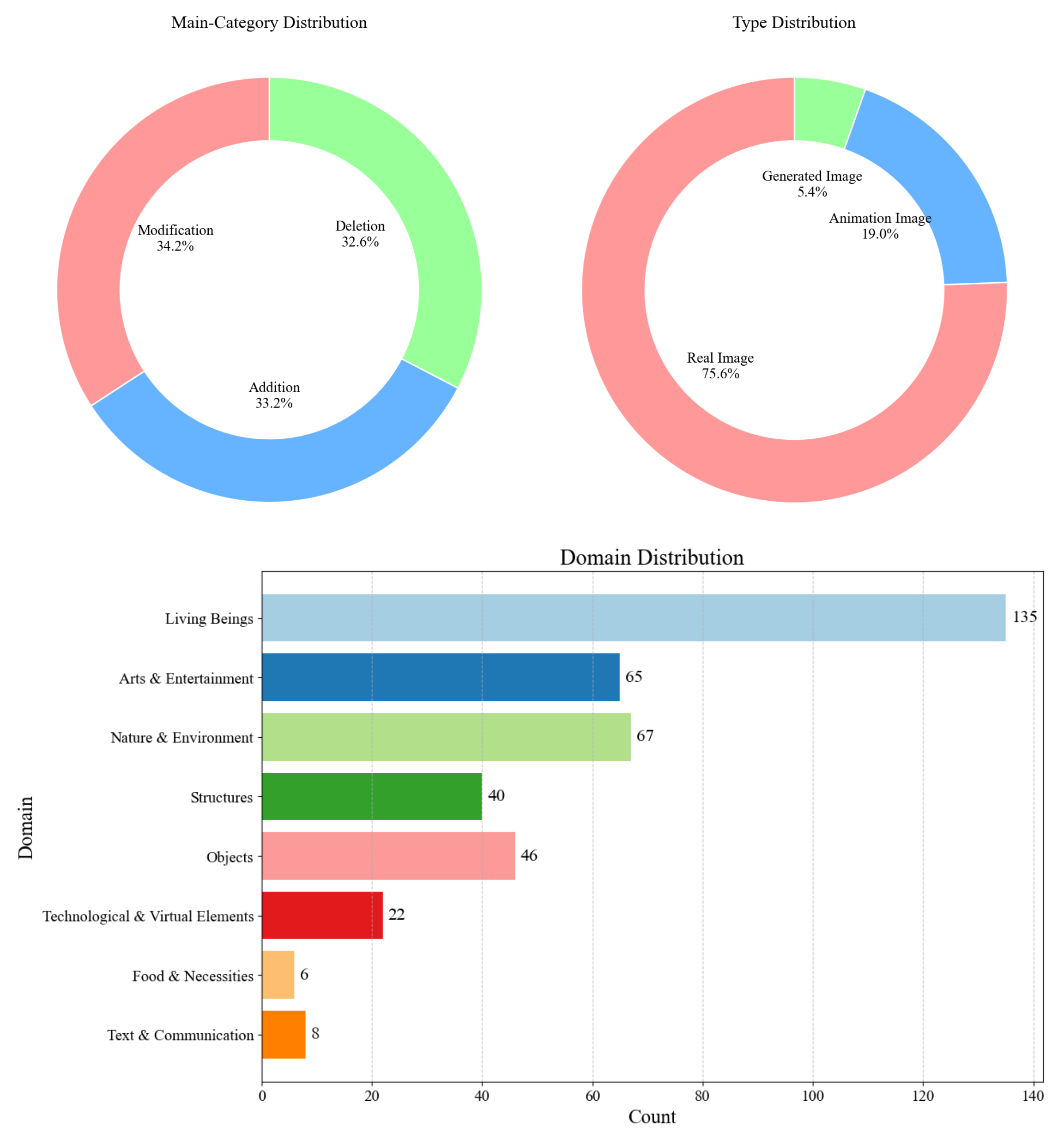}
    \caption{Statistical distributions of the OmitI2V benchmark.}
    \label{fig:data_distribution}
\end{figure*}

\subsection{Qualitative Visualization of Samples}
In this section, we delve into the qualitative visualization of the samples (Figure~\ref{fig:benchmark_sample1}-Figure~\ref{fig:OmitI2V_sample_6}). The description of each sample contains clear expected changes and key elements. This information not only aids in understanding the content depicted in the images but also highlights the critical points of change within the visualization. This interpretive approach ensures the uniformity and accuracy of the sample presentation, allowing each representative change to clearly convey its core concept.

Additionally, the samples are categorized into different main and sub-categories. This organizational method enables a systematic approach to browsing and analyzing the samples. For specific domains, such as human, nature, or animation, this categorization helps us pinpoint and comprehend factors that affect particular types of images.

The questions and answers in the samples further explore various aspects of the images, ranging from action correctness to object presence and dynamic changes. These questions assist in evaluating the standards of the images and their transformations, allowing observers to analyze the sample performance from an evaluative perspective.
\begin{figure*}[h!]
    \centering
    \begin{tcolorbox}[colback=white, colframe=black, before=\par\bigskip, width=\textwidth]
        \centering
        \includegraphics[width=0.6\textwidth]{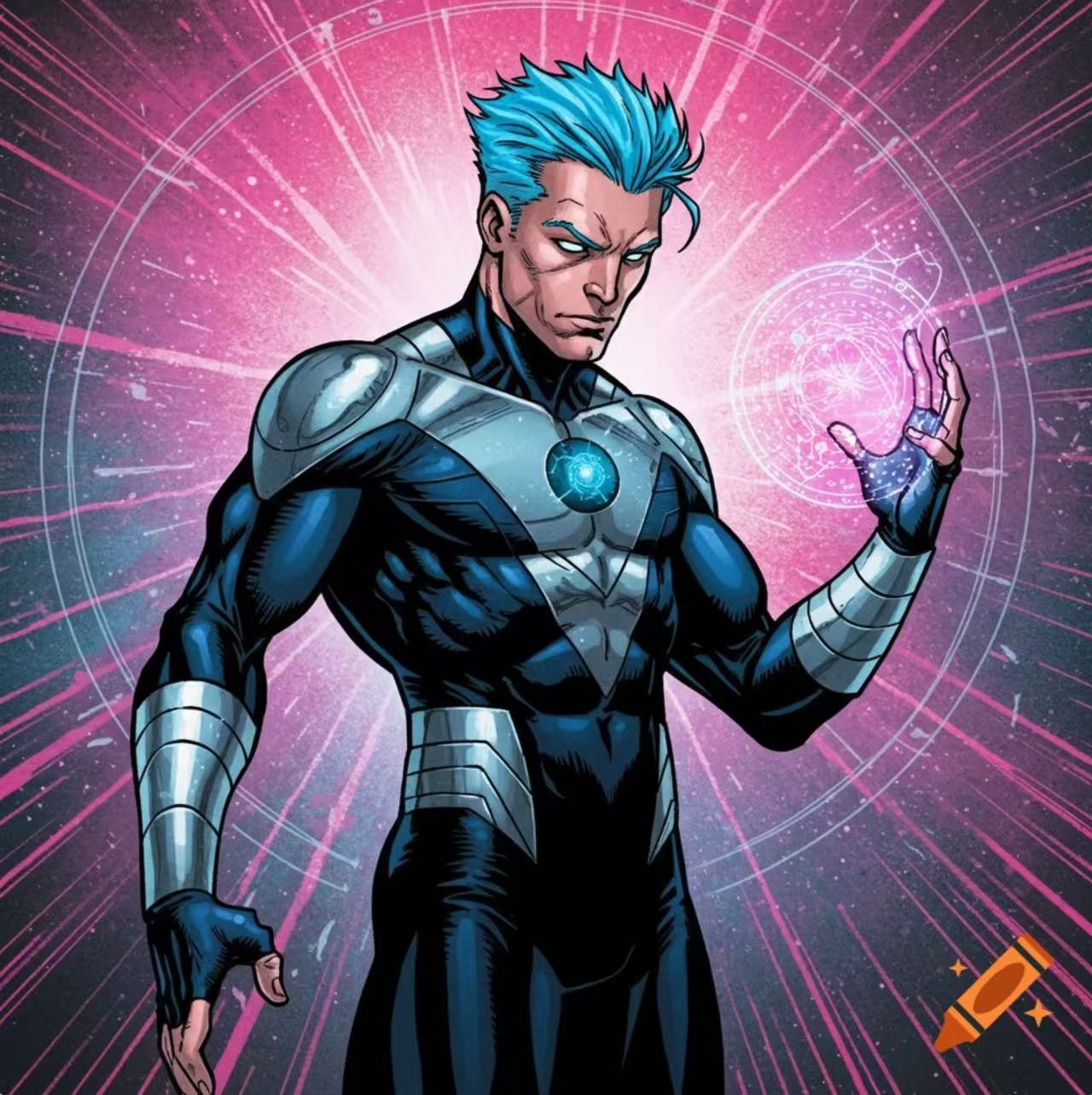} 
        
        \begin{tcolorbox}[colback=white, colframe=gray, title=JSON Sample, width=\textwidth, boxrule=0.5mm]
            \small
            \textbf{ID:} sample\_0\\
            \textbf{Image-path:} OmitI2V/modification/pose/human/1.jpg\\
            \textbf{Prompt:} The guy pushes out the ball with superpower.\\
            \textbf{Expected change:} The man's pose changes to show him pushing the energy orb forward.\\
            \textbf{Key:} man pushes an energy ball\\
            \textbf{Main Category:} modification\\
            \textbf{Sub-category:} pose\\
            \textbf{Domain:} human\\
            \textbf{Type:} generated image\\
            \textbf{Resolution:} 1280x1280\\
            \textbf{Aspect Ratio:} 1.0\\
            \textbf{Questions:}\\
            \begin{enumerate}[label=\arabic*.]
                \item \textbf{Question:} Does the man's pose change to show him pushing forward?\\
                \quad \textbf{Expected Answer:} yes\\
                \quad \textbf{Category:} action correctness\\
                
                \item \textbf{Question:} Does the energy orb move forward as the man pushes it?\\
                \quad \textbf{Expected Answer:} yes\\
                \quad \textbf{Category:} dynamic changes\\
                
                \item \textbf{Question:} Is the man standing still throughout the video?\\
                \quad \textbf{Expected Answer:} no\\
                \quad \textbf{Category:} spatial relationship\\
            \end{enumerate}
        \end{tcolorbox}
    \end{tcolorbox}
    \caption{Sample (``Modification'' task) from the OmitI2V benchmark.}
    \label{fig:benchmark_sample1}
\end{figure*}
\begin{figure*}[h!]
    \centering
    \begin{tcolorbox}[colback=white, colframe=black, before=\par\bigskip, width=\textwidth]
        \centering
        \includegraphics[width=0.6\textwidth]{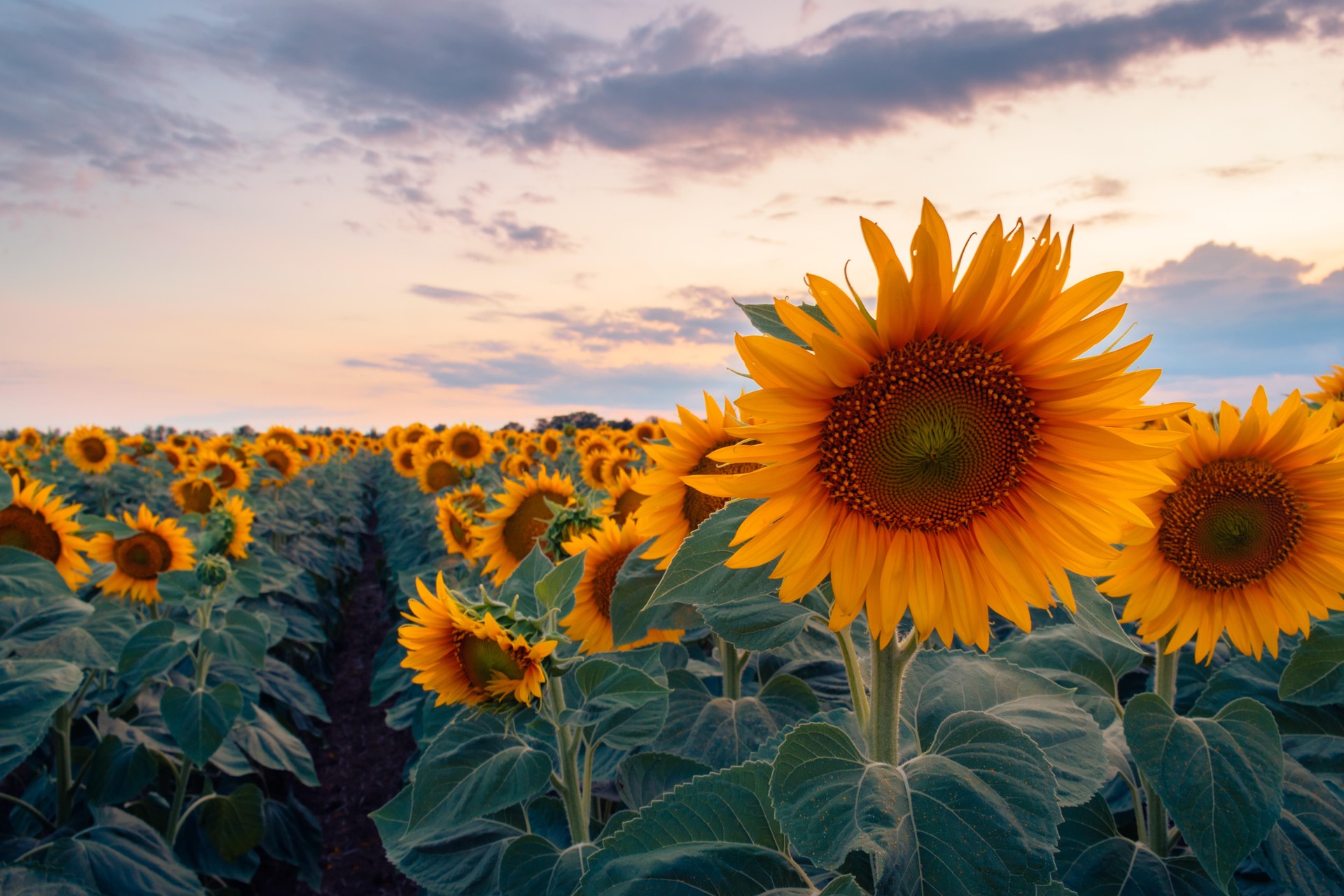} 
        
        \begin{tcolorbox}[colback=white, colframe=gray, title=JSON Sample, width=\textwidth, boxrule=0.5mm]
            \small
            \textbf{ID:} sample\_118\\
            \textbf{Image-path:} OmitI2V/modification/style/plant/2.jpg\\
            \textbf{Prompt:} The sunflower field gradually shifts into an anime-style rendering, colors becoming more vibrant and outlines turning bold and stylized.\\
            \textbf{Expected change:} The realistic sunflowers slowly transform into anime-style flowers with exaggerated textures, bright saturated colors, and defined outlines, while the background remains unchanged.\\
            \textbf{Key:} sunflowers to anime style\\
            \textbf{Main Category:} modification\\
            \textbf{Sub-category:} style\\
            \textbf{Domain:} plant\\
            \textbf{Type:} real image\\
            \textbf{Change:} yes\\
            \textbf{Resolution:} 1920x1280\\
            \textbf{Aspect Ratio:} 1.5\\
            \textbf{Questions:}\\
            \begin{enumerate}[label=\arabic*.]
                \item \textbf{Question:} Do the sunflowers change into an anime style?\\
                \quad \textbf{Expected Answer:} yes\\
                \quad \textbf{Category:} dynamic changes\\
                
                \item \textbf{Question:} Are the colors of the sunflowers more vibrant after the transformation?\\
                \quad \textbf{Expected Answer:} yes\\
                \quad \textbf{Category:} attribute accuracy\\
                
                \item \textbf{Question:} Do the outlines of the sunflowers become less defined after the transformation?\\
                \quad \textbf{Expected Answer:} no\\
                \quad \textbf{Category:} attribute accuracy\\
                
                \item \textbf{Question:} Do realistic textures on the sunflowers remain unchanged during the transformation?\\
                \quad \textbf{Expected Answer:} no\\
                \quad \textbf{Category:} attribute accuracy\\
            \end{enumerate}
        \end{tcolorbox}
    \end{tcolorbox}
    \caption{Sample (``Modification'' task) from the OmitI2V benchmark.}
    \label{fig:OmitI2V_sample_2}
    \vspace{-15pt}
\end{figure*}
\begin{figure*}[h!]
    \centering
    \begin{tcolorbox}[colback=white, colframe=black, before=\par\bigskip, width=\textwidth]
        \centering
        \includegraphics[width=0.5\textwidth]{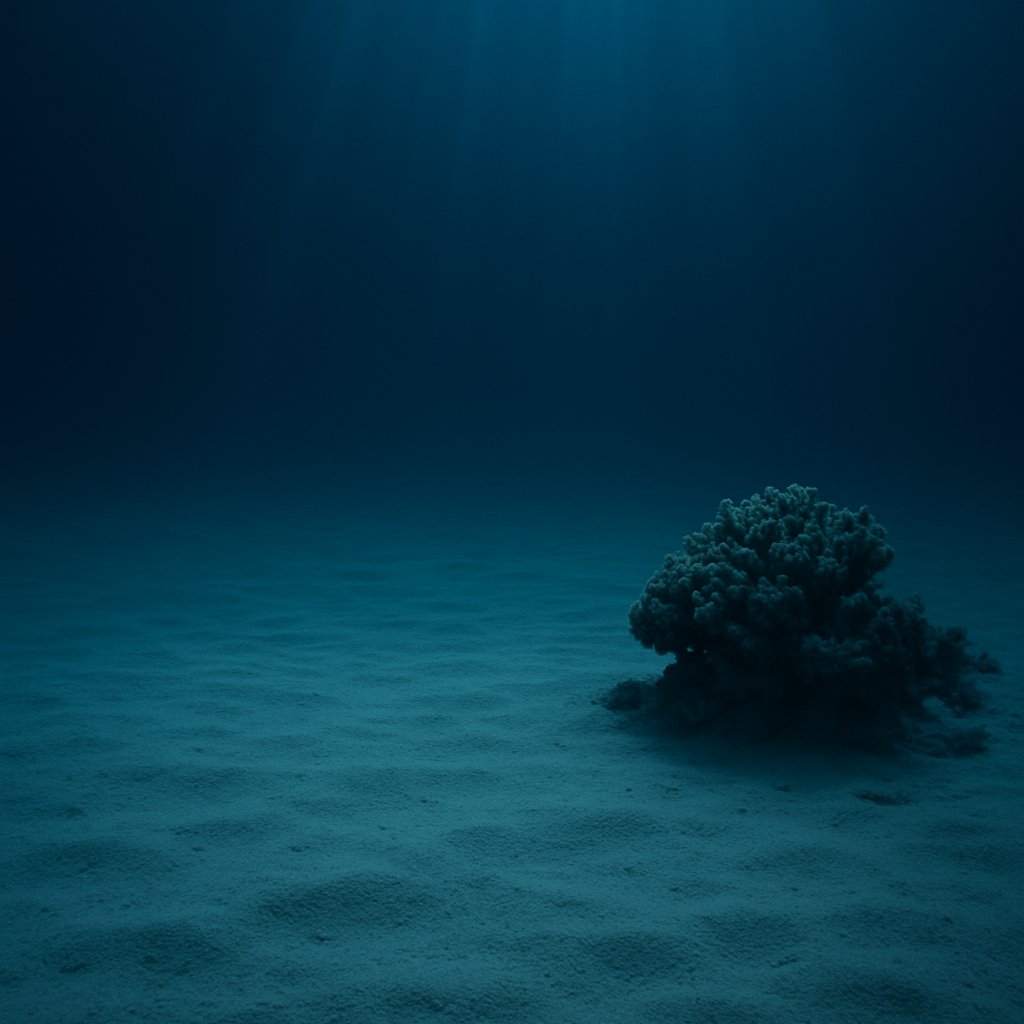} 
        
        \begin{tcolorbox}[colback=white, colframe=gray, title=JSON Sample, width=\textwidth, boxrule=0.5mm]
            \small
            \textbf{ID:} sample\_7\\
            \textbf{Image-path:} OmitI2V/addition/appearance/animal/2.png\\
            \textbf{Prompt:} Three small fish dart swiftly from the left side of the frame, swimming past vibrant coral and disappearing off to the right.\\
            \textbf{Expected change:} Three small fish quickly appear from the left side of the screen and swim through the coral.\\
            \textbf{Key:} Three small fish, swim quickly past, past the coral\\
            \textbf{Main Category:} addition\\
            \textbf{Sub-category:} appearance\\
            \textbf{Domain:} animal\\
            \textbf{Type:} generated image\\
            \textbf{Change:} yes\\
            \textbf{Resolution:} 1024x1024\\
            \textbf{Aspect Ratio:} 1.0\\
            \textbf{Questions:}\\
            \begin{enumerate}[label=\arabic*.]
                \item \textbf{Question:} Do the fish swim slowly past the coral?\\
                \quad \textbf{Expected Answer:} no\\
                \quad \textbf{Category:} action correctness\\
                
                \item \textbf{Question:} Is the coral vibrant in color?\\
                \quad \textbf{Expected Answer:} yes\\
                \quad \textbf{Category:} attribute accuracy\\
                
                \item \textbf{Question:} Do the fish appear from the right side of the frame?\\
                \quad \textbf{Expected Answer:} no\\
                \quad \textbf{Category:} spatial relationship\\
                
                \item \textbf{Question:} Do the fish swim past the coral and then disappear off to the right?\\
                \quad \textbf{Expected Answer:} yes\\
                \quad \textbf{Category:} dynamic changes\\
                
                \item \textbf{Question:} Are there more than three small fish in the video?\\
                \quad \textbf{Expected Answer:} no\\
                \quad \textbf{Category:} attribute accuracy\\
            \end{enumerate}
        \end{tcolorbox}
    \end{tcolorbox}
    \caption{Sample (``Addition'' task) from the OmitI2V benchmark.}
    \label{fig:OmitI2V_sample_3}
\end{figure*}
\begin{figure*}[h!]
    \centering
    \begin{tcolorbox}[colback=white, colframe=black, before=\par\bigskip, width=\textwidth]
        \centering
        \includegraphics[width=0.6\textwidth]{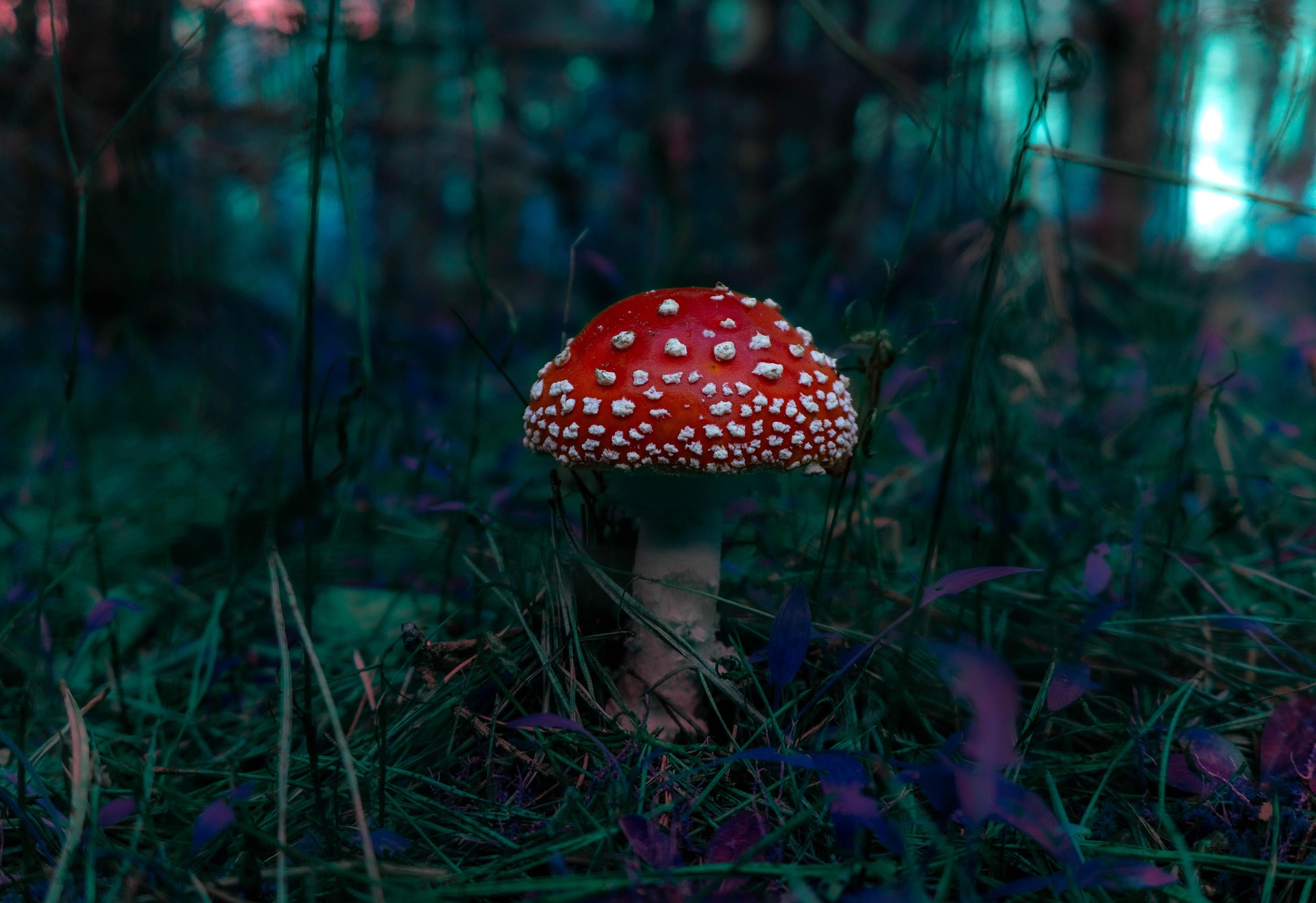} 
        
        \begin{tcolorbox}[colback=white, colframe=gray, title=JSON Sample, width=\textwidth, boxrule=0.5mm]
            \small
            \textbf{ID:} sample\_118\\
            \textbf{Image-path:} OmitI2V/addition/object/plant/1.jpg\\
            \textbf{Prompt:} A cactus suddenly sprouts and grows tall next to the mushroom, its spines and green stems appearing as it rises from the ground.\\
            \textbf{Expected change:} A cactus suddenly sprouts beside the mushroom, growing taller with its spines and green stems clearly forming as it emerges from the ground.\\
            \textbf{Key:} cactus grows, next to mushroom\\
            \textbf{Main Category:} addition\\
            \textbf{Sub-category:} object\\
            \textbf{Domain:} plant\\
            \textbf{Type:} real image\\
            \textbf{Change:} yes\\
            \textbf{Resolution:} 1024x1024\\
            \textbf{Aspect Ratio:} 1.46\\
            \textbf{Questions:}\\
            \begin{enumerate}[label=\arabic*.]
                \item \textbf{Question:} Does the cactus grow from the ground up in the video?\\
                \quad \textbf{Expected Answer:} no\\
                \quad \textbf{Category:} action correctness\\
                
                \item \textbf{Question:} Is there a cactus growing next to a tree in the video?\\
                \quad \textbf{Expected Answer:} no\\
                \quad \textbf{Category:} attribute accuracy\\
                
                \item \textbf{Question:} Does the mushroom grow taller than the cactus in the video?\\
                \quad \textbf{Expected Answer:} no\\
                \quad \textbf{Category:} attribute accuracy\\
                
                \item \textbf{Question:} Do multiple cacti grow next to the mushroom in the video?\\
                \quad \textbf{Expected Answer:} yes\\
                \quad \textbf{Category:} object presence\\
            \end{enumerate}
        \end{tcolorbox}
    \end{tcolorbox}
    \caption{Representative sample (``Addition'' task) from the OmitI2V benchmark.}
    \label{fig:OmitI2V_sample_4}
\end{figure*}
\begin{figure*}[h!]
    \centering
    \begin{tcolorbox}[colback=white, colframe=black, before=\par\bigskip, width=\textwidth]
        \centering
        \includegraphics[width=0.6\textwidth]{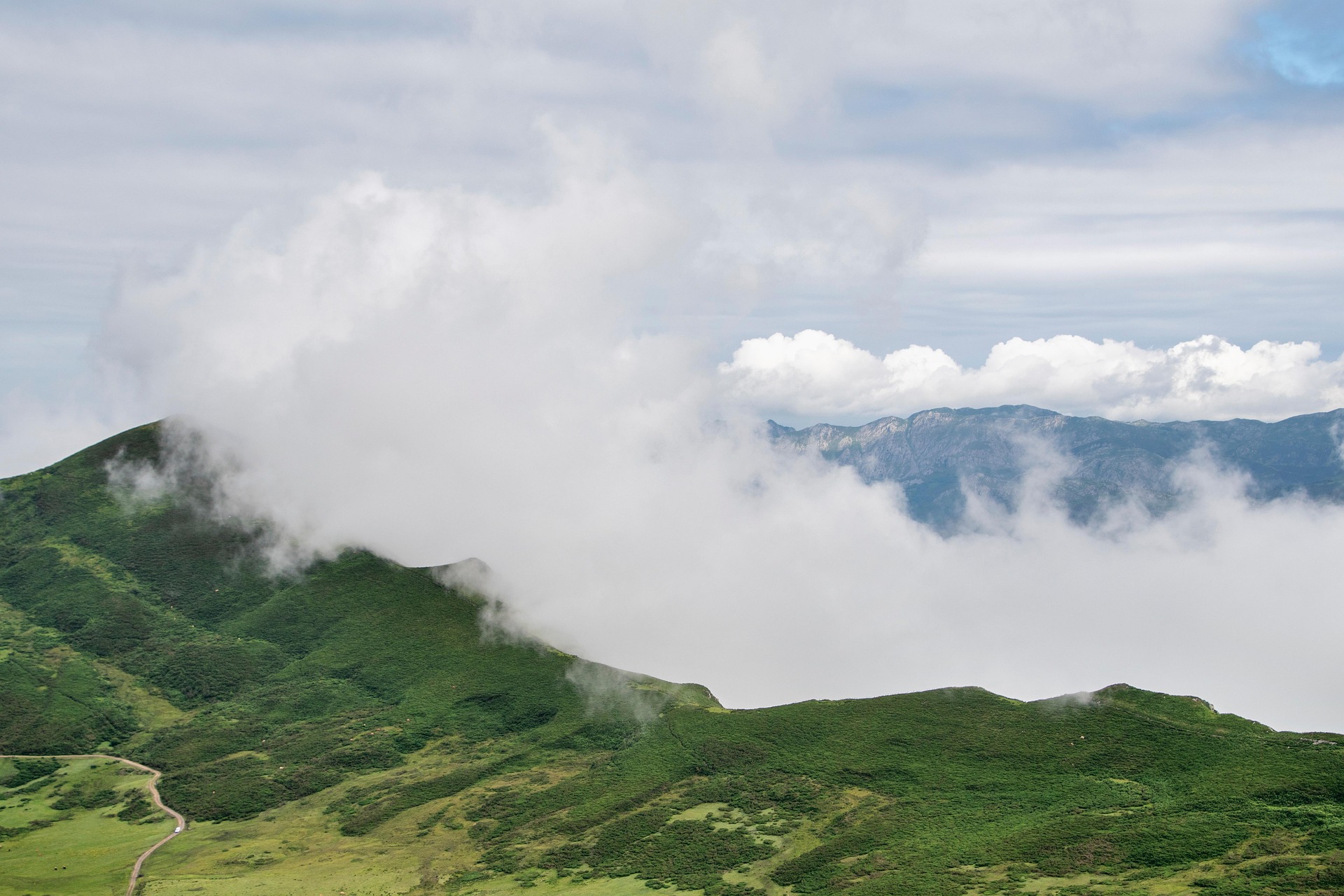} 
        
        \begin{tcolorbox}[colback=white, colframe=gray, title=JSON Sample, width=\textwidth, boxrule=0.5mm]
            \small
            \textbf{ID:} sample\_112\\
            \textbf{Image-path:} OmitI2V/deletion/vanish/nature/2.jpg\\
            \textbf{Prompt:} The lush green mountain gradually erodes and disappears, leaving behind rolling sand dunes and a barren desert landscape.\\
            \textbf{Expected change:} The green vegetation and rocky outcrops of the mountain fade away until only dunes of sand remain.\\
            \textbf{Key:} mountain disappears, desert appears\\
            \textbf{Main Category:} deletion\\
            \textbf{Sub-category:} vanish\\
            \textbf{Domain:} nature\\
            \textbf{Type:} real image\\
            \textbf{Change:} yes\\
            \textbf{Resolution:} 1920x1280\\
            \textbf{Aspect Ratio:} 1.5\\
            \textbf{Questions:}\\
            \begin{enumerate}[label=\arabic*.]
                \item \textbf{Question:} Does the mountain remain visible throughout the video?\\
                \quad \textbf{Expected Answer:} no\\
                \quad \textbf{Category:} object presence\\
                
                \item \textbf{Question:} Do sand dunes appear as the mountain erodes?\\
                \quad \textbf{Expected Answer:} yes\\
                \quad \textbf{Category:} action correctness\\
                
                \item \textbf{Question:} Is the landscape at the end of the video primarily composed of lush green vegetation?\\
                \quad \textbf{Expected Answer:} no\\
                \quad \textbf{Category:} attribute accuracy\\
                
                \item \textbf{Question:} Are there any rocky outcrops visible after the mountain has eroded?\\
                \quad \textbf{Expected Answer:} no\\
                \quad \textbf{Category:} object presence\\
                
                \item \textbf{Question:} Does the desert landscape gradually form before the mountain disappears?\\
                \quad \textbf{Expected Answer:} yes\\
                \quad \textbf{Category:} dynamic changes\\
            \end{enumerate}
        \end{tcolorbox}
    \end{tcolorbox}
    \caption{Sample (``Deletion'' task) from the OmitI2V benchmark.}
    \label{fig:OmitI2V_sample_5}
\end{figure*}
\begin{figure*}[h!]
    \centering
    \begin{tcolorbox}[colback=white, colframe=black, before=\par\bigskip, width=\textwidth]
        \centering
        \includegraphics[width=0.6\textwidth]{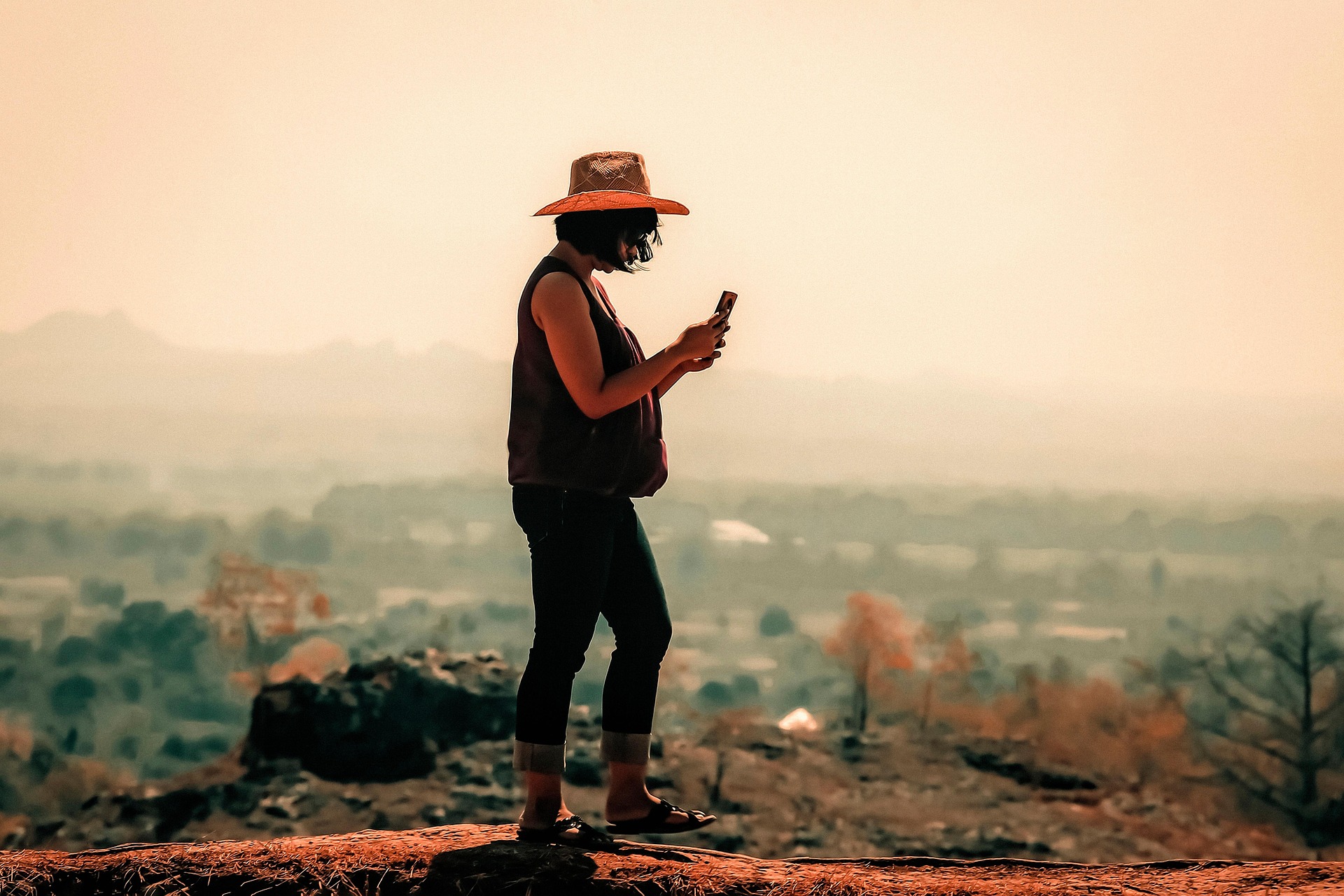}
        
        \begin{tcolorbox}[colback=white, colframe=gray, title=JSON Sample, width=\textwidth, boxrule=0.5mm]
            \small
            \textbf{ID:} sample\_121\\
            \textbf{Image-path:} OmitI2V/deletion/vanish/human/2.jpg\\
            \textbf{Prompt:} The woman gradually disappeared as she crouched down\\
            \textbf{Expected change:} The woman gradually disappeared as she crouched down\\
            \textbf{Key:} duck egg appears\\
            \textbf{Main Category:} addition\\
            \textbf{Sub-category:} object\\
            \textbf{Domain:} human\\
            \textbf{Type:} real image\\
            \textbf{Resolution:} 1920x1280\\
            \textbf{Aspect Ratio:} 1.5\\
            \textbf{Questions:}\\
            \begin{enumerate}[label=\arabic*.]
                \item \textbf{Question:} Does the woman suddenly disappear?\\
                \quad \textbf{Expected Answer:} no\\
                \quad \textbf{Category:} action correctness\\
                
                \item \textbf{Question:} Does the woman crouch down as she disappears?\\
                \quad \textbf{Expected Answer:} yes\\
                \quad \textbf{Category:} action correctness\\
                
                \item \textbf{Question:} Is a duck egg visible in the scene at any point?\\
                \quad \textbf{Expected Answer:} yes\\
                \quad \textbf{Category:} object presence\\
                
                \item \textbf{Question:} Does the woman's disappearance happen instantly without her crouching down?\\
                \quad \textbf{Expected Answer:} no\\
                \quad \textbf{Category:} dynamic changes\\
                
                \item \textbf{Question:} Does the woman remain fully visible throughout the video?\\
                \quad \textbf{Expected Answer:} no\\
                \quad \textbf{Category:} dynamic changes\\
            \end{enumerate}
        \end{tcolorbox}
    \end{tcolorbox}
    \caption{Sample (``Deletion'' task) from the OmitI2V benchmark.}
    \label{fig:OmitI2V_sample_6}
\end{figure*}

\section{Experimental Setup}
We evaluate semantic negligence in TI2V generation using our \textbf{OmitI2V} benchmark. More experiments, including evaluations on other I2V benchmarks, hyperparameter ablations, efficiency comparisons, and qualitative visualizations, are provided in \textit{Appendix}.

\noindent\textbf{Baseline models.}  
We select two representative TI2V models to cover the main architectural lineages: \textbf{FramePack} (MM-DiT)~\citep{zhang2025packing} concatenates multi-modal tokens, while \textbf{Wan2.1} (DiT)~\citep{wan2025wan} factorizes image and text cross-attention.

\noindent\textbf{Evaluation metrics.}  
We evaluate semantic negligence in TI2V generation using our \textbf{OmitI2V} benchmark. We adopt metrics from VBench~\citep{Huang_2024_CVPR}, including dynamic degree and aesthetic quality. To assess semantic alignment, we introduce a Visual Question Answering (VQA) protocol: a multimodal large language model (Qwen2.5-VL-32B) answers questions about the video content, providing an additional, interpretable measure of semantic correctness. We additionally employ the ViCLIP score as a text semantic matching metric for ablation experiments.

\section{Details about Baseline}
We select \textbf{FramePack} and \textbf{Wan2.1} as baselines to cover the two dominant architectural lineages in current diffusion-based video models.

\noindent\textbf{MM-DiT family.}
FramePack instantiates the MM-DiT architecture, which interleaves multi-modal (text–image–video) tokens within a single transformer.
Beyond state-of-the-art short-form editing quality, FramePack uniquely supports autoregressive long-video generation; this capability is essential for stress-testing temporal coherence when edits propagate over extended horizons.

\noindent\textbf{DiT family.}
Wan2.1 adopts the standard DiT backbone that factorizes spatial and temporal attention.
Its simplicity, parameter efficiency, and widespread adoption make it a representative baseline for the DiT lineage.
Together, these two models span the principal design choices—joint versus factorized attention, short versus autoregressive generation—thereby establishing a rigorous and reproducible reference for OmitI2V evaluations.

\clearpage
\section{Implementation Details}
\label{sec:implementation}

\subsection{Attention Scaling Modulation.} 
We implement both \emph{scalar scaling} and \emph{energy-based modulation} within the attention layers. For scalar scaling, a fixed coefficient $\gamma > 1$ is multiplied to either the query or key embeddings. For energy-based modulation, the scaling factor is adaptively computed from the attention logits via a monotonic function, strengthening focus when attention is diffuse. 

\subsection{Implementation Details of Block-Level Foreground Analysis}
\label{sec:block_level_appendix}

To examine how different transformer blocks distribute their focus between foreground and background, we conduct a block-level study on attention behavior in FramePack.  

\paragraph{Token extraction.}  
From each self-attention layer, we record the token representations $\mathbf{Z} \in \mathbb{R}^{B \times L \times D}$, where $B$ is the batch size, $L$ is the number of spatio-temporal tokens, and $D$ is the embedding dimension. Tokens are grouped into $T$ segments, each corresponding to a video frame. Frame-level attention scores are then obtained by row-wise summation of the attention matrix.  

\paragraph{Foreground segmentation.}  
To identify foreground regions, we use the latent noise estimate $\tilde{\epsilon} \in \mathbb{R}^{B \times D \times T \times H \times W}$. We apply PCA~\citep{abdi2010principal} along the channel axis and retain the top three components, yielding pseudo-RGB projections. These projections are passed into SAM2~\citep{ravi2024sam} to generate binary masks that separate foreground from background.  

\paragraph{Foreground ratio.}  
Let $\mathbf{M} \in \mathbb{R}^{L \times L}$ denote the attention matrix of a block. For each token $u$, its aggregated attention score is defined as  
\begin{equation}
s_u = \frac{1}{L} \sum_{v=1}^{L} M_{uv}.
\end{equation}
Tokens with $s_u$ larger than a preset threshold are regarded as high-attention tokens. The fraction of these tokens lying inside the foreground mask is defined as the \emph{foreground ratio} $\rho^{(b)}$ for block $b$. A larger $\rho^{(b)}$ implies preference for foreground regions.  

We average $\rho^{(b)}$ across 50 diverse prompts to obtain a stable estimate of each block’s attention bias. The results are in Table~\ref{tab:foreground_blocks}.

\begin{table*}[t]
\centering
\small
\begin{tabular}{l|l}
\toprule
Model & Foreground-sensitive blocks \\ 
\midrule
FramePack & \{0, 2, 4, 5, 6, 7, 10, 11, 12, 13, 14, 15, 16, 17, 18, 19, 20, 21, 23, 25, 26, 27, 31, 33\} \\
FramePack F1   & \{0, 1, 2, 3, 4, 5, 6, 7, 12, 13, 14, 15, 16, 18, 19, 20, 23, 25, 29, 32, 36, 37, 38, 39\} \\
Wan2.1 & \{0, 2, 3, 4, 5, 6, 7, 10, 11, 12, 13, 14, 15, 16, 17, 18, 19, 20, 25, 26, 27, 28, 29, 30, 31, 32, 37, 38\} \\
\bottomrule
\end{tabular}
\caption{\textbf{Foreground-sensitive block indices.} 
We report the blocks identified as foreground-sensitive for FramePack (single-block setting), FramePack-F1, and Wan2.1. These blocks are determined via the foreground ratio analysis described in Section~\ref{sec:block_level_appendix}.}
\label{tab:foreground_blocks}
\end{table*}

\subsection{Implementation Details of Step-Level Scheduling}
\emph{Step-level scheduling (SGS)} activates modulation only within a predefined interval $[t_{\text{low}},t_{\text{high}}]$ of the $T$-step denoising trajectory. In experiments, we instantiate three canonical windows corresponding to \textbf{early}, \textbf{middle}, and \textbf{late} phases:
\[
\begin{aligned}
m_{\mathrm{early}}(t)  &= \mathbf{1}\!\left[\tfrac{t}{T}\in[0.00,0.30]\right],\\
m_{\mathrm{middle}}(t) &= \mathbf{1}\!\left[\tfrac{t}{T}\in[0.35,0.65]\right],\\
m_{\mathrm{late}}(t)   &= \mathbf{1}\!\left[\tfrac{t}{T}\in[0.70,1.00]\right].
\end{aligned}
\]
These masks activate ASM over the first $30\%$, the central $30\%$, and the final $30\%$ of steps, respectively; the remaining $10\%$ serves as an inactive buffer to avoid boundary artifacts. Unless otherwise noted, we report results for all three schedules and an all-steps variant;  based on ablations (Table~\ref{tab: Ablation-Block-level Guidance Scheduling}), we adopt the \emph{early-step} schedule as the default.

\subsection{Pseudo-code.} 
The procedures are summarized in Algorithm~\ref{alg:scalar-scaling} and Algorithm~\ref{alg:energy-modulation}, which illustrate how block and step level scheduling are combined with scalar scaling or energy-based modulation.

\begin{algorithm}[t]
\caption{Selective Scalar Scaling (with BGS and SGS)}
\label{alg:scalar-scaling}
\KwIn{Query $Q$, Key $K$, Value $V$; scaling factor $\gamma > 1$;\\
      step interval $[t_{\text{low}}, t_{\text{high}}]$; block threshold $\tau$}
\KwOut{Modulated attention output}

\For{each denoising step $t$}{
    \If{$t \in [t_{\text{low}}, t_{\text{high}}]$}{\tcp{Step-level scheduling}
        \For{each transformer block $l$}{
            Compute foreground ratio $r^{(l)}$\;
            \If{$r^{(l)} > \tau$}{\tcp{Block-level scheduling}
                \If{modulate Query}{
                    $Q^\prime \gets \gamma \cdot Q$, \quad $K^\prime \gets K$\;
                }
                \ElseIf{modulate Key}{
                    $Q^\prime \gets Q$, \quad $K^\prime \gets \gamma \cdot K$\;
                }
                Compute attention:
                \[
                \text{Attn}^{(l)} = \softmax\!\left(\tfrac{Q^\prime (K^\prime)^\top}{\sqrt{d_k}}\right)V
                \]
            }
        }
    }
}
\end{algorithm}

\begin{algorithm}[t]
\caption{Selective Energy-based Modulation (with BGS and SGS)}
\label{alg:energy-modulation}
\KwIn{Query $Q$, Key $K$, Value $V$; monotonic function $f(\cdot)$;\\
      step interval $[t_{\text{low}}, t_{\text{high}}]$; block threshold $\tau$}
\KwOut{Modulated attention output}

\For{each denoising step $t$}{
    \If{$t \in [t_{\text{low}}, t_{\text{high}}]$}{
        \For{each transformer block $l$}{
            Compute foreground ratio $r^{(l)}$\;
            \If{$r^{(l)} > \tau$}{
                Compute logits $z = \tfrac{QK^\top}{\sqrt{d_k}}$\;
                Compute adaptive scaling $\gamma = f(z)$ (Equation~\ref{energy-formula})\;
                Apply modulation: $K^\prime \gets \gamma \cdot K$ (or $Q^\prime$)\;
                Compute attention:
                \[
                \text{Attn}^{(l)} = \softmax\!\left(\tfrac{Q^\prime (K^\prime)^\top}{\sqrt{d_k}}\right)V
                \]
            }
        }
    }
}
\end{algorithm}

\section{Discussion}
\subsection{Exploring the Effect of Different Blur Levels on Generation Results}

To better understand the role of image perturbation, we vary the degree of Gaussian blur applied to the input image and analyze its effect on generation quality. As shown in Figure~\ref{fig: blur}, increasing the blur level leads to stronger motion and more complex subject dynamics, but at the cost of degraded visual fidelity. Conversely, mild blur provides a balanced improvement, enhancing semantic alignment while largely preserving perceptual quality. This highlights a trade-off between motion richness and aesthetic sharpness, suggesting that blur can be interpreted as a controllable proxy for motion strength.

\begin{figure*}[t!]
    \centering
    \includegraphics[width=\textwidth]{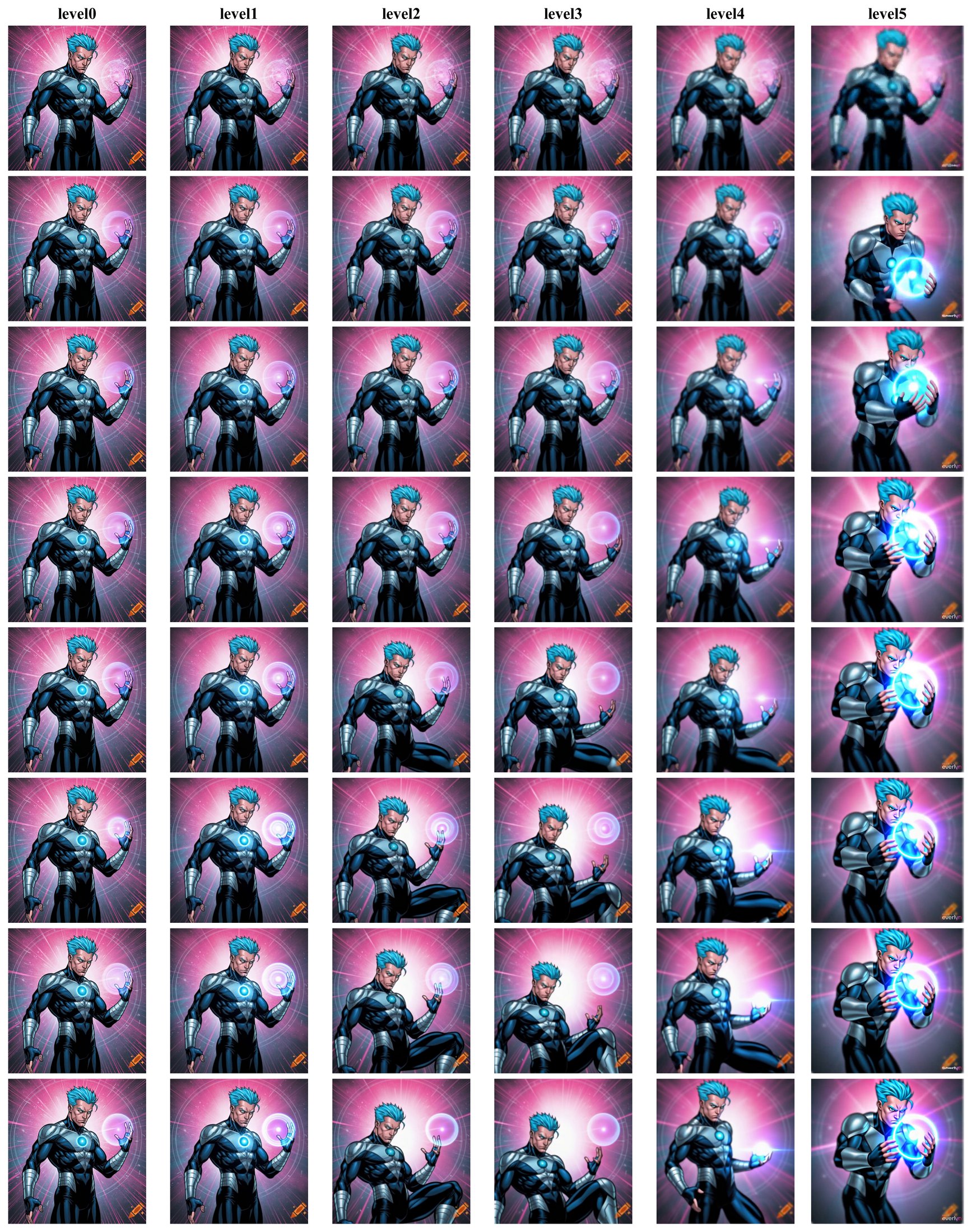} 
    \caption{Effect of varying Gaussian blur levels on I2V generation. Higher blur increases motion amplitude and subject complexity, but reduces visual fidelity. Mild blur improves semantic alignment while largely preserving perceptual quality.}
    \label{fig: blur}
\end{figure*}

\subsection{Ablation on the Scaling Coefficient}
\label{subsec:scaling-ablation}

We also conduct ablation studies on the effect of the scaling coefficient applied in our guidance mechanism. The quantitative results are summarized in Figure~\ref{fig: blur}. We observe a clear trend: as the scaling coefficient increases, both semantic fidelity and dynamic degree consistently improve, indicating stronger alignment with the conditioning signal. However, this improvement comes at the cost of aesthetic quality, which degrades as the coefficient grows. This trade-off highlights the importance of choosing a moderate coefficient that balances semantic consistency with visual appeal. In practice, we select a coefficient that achieves a satisfactory compromise, ensuring faithful semantic control without overly sacrificing the overall aesthetics of the generated video.

\paragraph{Transferability of the scaling factor.}
This monotonic behavior is not incidental but follows directly from our analysis: Lemma~2 (entropy monotonicity under scaling, Appendix~\ref{subsec:lemma-entropy}) shows that increasing the scaling factor $\gamma$ monotonically reduces the conditioning entropy, so the semantic--quality trade-off is predictable by design. Table~\ref{tab: Ablation-scalar coefficient} confirms this empirically on both backbones: larger $\gamma$ monotonically improves semantic fidelity while aesthetic quality decreases only gradually. As a result, $\gamma{=}1.35$ transfers across backbones without backbone-specific search, analogous to selecting the guidance strength in classifier-free guidance.

\begin{table}[t]
    \centering
    \resizebox{\linewidth}{!}{%
    \begin{tabular}{l|ccc|ccc|cc}
        \toprule
        \multirow{2}{*}{\textbf{Method}} & \multicolumn{3}{c|}{\textbf{Semantic Alignment Evaluation}} & \multicolumn{3}{c|}{\textbf{ViCLIP Score}} & \multicolumn{2}{c}{\textbf{Visual Quality Evaluation}} \\
        \cmidrule(lr){2-4} \cmidrule(lr){5-7} \cmidrule(lr){8-9}
        & Modification $\uparrow$  & Addition $\uparrow$ & Deletion $\uparrow$ & Modification $\uparrow$  & Addition $\uparrow$ & Deletion $\uparrow$ & Dynamic Degree $\uparrow$ & Aesthetic Quality $\uparrow$\\
        \midrule
        \multicolumn{9}{c}{\textbf{\textit{FramePack}}} \\
        \hline
        Original                     & 64.99 & 68.55 & 58.14 & 20.83 & 21.08 & 20.43 & 20.05 & 63.94 \\
        Scalar scaling $\gamma=1.25$ & 66.97 & 71.91 & 59.52 & 21.00 & 21.74 & 20.78 &  28.02 & 63.67 \\
        Scalar scaling $\gamma=1.35$ & 67.15 & 73.44 & 59.86 & 21.38 & 22.03 & 21.05 & 28.28 & 63.41 \\
        CAS (adaptive)               & 66.79 & 72.37 & 59.01 & -- & -- & -- & 26.79 & 63.77 \\
        \hline
        \multicolumn{9}{c}{\textbf{\textit{FramePack F1}}} \\
        \hline
        Original                     & 64.45 & 67.79 & 58.50  & 21.06 & 19.91 & 20.61 & 24.42 & 63.10 \\
        Scalar scaling $\gamma=1.25$ & 68.04 & 70.21 & 60.12 & 21.75 & 20.57 & 20.92 & 32.68 & 62.12 \\
        Scalar scaling $\gamma=1.35$ & 70.02 & 71.45 & 61.06 & 21.78 & 21.04 & 20.99 & 33.16 & 62.11 \\
        \bottomrule
    \end{tabular}
    }
    \caption{Ablation on the scaling coefficient. \emph{CAS (adaptive)} is the per-token Conflict-Aware Scaling variant (Appendix~\ref{subsec:cas}), which sets $\gamma$ automatically from the image/text attention conflict; ViCLIP scores for this variant are not measured (``--'').}
    \label{tab: Ablation-scalar coefficient}
\end{table}

\subsection{Conflict-Aware Adaptive Scaling (CAS)}
\label{subsec:cas}

The fixed scaling factor $\gamma$ applies the same strength to every token. A natural question is whether $\gamma$ can instead be set automatically per token, concentrating modulation only where the image and text conditions actually compete. To this end, we study a fully automated, per-token adaptive variant, termed \emph{Conflict-Aware Scaling} (CAS), which requires no manual tuning:
\begin{equation}
C = \frac{\max(A_{\text{img}} - A_{\text{text}},\,0)}{A_{\text{img}} + A_{\text{text}} + \epsilon}\cdot H,
\qquad
\gamma = 1 + C \quad (0 \le C \le 0.35),
\end{equation}
where $A_{\text{img}}$ and $A_{\text{text}}$ denote the attention mass placed on image and text tokens, respectively, and $H$ is the attention entropy. Intuitively, $C$ is large only when attention is both dominated by the image condition (large $A_{\text{img}}-A_{\text{text}}$) and diffuse (large $H$)\,---\,precisely the configurations in which the text prompt is at risk of being neglected\,---\,while the clamp keeps $\gamma$ within $[1,1.35]$, recovering the fixed setting as its upper bound.

As reported in Table~\ref{tab: Ablation-scalar coefficient}, CAS approaches fixed scaling on semantic fidelity (e.g.\ Addition $72.37$ vs.\ $73.44$) while better preserving aesthetic quality ($63.77$ vs.\ $63.41$). This confirms that conflict-driven, per-token adaptation is an effective, search-free alternative to a globally tuned coefficient, modulating only the tokens where image--text conflict is highest.

\subsection{Inference Efficiency}

Our method selectively modulates attention only at foreground-sensitive blocks and within a limited interval of denoising steps; it is important to understand the impact on computational cost. 

Let $L$ denote the total number of transformer blocks and $T$ the number of denoising steps. Suppose attention modulation is applied to $L_s \le L$ blocks over $T_s \le T$ steps. Then, the additional attention computation introduced by our scaling mechanism can be approximated as:

\begin{equation}
\Delta \text{FLOPs} \approx \frac{L_s}{L} \cdot \frac{T_s}{T} \cdot \text{FLOPs}_{\text{attn}},
\end{equation}

where $\text{FLOPs}_{\text{attn}}$ denotes the cost of a single attention operation in one block. This expression indicates that, by restricting modulation to a subset of blocks and steps, the computational overhead remains a small fraction of the total generation cost.  

Empirically, as shown in Table~\ref{tab:inference-time}, our method introduces only \textbf{negligible} inference overhead.

\begin{table}[t]
\centering
\small
\setlength{\tabcolsep}{6pt}
\begin{tabular}{lcccc}
\toprule
\multirow{2}{*}{Model} & \multicolumn{2}{c}{Runtime per video (s) $\downarrow$} & \multirow{2}{*}{Overhead (\%) $\downarrow$} \\
\cmidrule(lr){2-3}
& Original & +Scalar & &  \\
\midrule
FramePack         &  129.90 & 130.02 & +0.09 \\
FramePack F1      &  117.05 & 117.08 & +0.03\\
Wan2.1            &  445.66 & 445.71 & +0.01 \\
\bottomrule
\end{tabular}
\caption{\textbf{Inference time comparison.} Our method introduces \textbf{negligible} inference overhead. Overhead is computed as $\frac{\text{Method} - \text{Original}}{\text{Original}}\times100\%$. Experiments are conducted on a single NVIDIA H100 (80 GB). FramePack and FramePack-F1 generate 832$\times$480 videos with 177frames. Wan2.1 generates 800$\times$480 videos with 81 frames.}
\label{tab:inference-time}
\end{table}

\subsection{Results on Other I2V Benchmarks}

To further validate the generalizability of our approach, we conduct experiments on the VBenchI2V benchmark. 
The results, summarized in Tables~\ref{tab: vbench-framepack} - ~\ref{tab: vbench-framepack f1}, show that our method consistently achieves higher average quality scores compared to the baselines. 
While the overall I2V score remains comparable to the baseline methods. 

More specifically, when the scale coefficient is set below 1, all metrics except \textit{Dynamic Degree} improve over the baseline. 
In contrast, when the coefficient is greater than 1, the \textit{Dynamic Degree} metric increases significantly, while other indicators remain within a stable range. 
\textbf{This is analogous to the temperature parameter in large language models: by simply adjusting a single scale value, users can flexibly balance between aesthetic quality (smaller scale) and prompt fidelity (larger scale). }

These results highlight the simplicity and effectiveness of our method. Without introducing additional training or complex modules, our approach provides a lightweight method for controlling video generation quality across diverse I2V benchmarks.

\begin{table}[t]
    \centering
    \resizebox{\linewidth}{!}{%
    \begin{tabular}{l|ccccc}
        \toprule
        \multirow{2}{*}{\textbf{Metric}} & \multicolumn{5}{c}{\textbf{FramePack}}\\
        \cmidrule(lr){2-6}
        & Original & $\gamma=0.95$ & $\gamma=1.15$ & $\gamma=1.25$ & $\gamma=1.35$ \\
        \midrule
        \multicolumn{6}{c}{\textbf{\textit{Video-Condition Dimension}}} \\
        \hline
        I2V subject & 98.89 & \textbf{98.93} & 98.80 & 98.74 & 98.70\\
        I2V background & 99.01 & 99.02 & 98.96 & \textbf{99.15} & 99.01\\
        Camera motion & 61.21 & 60.81 & \textbf{61.73} & 61.47 & 60.63\\
        \rowcolor{gray!20} \textbf{Average I2V score} & 86.37 & 86.25 & \textbf{86.50} & 86.45 & 86.11\\
        \hline
        \multicolumn{6}{c}{\textbf{\textit{Video-Quality Dimension}}} \\
        \hline
        Subject consistency & 96.53 & \textbf{96.65} & 96.13 & 95.89 & 95.58\\
        Background consistency & 97.88 & 97.83 & 97.78 & \textbf{98.22} & 97.75\\
        Motion smoothness & 99.53 & \textbf{99.54} & 99.50 & 99.48 & 99.45\\
        Dynamic degree & 28.86 & 26.02 & 32.93 & 35.77 & \textbf{38.61}\\
        Aesthetic quality & 61.71 & 61.62 & \textbf{61.74} & 61.19 & 61.40\\
        Imaging quality & 70.62 & \textbf{70.70} & 70.44 & 70.23 & 70.45\\
        \rowcolor{gray!20} \textbf{Average quality score} & 75.86 & 75.39 & 76.42 & 76.80 & \textbf{77.21}\\
        \bottomrule
    \end{tabular}
    }
    \caption{Ablation about scaling coefficient (Transposed).}
    \label{tab: vbench-framepack}
\end{table}

\begin{table}[t]
    \centering
    \resizebox{\linewidth}{!}{%
    \begin{tabular}{l|ccccc}
        \toprule
        \multirow{2}{*}{\textbf{Metric}} & \multicolumn{5}{c}{\textbf{FramePack F1}}\\
        \cmidrule(lr){2-6}
        & Original & $\gamma=0.95$ & $\gamma=1.15$ & $\gamma=1.25$ & $\gamma=1.35$ \\
        \midrule
        \multicolumn{6}{c}{\textbf{\textit{Video-Condition Dimension}}} \\
        \hline
        I2V subject & 98.88 & \textbf{98.91} & 98.68 & 98.63 & 98.58\\
        I2V background & 99.18 & \textbf{99.19} & 99.12 & 99.08 & 99.06\\
        Camera motion & 49.54 & \textbf{49.93} & 48.75 & 48.23 & 49.80\\
        \rowcolor{gray!20} \textbf{Average I2V score} & 82.53 & \textbf{82.68} & 82.18 & 81.98 & 82.48\\
        \hline
        \multicolumn{6}{c}{\textbf{\textit{Video-Quality Dimension}}} \\
        \hline
        Subject consistency & 94.94 & \textbf{95.16} & 94.33 & 94.03 & 93.95\\
        Background consistency & 97.40 & \textbf{97.46} & 97.20 & 97.10 & 96.98\\
        Motion smoothness & 99.40 & \textbf{99.42} & 99.36 & 99.33 & 99.31\\
        Dynamic degree & 33.33 & 30.89 & 40.65 & 42.68 & \textbf{43.90} \\
        Aesthetic quality & 61.25 & \textbf{61.29} & 61.10 & 61.06 & 60.97\\
        Imaging quality & 70.28 & \textbf{70.31} & 70.04 & 70.03 & 69.90\\
        \rowcolor{gray!20} \textbf{Average quality score} & 76.10 & 75.76 & 77.11 & 77.37 & \textbf{77.50}\\
        \bottomrule
    \end{tabular}
    }
    \caption{Ablation about scaling coefficient (Transposed).}
    \label{tab: vbench-framepack f1}
\end{table}

\begin{table*}[t!]
    \centering
    \begin{tabular}{l|ccc|ccc}
        \toprule
        Method & \multicolumn{3}{c|}{Semantic Fidelity} & \multicolumn{3}{c}{Aesthetic Quality} \\
        \cmidrule(lr){2-4} \cmidrule(lr){5-7}
         & Addition & Deletion & Modification & Addition & Deletion & Modification \\
        \midrule
        Framepack & 3.05 & 3.20 & 3.16 & 5.70 & 5.60 & 5.65 \\
        Framepack + Ours  & 5.72 & 5.80 & 5.82 & 5.63 & 5.56 & 5.63 \\
        \bottomrule
    \end{tabular}
    \caption{Human ratings (1–7 scale) for each semantic change type. Our metrics correlate well with human judgment across addition, deletion, and modification.}
    \label{tab:human_vs_omit_triplet}
    \vspace{-10pt}
\end{table*}

{ \subsection{Validating Semantic Fidelity Metrics with Human Evaluation}

To validate the effectiveness of our metrics, we conduct a user study on a total of 60 samples, sampling 20 instances per semantic change type (\emph{addition}, \emph{deletion}, \emph{modification}) with 5 people.  

\paragraph{Setup.}  
For each sample, we form a triplet: the original prompt and image, the video generated by a baseline model, and the video generated using our method. Human annotators rated each video along two dimensions: \textit{semantic fidelity} and \textit{aesthetic quality}, using a 1–7 Likert scale.

\paragraph{Results.}  
Table~\ref{tab:human_vs_omit_triplet} summarizes the average human scores compared with our OmitI2V metrics. We observe that the human ratings consistently align with the metric trends: videos generated with our method achieve higher semantic fidelity while maintaining comparable aesthetic quality. 

\subsection{Analysis of the VQA-based Semantic Evaluator on OmitI2V}
\label{sec:appendix-vqa-analysis}

Our main semantic fidelity metric on OmitI2V is derived from a VQA model (Qwen2.5-VL-32B) answering yes/no questions about whether the requested edit has been correctly executed. Since this introduces a potential source of bias, we explicitly quantify its reliability and inspect its typical failure modes.

\paragraph{Quantitative error analysis.}
We manually annotated OmitI2V samples generated by FramePack V1 and computed False Positive (FP) and False Negative (FN) statistics for each edit type. Table~\ref{tab:vqa-error-stats} summarizes the error rates:

\begin{table}[h]
    \centering
    \renewcommand{\arraystretch}{1.05}
    \begin{tabular}{lcccc}
        \toprule
        \textbf{Main category} &  \textbf{FP rate} & \textbf{FN rate} & \textbf{Overall error} \\
        \midrule
        Addition       & 0.78\% & 1.94\% & 2.71\% \\
        Deletion       & 0.92\% & 3.15\% & 4.07\% \\
        Modification   & 0.63\% & 2.76\% & 3.38\% \\
        \midrule
        All           & 0.77\% & 2.61\% & 3.38\% \\
        \bottomrule
    \end{tabular}
    \caption{{
Error statistics of the Qwen2.5-VL-32B evaluator on OmitI2V (FramePack V1). We report the FP rate, the FN rate, and overall error for each edit type.}}
    \label{tab:vqa-error-stats}
\end{table}
The overall error remains around 3--4\% across all three edit types, indicating that Qwen2.5-VL-32B is generally reliable as an automatic evaluator on this benchmark.

\paragraph{Observed systematic tendencies.}
When inspecting the incorrect cases, we observe two mild but interpretable tendencies:

\begin{itemize}
    \item \textbf{False negatives on small or partially occluded objects (conservative behavior).}
    In some \emph{addition} and \emph{deletion} clips, the evaluator answers ``no'' to object-presence questions even though the target object is present but small, partially occluded, or overshadowed by a larger foreground object. A typical pattern is:
    \begin{quote}
        \emph{Question:} ``Is a cat visible in the video?''\\
        \emph{Ground truth:} Yes\\
        \emph{Model answer:} No, the video shows a bear walking through a valley, not a cat.
    \end{quote}
    Here, the cat is indeed visible, but the evaluator attends mainly to the dominant animal and misses the smaller one, leading to a conservative negative prediction.

    \item \textbf{Over-endorsement of the prompt effect (slight positive bias).}
    In a few \emph{modification} clips, the evaluator correctly detects that some visual change occurs, but overstates the strength of the edit. For example:
    \begin{quote}
        \emph{Question:} ``Does the instructor fade out of view while still holding the beaker?''\\
        \emph{Ground truth:} No\\
        \emph{Model answer (abridged):} Yes, the instructor gradually fades out of view while still holding the beaker, indicating that their presence is being removed from the scene.
    \end{quote}
    Our frame-level inspection shows only mild transparency/compositing changes rather than a full fade-out. In such cases, the evaluator captures a real change but hallucinates a stronger, cleaner effect than what is actually rendered.
\end{itemize}

The identified failure cases mostly involve borderline or subtle situations (tiny objects, very mild appearance changes), whereas the majority of OmitI2V edits are clear semantic operations (adding, removing, or modifying an object), for which the evaluator behaves consistently.}

\section{Qualitative Visualization of Evaluation Results}

To further demonstrate the effectiveness of our method, we provide qualitative visualizations comparing the original videos, baseline methods (\textit{Framepack}, \textit{Framepack F1}, \textit{Wan2.1}), and our approach. 
These comparisons highlight improvements in both semantic consistency and visual quality, showing that our method produces more faithful renderings with better alignment to the input prompts. 
Representative examples are presented in Figure~\ref{fig: Framepack-1} to Figure~\ref{fig: wan-2}.

\begin{figure*}[t!]
    \centering
    \includegraphics[width=\textwidth]{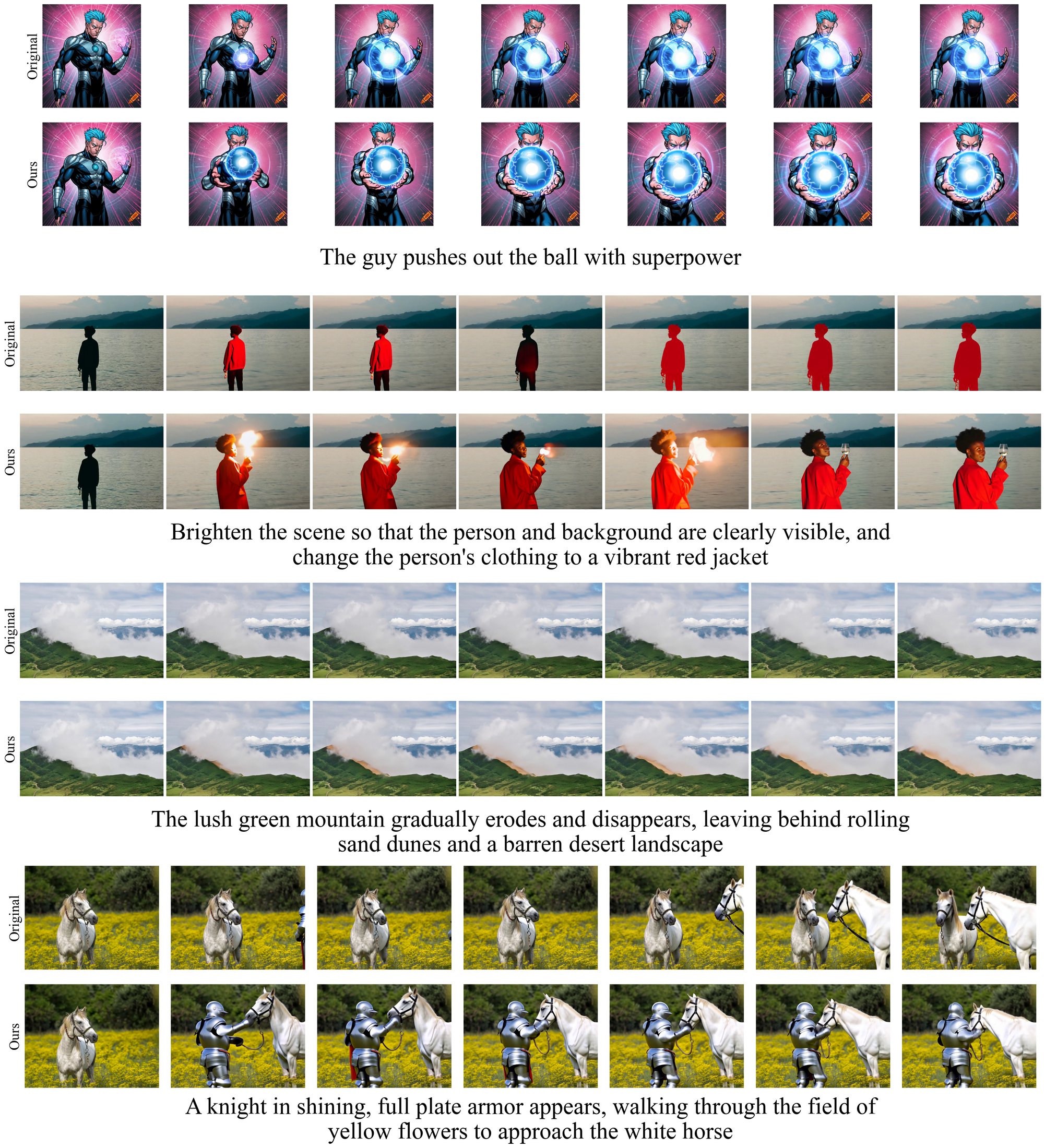} 
    \caption{Example comparison of our method and \textit{Framepack}.}
    \label{fig: Framepack-1}
\end{figure*}

\begin{figure*}[t!]
    \centering
    \includegraphics[width=\textwidth]{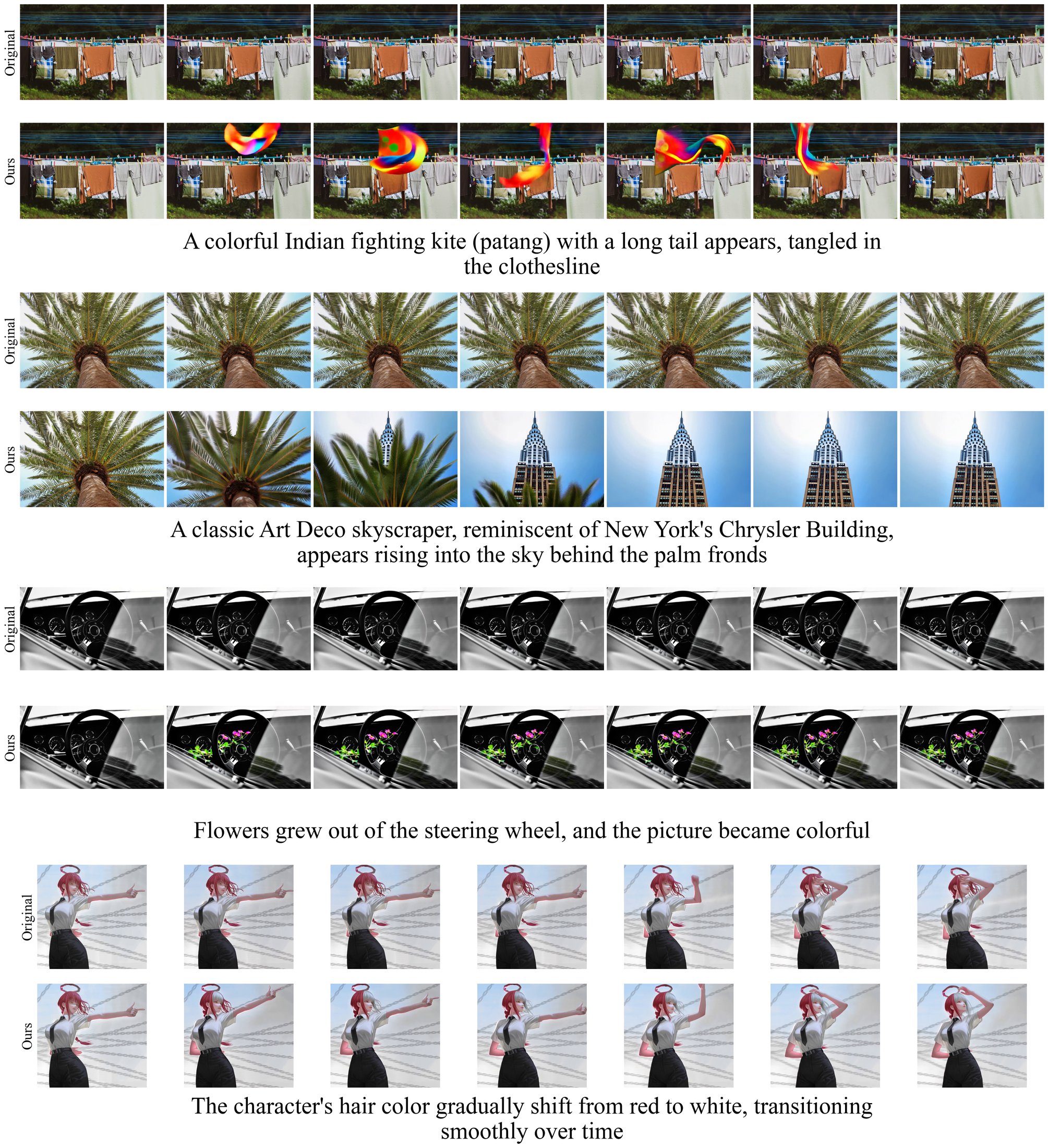} 
    \caption{Example comparison of our method and \textit{Framepack}.}
    \label{fig: Framepack-2}
\end{figure*}

\begin{figure*}[t!]
    \centering
    \includegraphics[width=\textwidth]{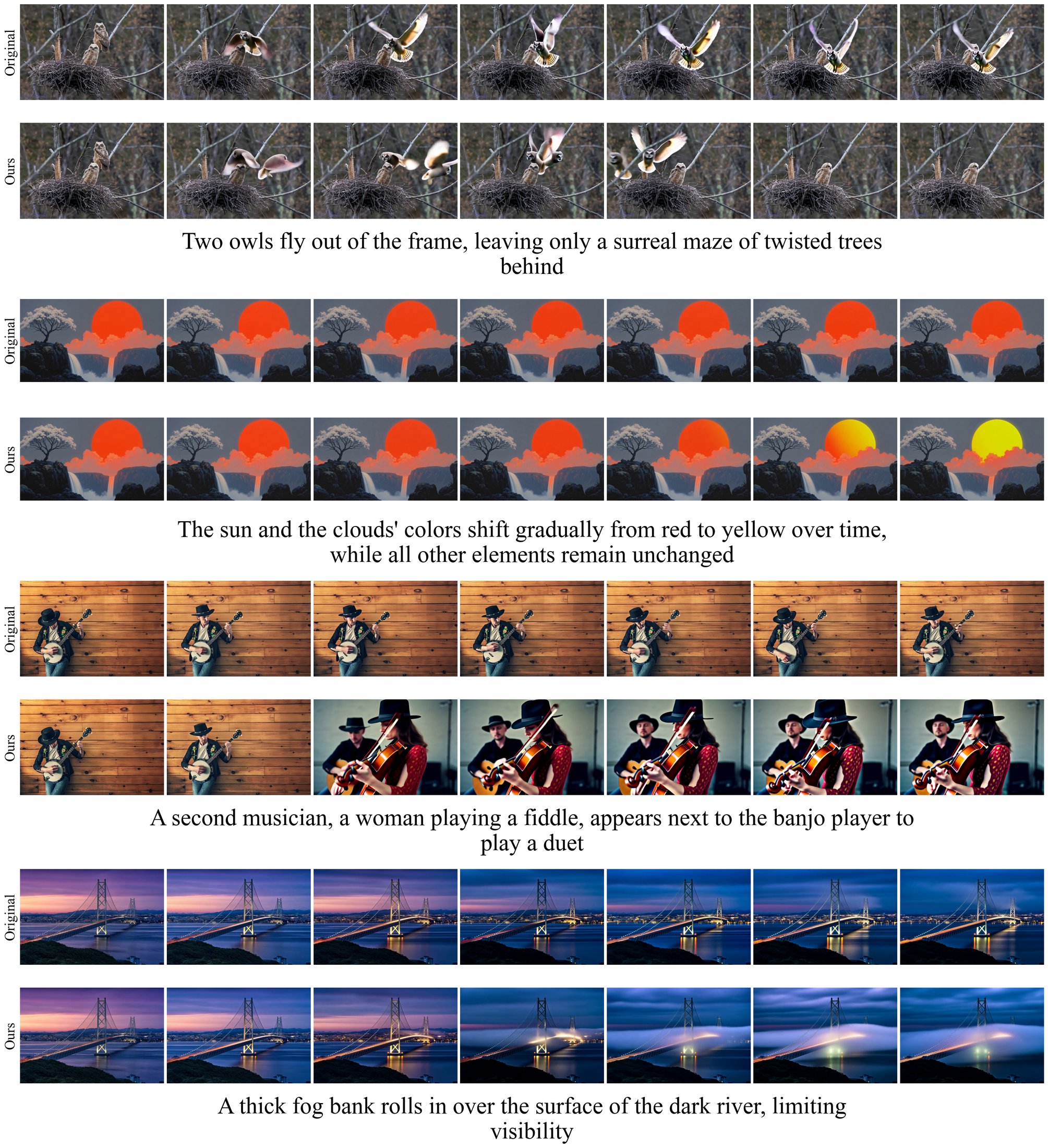} 
    \caption{Example comparison of our method and \textit{Framepack F1}.}
    \label{fig: Framepack F1-1}
\end{figure*}
\begin{figure*}[t!]
    \centering
    \includegraphics[width=\textwidth]{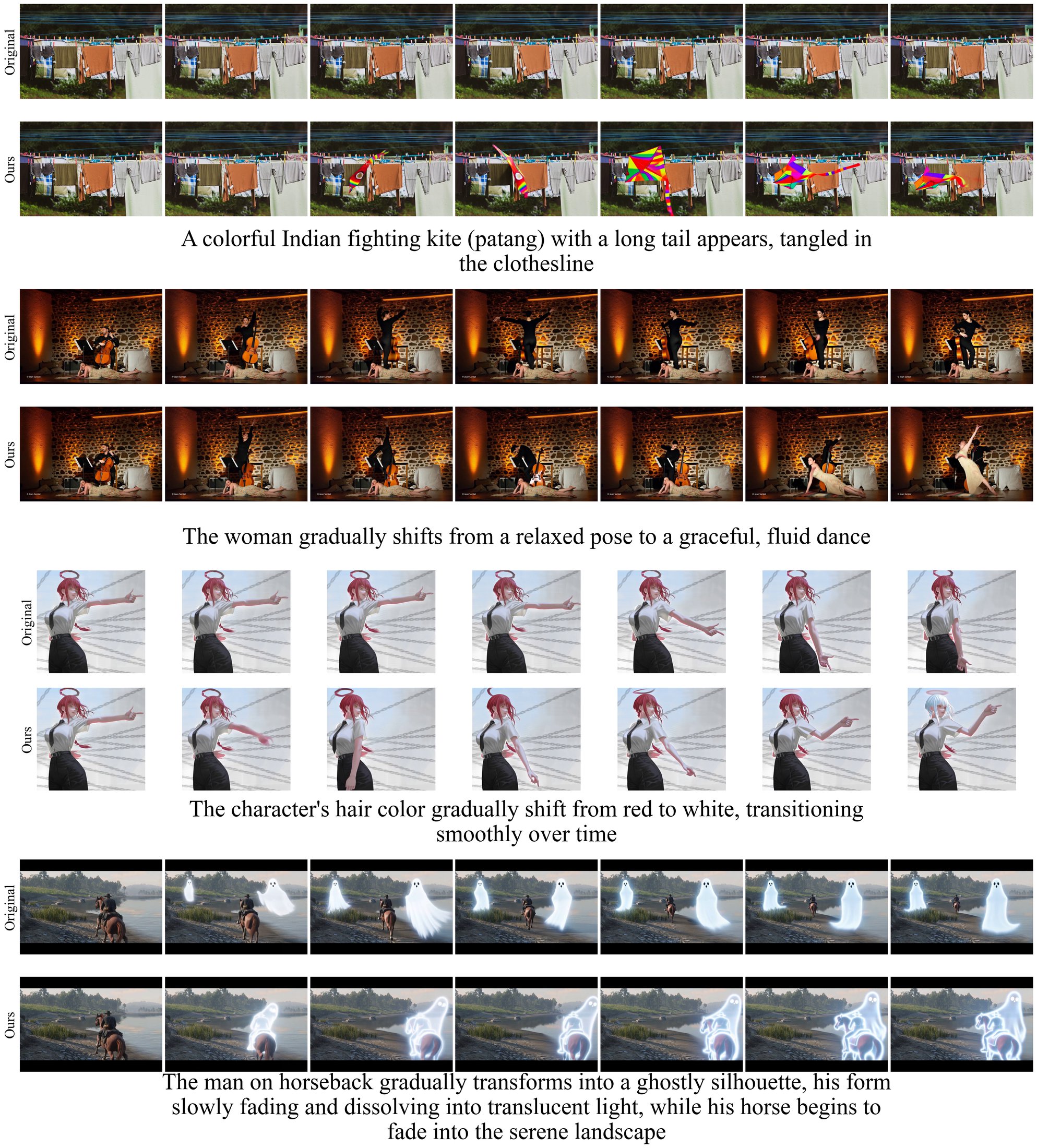} 
    \caption{Example comparison of our method and \textit{Framepack F1}.}
    \label{fig: Framepack F1-2}
\end{figure*}

\begin{figure*}[t!]
    \centering
    \includegraphics[width=\textwidth]{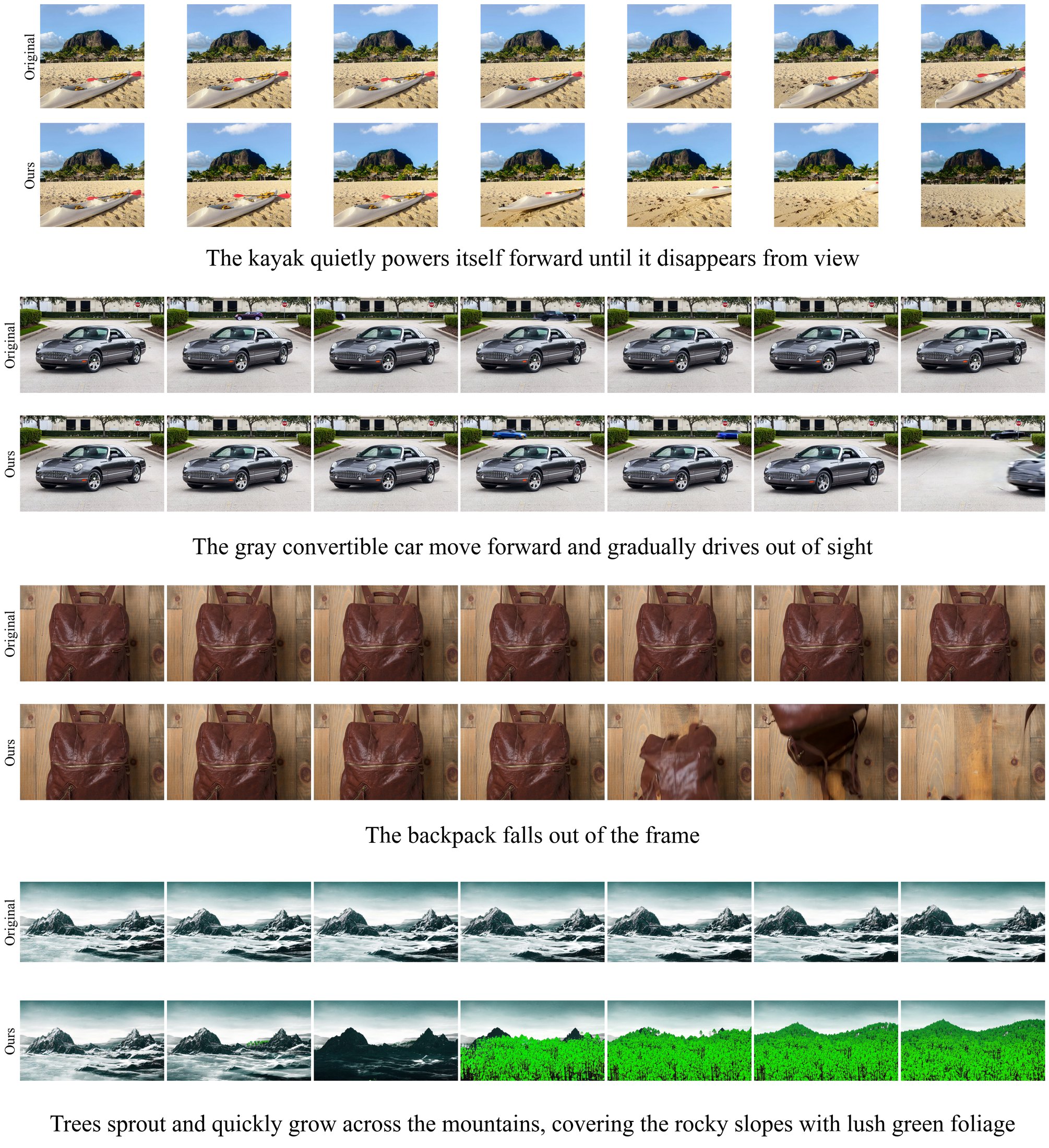} 
    \caption{Example comparison of our method and \textit{Wan2.1}.}
    \label{fig: wan-1}
\end{figure*}

\begin{figure*}[t!]
    \centering
    \includegraphics[width=\textwidth]{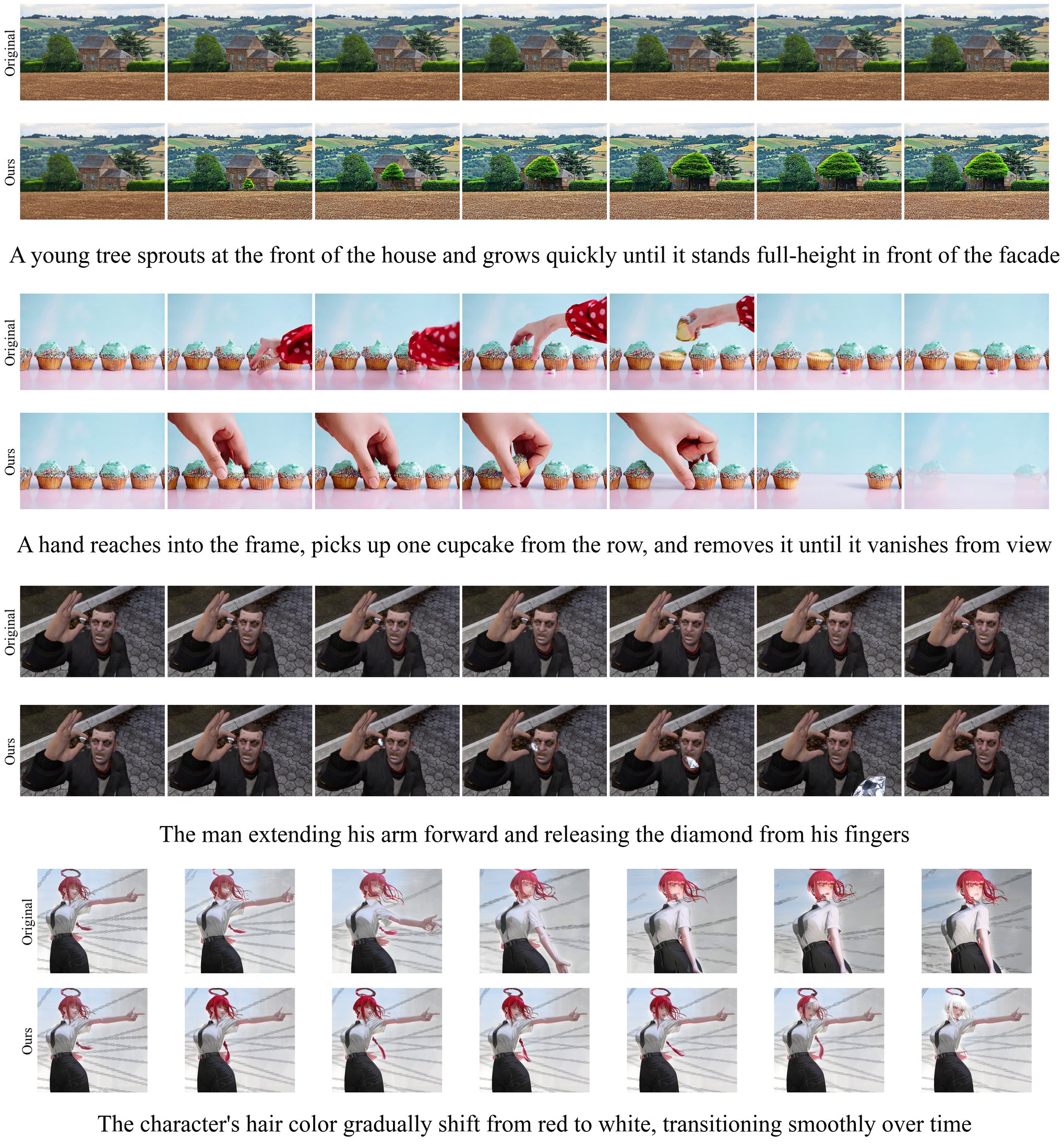} 
    \caption{Example comparison of our method and \textit{Wan2.1}.}
    \label{fig: wan-2}
\end{figure*}

\begin{figure*}[t!]
    \centering
    \includegraphics[width=0.8\textwidth]{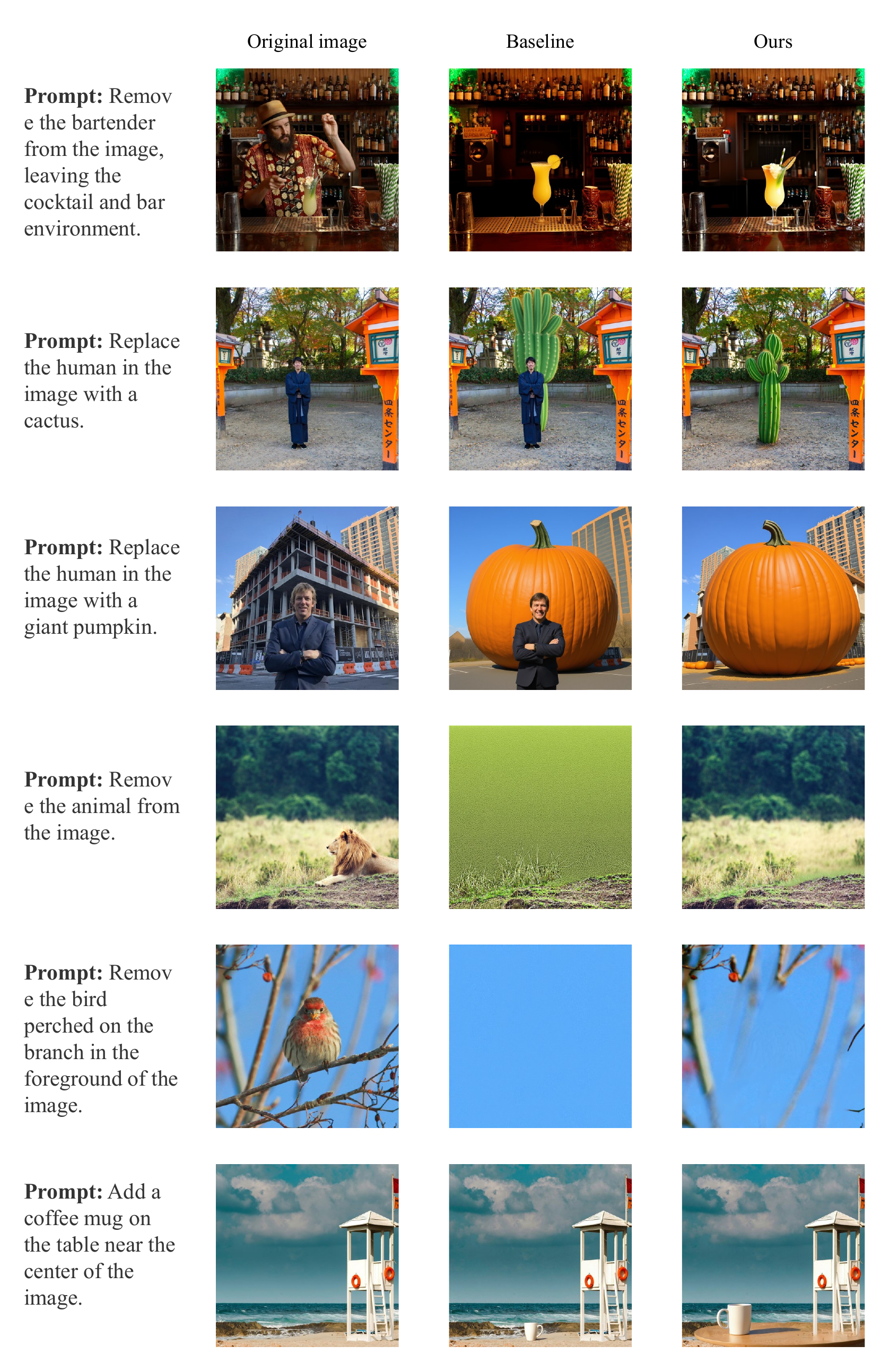} 
    \caption{Example comparison of our method and baseline on Imgedit benchmark.}
    \label{fig: Imgedit}
\end{figure*}

\begin{figure*}[t!]
    \centering
    \includegraphics[width=0.7\textwidth]{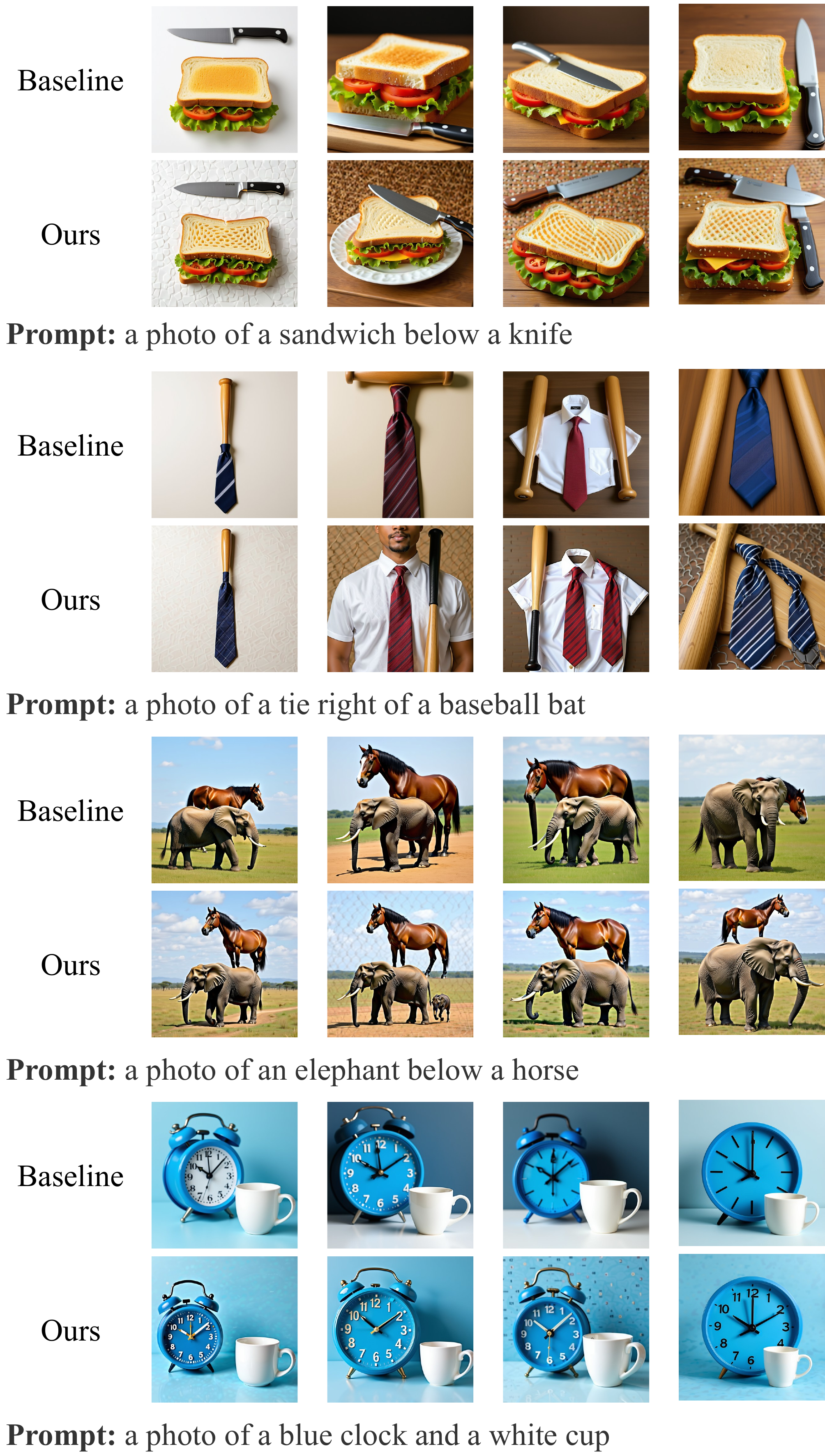} 
    \caption{Example comparison of our method and baseline on Geneval benchmark.}
    \label{fig: geneval}
\end{figure*}

\begin{figure*}[t!]
    \centering
    \includegraphics[width=0.7\textwidth]{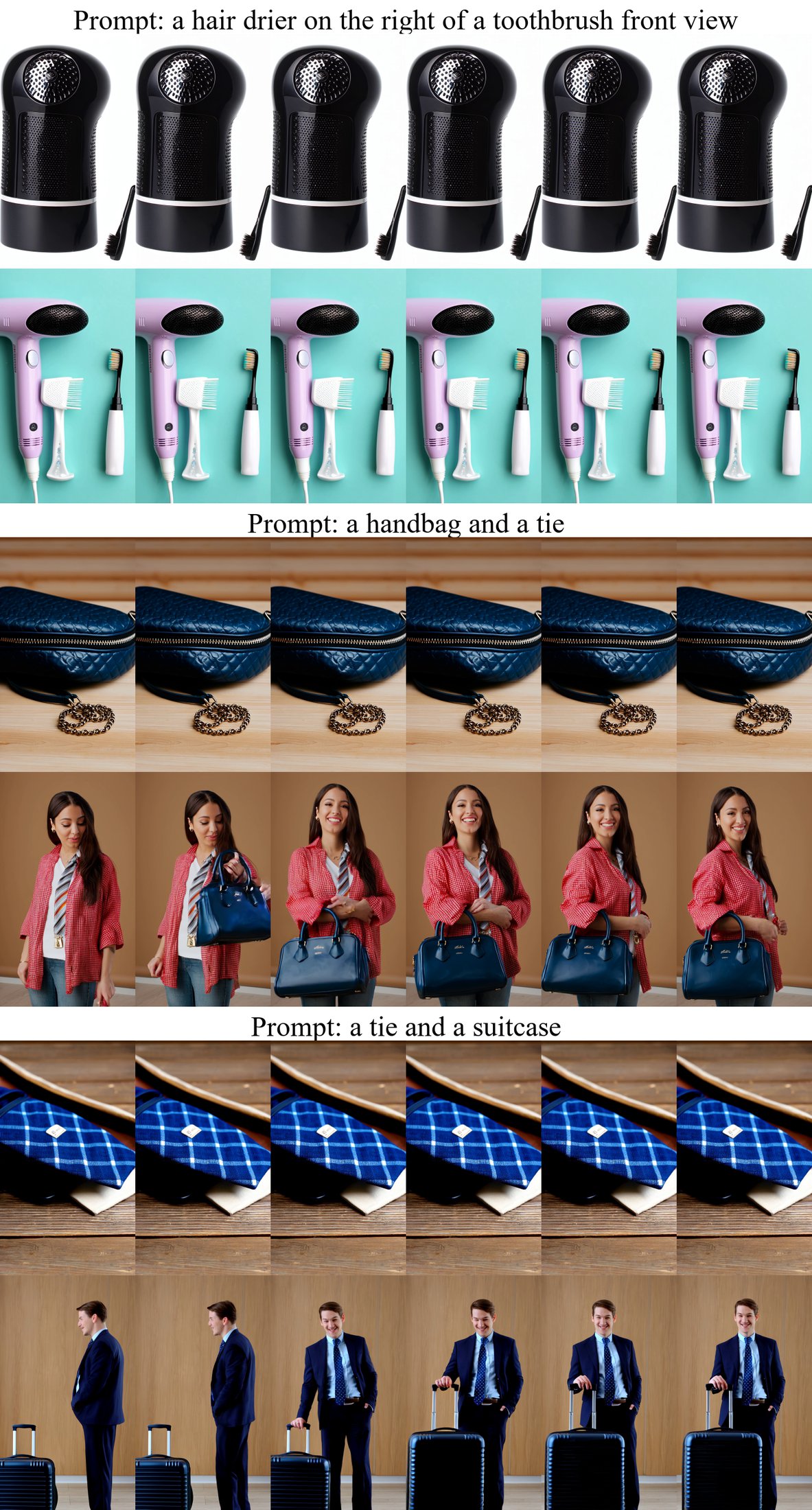} 
    \caption{Example comparison of our method and Wan2.1-T2V-1.3B.}
    \label{fig: t2i_comparison_1}
\end{figure*}

\begin{figure*}[t!]
    \centering
    \includegraphics[width=0.7\textwidth]{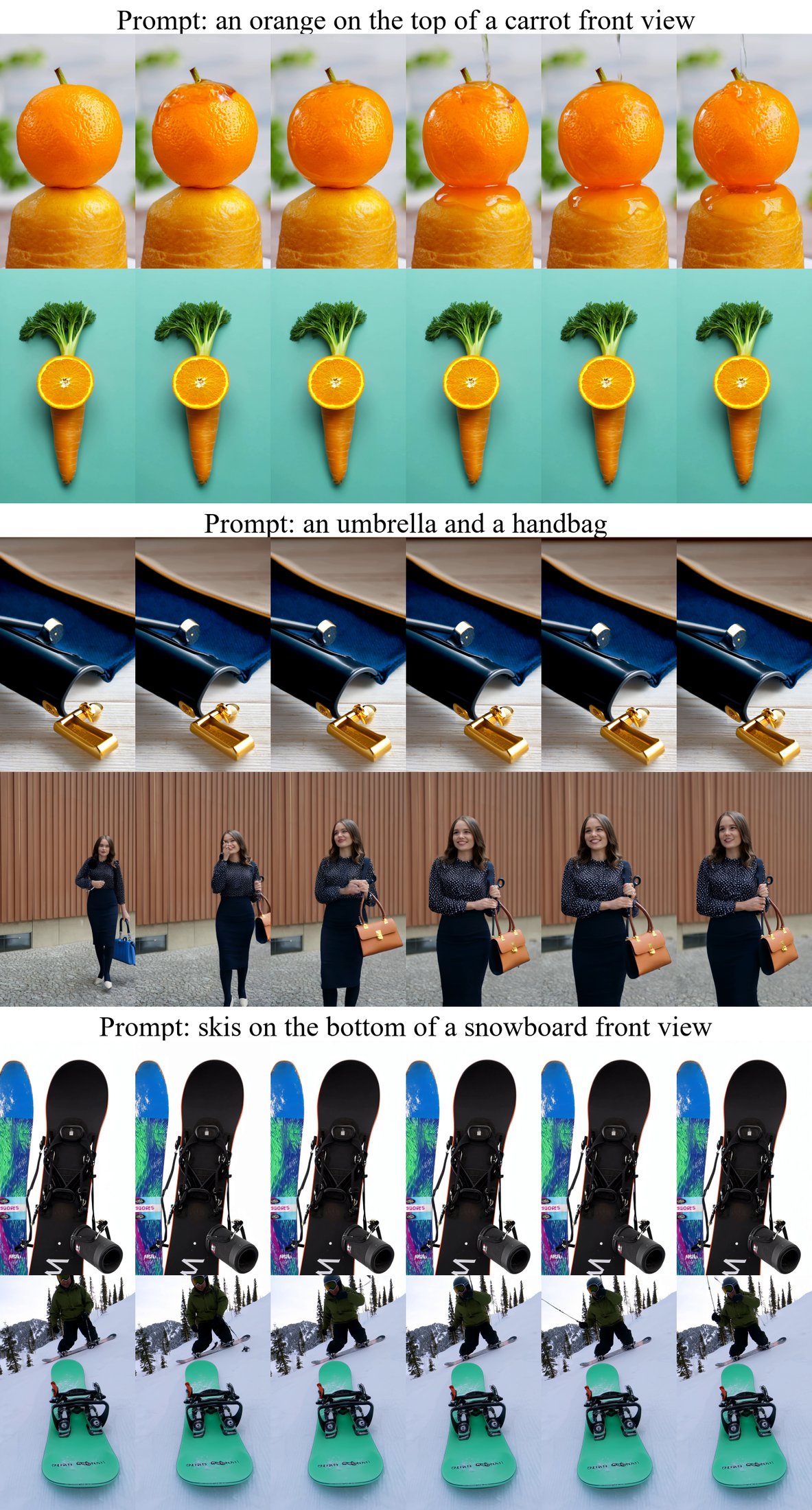} 
    \caption{Example comparison of our method and Wan2.1-T2V-1.3B.}
    \label{fig: t2i_comparison_2}
\end{figure*}

\end{document}